\documentclass{article}

\usepackage[preprint]{neurips_2026}

\usepackage[utf8]{inputenc}
\usepackage[T1]{fontenc}
\usepackage{url}
\usepackage{hyperref}
\usepackage{booktabs}
\usepackage{amsfonts}
\usepackage{amsmath}
\usepackage{amssymb}
\usepackage{amsthm}
\usepackage{mathtools}
\usepackage{algorithm}
\usepackage{algpseudocode}
\usepackage{enumitem}
\usepackage{microtype}
\usepackage{xcolor}
\usepackage{graphicx}
\usepackage{subcaption}
\usepackage{multirow}
\usepackage{makecell}
\usepackage{nicefrac}
\usepackage{placeins}     

\graphicspath{{figures/}}

\newcommand{\xtc}{\textsc{XTC}}
\newcommand{\Vocab}{\mathcal{V}}
\newcommand{\pt}{p_t}
\newcommand{\qt}{q_t}
\newcommand{\Et}{E_t(\tau)}
\newcommand{\Rt}{R_t}
\newcommand{\E}{\mathbb{E}}
\DeclareMathOperator*{\argminprob}{arg\,min}

\newtheorem{proposition}{Proposition}

\title{XTC: Head-Aware Sampling by Excluding Top Choices}

\author{%
  Philipp Emanuel Weidmann\thanks{Equal contribution.}\thanks{Proposed the \xtc{} sampling method and wrote the original reference implementations, which were merged into \texttt{llama.cpp}, \texttt{text-generation-webui}, and other open-source inference engines.} \\
  Independent Researcher \\
  \texttt{pew@worldwidemann.com} \\
  \And
  Allen G. Roush\footnotemark[1]\thanks{Designed and ran all experiments, evaluations, and statistical analysis.} \\
  Thoughtworks \\
  \texttt{allen.roush@thoughtworks.com} \\
  \AND
  Judah Goldfeder \\
  Columbia University \\
  \texttt{jag2396@columbia.edu} \\
  \And
  Sanjay Basu \\
  Oracle \\
  \texttt{sanjay.basu@oracle.com} \\
  \And
  Ravid Shwartz-Ziv \\
  New York University \\
  \texttt{ravid.shwartz.ziv@nyu.edu} \\
}

\begin{document}

\maketitle

\begin{abstract}
Standard decoding rules for autoregressive language models promote diversity by rescaling the full next-token distribution or by truncating its low-probability tail.
These strategies overlook a recurring regime of open-ended generation in which the model already assigns substantial probability to several plausible continuations yet still concentrates too much mass on the most generic choice.
We introduce \xtc{} (E\textbf{x}clude \textbf{T}op \textbf{C}hoices), a lightweight head-aware decoding operator that targets this \emph{head-ambiguity} regime directly.
Given a next-token distribution, \xtc{} identifies the set of tokens exceeding an absolute plausibility threshold~$\tau$.
When two or more such tokens exist, it removes the dominant eligible choices with probability~$\rho$ and retains only the weakest plausible alternative before renormalizing.
A comprehensive evaluation spanning 60 experiments across three primary model families (Gemma~3 27B q4, Gemma~3 12B q6, DeepSeek R1 14B q6), extended with a scaling validation on Llama 3.3 70B q4, confirms the predicted operating profile.
On creative generation tasks, \xtc{} improves the diversity-repetition Pareto frontier with Distinct-2 gains of 11--15\% (monotone in parameter count from 12B to 70B) and repeat trigram reductions of 27--47\% across the four tested models.
When composed with temperature scaling, total improvements reach 38\% (Distinct-2) and 71\% (repeat trigram reduction) over baseline.
A blinded Amazon Mechanical Turk study with 150 Master raters confirms that these distributional shifts translate to a 62.3\% creativity preference for \xtc{} ($p<10^{-4}$) without sacrificing fluency, and a cross-vendor GPT-4o control judge replicates the Anthropic-judge signal on every directional measure.
On instruction-following (IFEval, Llama~3.3 70B q4), \xtc{} preserves prompt-level strict accuracy within 1.7 percentage points of baseline at parameters that recover most of the diversity gain. A temperature setting matched on Distinct-2 collapses IFEval by 8.8 points at the same Distinct-2 target.
The effect is additive with temperature and repetition penalties, robust across quantization levels and model families, and consistent across all twelve tested prompt genres. XTC has been adopted by popular open-source LLM inference
frameworks, including \texttt{llama.cpp}, ExLlamaV2, and
text-generation-webui, highlighting its impact on practical text
generation.
\end{abstract}

\section{Introduction}

Autoregressive language models \citep{bengio2003neural, vaswani2017attention, brown2020language} require a decoding strategy that balances faithfulness to the learned distribution against the goals of the downstream task \citep{wiher2022decoding, celikyilmaz2020evaluation}.
Conservative sampling produces generic, repetitive text \citep{holtzman2020curious}.
Aggressive sampling admits implausible low-probability tokens or perturbs steps where the model is already confident \citep{hewitt2022truncation, renze2024effect}.
A large body of work has therefore sought decoding rules that broaden the space of plausible continuations without sacrificing coherence \citep{fan2018hierarchical, holtzman2020curious, meister2023typical, basu2021mirostat}.

The most widely adopted samplers truncate the low-probability tail or rescale the full distribution.
Temperature changes the relative sharpness of all logits \citep{ackley1985learning}.
Top-$k$ and top-$p$ remove weak tokens by rank or cumulative mass \citep{fan2018hierarchical, holtzman2020curious}.
Typical and truncation-based methods target atypical or improbable candidates \citep{meister2023typical, hewitt2022truncation}.
Min-$p$ thresholds relative to the maximum token probability, adapting the cutoff to the peakedness of each distribution \citep{ICLR2025_afa5f124}.
These approaches share a structural assumption: undesirable randomness originates primarily in the tail of the distribution or in its overall entropy.

Open-ended generation frequently exhibits a distinct failure mode.
The model assigns substantial probability to several individually plausible continuations while overweighting the safest or most conventional one.
In this \emph{head-ambiguity} regime, tail truncation has little effect because the relevant alternatives already reside in the head, and global flattening is too coarse because it intervenes at every step regardless of whether the model is genuinely uncertain among strong options.
Recent studies confirm that LM-assisted writing reduces population-level content diversity even when individual outputs appear varied \citep{padmakumar2024contentdiversity, doshi2024generative, anderson2024homogenization}, suggesting that the head-dominance problem has practical consequences beyond benchmark scores.

We introduce \xtc{} (\textbf{E}x\textbf{c}lude \textbf{T}op \textbf{C}hoices), a decoding operator for the head-ambiguity regime.
Given a next-token distribution, \xtc{} identifies the set of tokens exceeding an absolute plausibility threshold~$\tau$.
When this set contains at least two tokens, it removes the dominant eligible choices with probability~$\rho$, retains only the weakest token that still clears the plausibility floor, and renormalizes.
The operator has four defining properties:
\begin{itemize}[leftmargin=1.25em,nosep]
\item It is \emph{sparse in time}: inactive whenever the head is effectively unambiguous.
\item It is \emph{head-aware}: targets individually probable alternatives in the head.
\item It is \emph{compositional}: inserts into any existing sampler stack as a small distribution transformation.
\item It is \emph{conditionally beneficial}: helps most when head multiplicity is present and is nearly inert otherwise.
\end{itemize}

Our contributions are as follows.
\textbf{(1)}~We formalize \xtc{} as an operator on a token distribution and provide a complete, implementation-ready algorithm (Section~\ref{sec:method}).
\textbf{(2)}~We derive structural properties including a KL-projection interpretation that distinguishes \xtc{} from both tail-truncation and global-entropy methods (Appendix~\ref{app:properties}).
\textbf{(3)}~We present a comprehensive evaluation: 60 experiments across three primary model families (Gemma~3 27B q4, Gemma~3 12B q6, DeepSeek R1 Qwen 14B q6) plus a scaling validation on Llama 3.3 70B q4, covering creative diversity, long-form degeneration, code exactness, design ablations, sampler composition, parameter interaction, and cross-model robustness (Section~\ref{sec:experiments}).
\textbf{(4)}~We show that \xtc{} improves the diversity-repetition Pareto frontier on creative tasks (Distinct-2 gains of 11--15\% across four tested models, monotone in parameter count from 12B to 70B) while preserving quality on exactness-sensitive tasks within identified safe operating boundaries.
\textbf{(5)}~We demonstrate that \xtc{} composes additively with temperature and repetition penalties, achieving combined Distinct-2 improvements of up to 38\% with repeat trigram reductions of 71\% over baseline.
\textbf{(6)}~We close the loop on the evaluation hierarchy with a blinded AMT human study, a Claude Opus~4.7 LLM judge cross-checked by an OpenAI \texttt{gpt-4o} control, and an IFEval characterization that quantifies the instruction-following cost of head-aware diversity (Section~\ref{sec:exp_quality}, Section~\ref{sec:exp_ifeval}).

\section{Background and related work}
\label{sec:related}

The tension between faithfulness and diversity in autoregressive decoding has generated a rich literature \citep{gatt2018survey, wiher2022decoding, li2022pretrained}.
Top-$k$ sampling \citep{fan2018hierarchical} and nucleus sampling (top-$p$) \citep{holtzman2020curious} truncate the distribution tail by rank or cumulative mass.
Typical decoding removes atypical tokens according to an information-theoretic criterion \citep{meister2023typical}.
Epsilon and eta sampling apply fixed or entropy-adaptive probability cutoffs \citep{hewitt2022truncation}.
Mirostat maintains a target perplexity level through online surprise control \citep{basu2021mirostat}.
Min-$p$ sampling thresholds relative to the maximum probability, adapting truncation to the peakedness of each distribution \citep{ICLR2025_afa5f124}.
Temperature scaling, the oldest diversity control, rescales all logits by a single factor \citep{ackley1985learning, renze2024effect}.
These methods focus on the tail or the global shape of the distribution.
\xtc{} targets the head.

\paragraph{Sequence-level and training-time diversity.}
Diverse beam search injects inter-group diversity penalties into beam decoding \citep{vijayakumar2018diverse}.
Contrastive search \citep{su2022contrastive} and contrastive decoding \citep{li2023contrastive} improve generation by contrasting with degenerate continuations or weaker reference models.
DoLa decodes by contrasting early and late layers \citep{chuang2023dola}, FUDGE steers generation with future discriminators \citep{Yang_2021}, and COLD formulates controlled generation as energy-based sampling \citep{qin2022cold}.
Unlikelihood training discourages repeated tokens at the objective level \citep{welleck2020neural}, controllable generation methods steer with conditional prefixes or control codes \citep{keskar2019ctrl}, and MMI-based decoding promotes diversity through mutual-information objectives \citep{li2016diversity}.
These methods operate at the sequence level, require auxiliary models, or modify training. \xtc{} operates on a single next-token distribution within any sampler stack.

\paragraph{Diversity measurement and LLM-as-judge.}
Evaluating text diversity is itself an active research area \citep{shaib2024diversity, shaib2024templates, zhu2018texygen, papineni2002bleu, pillutla2021mauve, zhang2020bertscore, hashimoto2019unifying}.
We follow the multi-metric philosophy advocated by these works and report five complementary diversity families.
Strong language models are also routinely used to score open-ended text \citep{zheng2023judging, chiang2023closer, li2024leveraging}.
Systematic biases of LLM judges (position, verbosity, same-family preference) have been documented and partially controlled through randomization, length normalization, and cross-vendor calibration \citep{dubois2024length, wang2024pandalm}.
We pre-register seven criteria, score every generation with Claude Opus~4.7 as the primary judge, and cross-validate against an OpenAI \texttt{gpt-4o} control judge (Section~\ref{sec:exp_quality}).

\paragraph{The homogenization problem.}
A growing body of work documents that LM-assisted writing reduces content diversity at a population level, even when individual outputs appear varied \citep{padmakumar2024contentdiversity}.
Similar effects appear in creative writing \citep{doshi2024generative}, ideation \citep{anderson2024homogenization}, and more broadly across expression and reasoning strategies \citep{sourati2025homogenizing}.
RLHF training has been shown to narrow the stylistic and topical range of model outputs \citep{kirk2024understanding, ouyang2022training, bai2022training}, and training data deduplication can partially mitigate convergence in generation \citep{lee2022deduplicating}.
These findings motivate decoding-time interventions that increase diversity within individual generation episodes, the regime \xtc{} targets.

\paragraph{Position of \xtc{}.}
\xtc{} requires no retraining, beams, or auxiliary models.
It acts on a single next-token distribution after any upstream transformations.
The closest conceptual relative is locally typical sampling \citep{meister2023typical}, which removes atypical (low-probability) tokens to keep generation close to the model's expected information content.
\xtc{} inverts this target: it removes the most typical (highest-probability) tokens among those that are individually plausible, promoting a less dominant but still viable continuation.
This inversion of the target set, from tail to head, is the core conceptual distinction.

\section{The \xtc{} decoding rule}
\label{sec:method}

Consider an autoregressive language model producing a next-token distribution $\pt$ over a vocabulary $\Vocab$ \citep{bengio2003neural, radford2019language}.
In practice, $\pt$ can come from any upstream sampler stack that has already applied temperature, penalties, or truncation \citep{wiher2022decoding}, and \xtc{} operates on this distribution.

Given an absolute eligibility threshold $\tau \in (0,1)$ and an intervention probability $\rho \in [0,1]$, \xtc{} identifies the \emph{eligible set} of tokens whose probability individually exceeds~$\tau$.
With fewer than two eligible tokens the distribution passes through unchanged.
With two or more, the operator fires with probability $\rho$, removing all eligible tokens except the \emph{least probable} one and renormalizing the remaining mass.
The retained token is a minimal head alternative: still individually plausible, the least dominant among the eligible options.

\subsection{Algorithm}

\begin{algorithm}[t]
\caption{\xtc{} sampling step. Plain-language reading: ``If the model is genuinely undecided between strong choices, with probability $\rho$ throw away the dominant ones and keep only the underdog.''}
\label{alg:xtc}
\begin{algorithmic}[1]
\Require Next-token distribution $p$, plausibility floor $\tau$, intervention rate $\rho$, optional protected tokens $S$ (e.g.\ EOS, newline)
\State \textbf{Find plausible tokens.} Collect every token whose probability is at least $\tau$ into a set $E$.
\If{$|E| < 2$}
    \State \Return $p$ unchanged \Comment{the head is unambiguous, do nothing}
\EndIf
\State \textbf{Roll the dice.} Flip a biased coin that lands heads with probability $\rho$.
\If{the coin shows tails}
    \State \Return $p$ unchanged \Comment{stochastic skip}
\EndIf
\State \textbf{Pick the underdog.} Among the plausible tokens, take the one with the lowest probability and call it $u$.
\State \textbf{Mark the dominant choices for removal.} Let $R = E \setminus \{u\}$.
\If{any token in $R$ is protected}
    \State \Return $p$ unchanged \Comment{never silence EOS or formatting tokens}
\EndIf
\State \textbf{Remove the dominant choices.} Set $p(v) \gets 0$ for every $v \in R$.
\State \textbf{Renormalize.} Rescale the surviving probabilities so they sum to one.
\State \Return the modified distribution.
\end{algorithmic}
\end{algorithm}

The closed-form mathematical specification (eligible set, removed set, transformed distribution) appears in Appendix~\ref{app:formal}.

\begin{figure}[t]
\centering
\includegraphics[width=\textwidth]{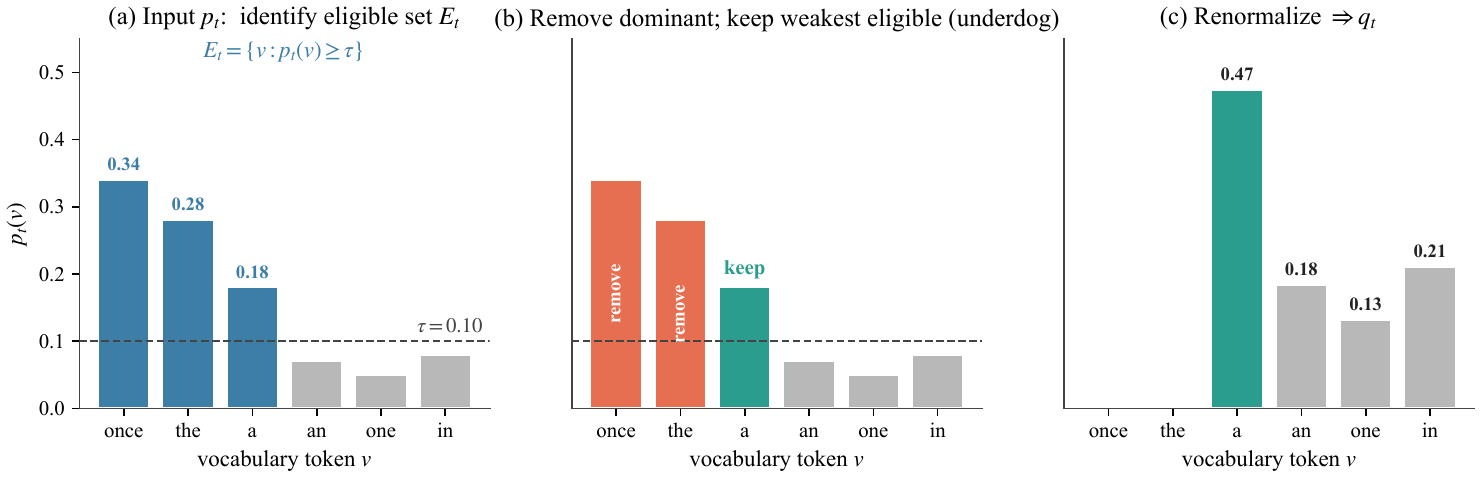}
\caption{The \xtc{} operator at one decoding step. \textbf{(a)} Given the next-token distribution $\pt$ and an absolute plausibility floor~$\tau$, the eligible set $E_t = \{v : \pt(v) \ge \tau\}$ is highlighted in blue. The toy distribution shows the head-ambiguity regime: three tokens are individually plausible, with substantial mass concentrated on the most generic continuation. \textbf{(b)} The two dominant eligible tokens are removed and the weakest plausible alternative is kept. The non-eligible tail is untouched. \textbf{(c)} Renormalization redistributes the removed mass over the surviving support, yielding the transformed distribution $\qt$. The retained underdog (green) now carries the largest probability among the surviving options.}
\label{fig:method_schematic}
\end{figure}

The design rationale and parameter discussion appear in Appendix~\ref{app:design_rationale}.

\section{Experiments}
\label{sec:experiments}

Our evaluation spans 60 experiments across three primary model families (Gemma~3 27B q4, Gemma~3 12B q6, DeepSeek R1 Qwen 14B q6) plus a scaling validation on Llama 3.3 70B q4, covering three independent architectures and parameter counts from 12B to 70B.
Cross-model generalization (Appendix~\ref{app:crossmodel}) and design ablation (Appendix~\ref{app:design_ablation}) appear with their figures and tables in the appendix.

\subsection{Setup}
\label{sec:setup}

\paragraph{Models.}
Our primary model is Gemma~3 27B instruction-tuned at \texttt{q4\_k\_m} quantization \citep{gemma2024}, served through the \texttt{text-generation-webui} API.
We additionally evaluate on Gemma~3 12B at \texttt{q6\_K} (smaller scale, higher quantization fidelity), DeepSeek R1 Qwen 14B at \texttt{q6\_K} \citep{deepseek2025r1} (a reasoning-distilled model on a Qwen base), and Llama 3.3 70B Instruct at \texttt{q4\_k\_m} (scaling validation to a third architecture family at larger parameter count).

\paragraph{Prompts and baselines.}
The creative evaluation suite contains 24 prompts spanning 12 genre tags.
Separate suites cover code exactness (executable Python with unit tests), quality retention (JSON extraction, constrained rewriting), and long-form repetition.
Each experiment includes a baseline at temperature 1.0.
Comparators include temperature scaling ($T \in \{1.1, 1.3\}$), top-$p$ (0.95), typical-$p$ (0.95), and repetition penalty (1.05), calibrated on a held-out prompt suite \citep{hewitt2022truncation}.
\xtc{} conditions span conservative ($\rho{=}0.05$, $\tau{=}0.10$) through aggressive ($\rho{=}1.0$, $\tau{=}0.05$).

\paragraph{Metrics.}
Following the multi-metric philosophy of \citet{shaib2024diversity} we report five complementary diversity families.
\textbf{Distinct-$n$} measures the fraction of unique $n$-grams in a text \citep{li2016diversity}.
\textbf{Self-BLEU-4} computes the average BLEU-4 score \citep{papineni2002bleu} between pairs of generations from the same prompt \citep{zhu2018texygen}.
\textbf{Repeat trigram rate} measures the fraction of trigrams appearing more than once within a single generation.
\textbf{Embedding cosine distance} is the average pairwise cosine distance between sentence embeddings of generations from the same prompt \citep{zhang2020bertscore}.
We additionally report compression ratios (gzip, xz), homogenization scores (BLEU, ROUGE-L \citep{lin2004rouge}), template rate \citep{shaib2024templates}, chamfer distance, and self-repetition.
For exactness tasks we report eval pass rate, test pass rate, and parse validity.

\paragraph{Statistical protocol.}
All confidence intervals are 95\% bootstrap CIs (1500--2500 resampling trials).
Significance uses paired permutation tests (4000--6000 sign-flip trials, two-sided).

\subsection{Open-ended diversity}
\label{sec:exp_creative}

\begin{figure}[t]
\centering
\includegraphics[width=\textwidth]{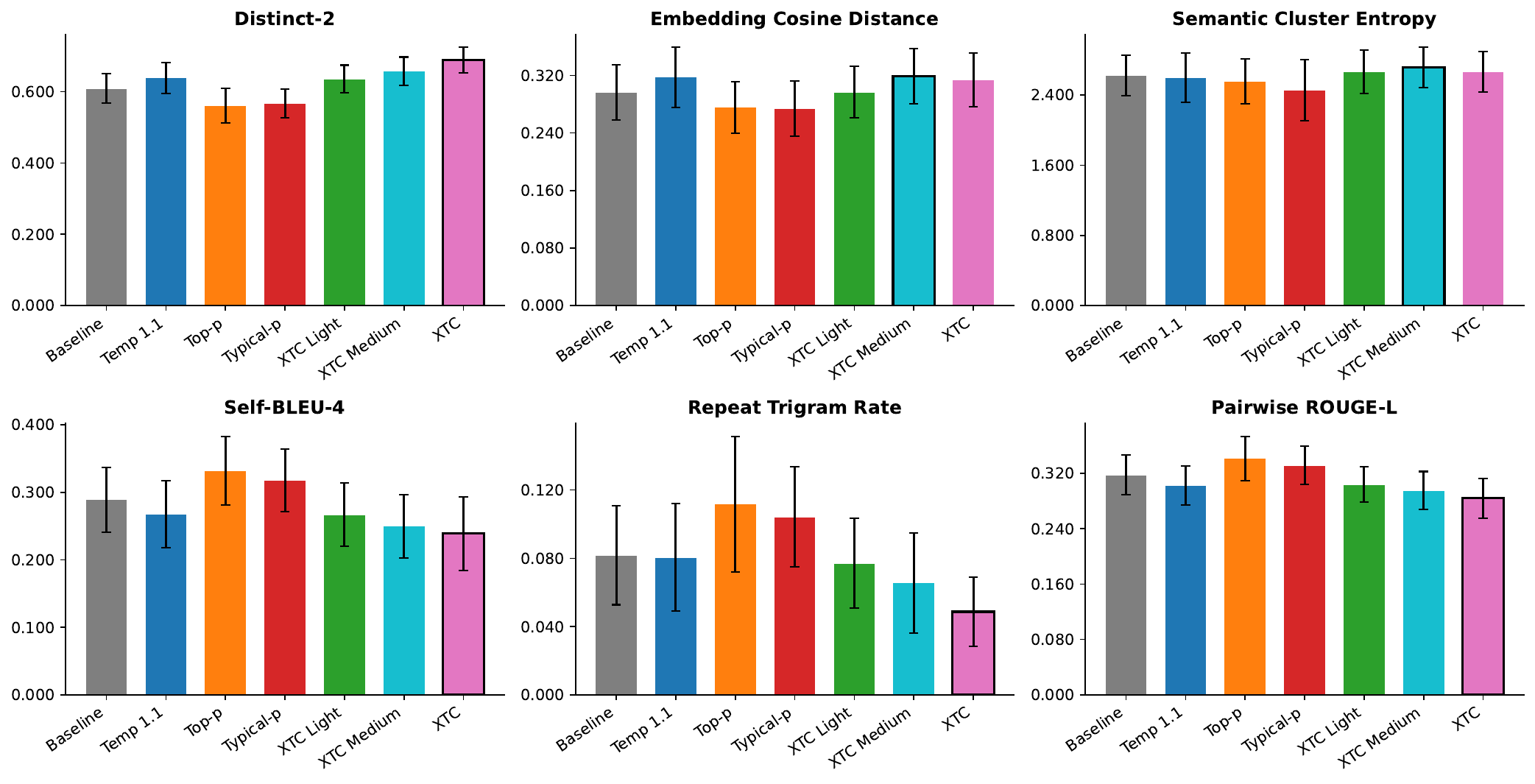}
\caption{Main creative evaluation across six diversity and repetition metrics (24 prompts, 8 samples each, Gemma~3 27B q4). Error bars show 95\% bootstrap CIs. \xtc{} ($\rho{=}1.0$, $\tau{=}0.1$) achieves the best score on \emph{every} metric, outperforming the baseline as well as temperature, top-$p$, and typical-$p$ on all diversity measures while simultaneously reducing repetition. Top-$p$ and typical-$p$ at $0.95$ \emph{degrade} diversity relative to baseline on Distinct-2, confirming that tail truncation can worsen head-dominance problems. Distinct-2, Embedding Cosine Distance, and Semantic Cluster Entropy: higher is better. Self-BLEU-4, Repeat Trigram Rate, Pairwise ROUGE-L: lower is better.}
\label{fig:creative_bars}
\end{figure}

Figure~\ref{fig:creative_bars} presents the main creative evaluation.
\xtc{} (at $\rho{=}1.0$, $\tau{=}0.1$) attains the best score on every metric, with Distinct-2 and repeat-trigram improvements significant at $p<0.001$ and Self-BLEU-4 at $p<0.01$ under paired permutation tests (forest plot in Appendix Figure~\ref{fig:forest}).
\xtc{} wins 24 of 24 prompts on Distinct-2 and 22 of 24 on repeat trigram rate (Appendix Figure~\ref{fig:wtl}).
Top-$p$ and typical-$p$ at 0.95 fall below baseline on Distinct-2, consistent with the head-ambiguity hypothesis that tail truncation is the wrong intervention in this regime.
The effect generalizes across all 12 prompt genres (Appendix Figure~\ref{fig:tags}), with the largest gains in dialogue, branding, and ideation.

\begin{figure}[t]
\centering
\includegraphics[width=\textwidth]{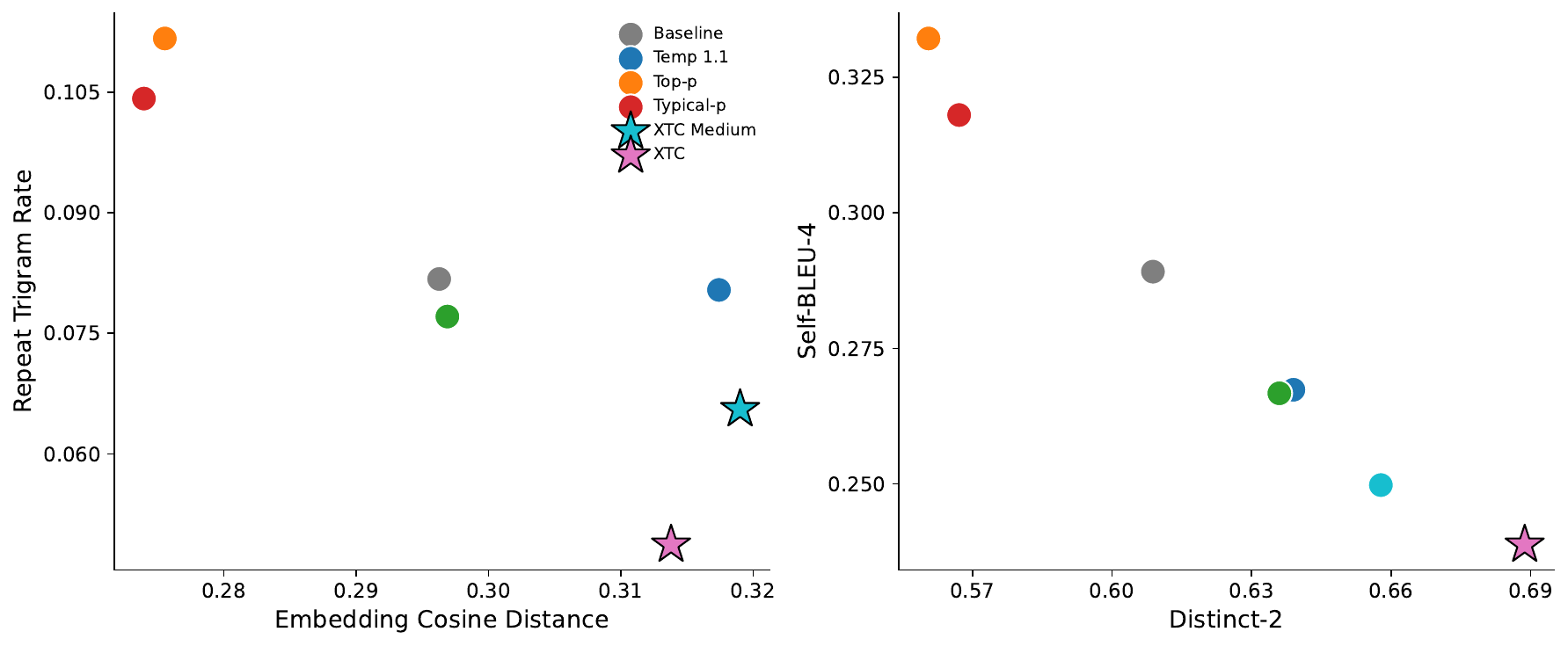}
\caption{Pareto frontiers on two diversity-vs-repetition planes (24 creative prompts, Gemma~3 27B q4). \textbf{Left:} Embedding cosine distance vs.\ repeat trigram rate. \textbf{Right:} Distinct-2 vs.\ Self-BLEU-4. Stars mark Pareto-optimal conditions. \xtc{} Medium and Strong dominate the frontier on both planes, while top-$p$ and typical-$p$ are Pareto-dominated by the baseline.}
\label{fig:pareto}
\end{figure}

The Pareto analysis (Figure~\ref{fig:pareto}) confirms that \xtc{} conditions dominate the diversity-vs-repetition frontier on both semantic and lexical planes.
A head-to-head against the strongest single-sampler comparator (Appendix Figure~\ref{fig:strong}) shows that the composition T=1.3 + \xtc{} ($\rho{=}0.75$, $\tau{=}0.05$) dominates T=1.3 alone on all four metrics.

A direct comparison against eta and min-$p$ baselines, including the min-$p$ + \xtc{} composition, appears in Appendix~\ref{app:strong_baselines} (Table~\ref{tab:strong_baselines}). \xtc{} alone outperforms all three tail-shaping baselines on every reported metric, and the min-$p$ + \xtc{} composition is the best overall, consistent with \xtc{} (a head control) and min-$p$ (a tail control) targeting orthogonal regions of the distribution.

A full sampler-composition study (Appendix~\ref{app:composition_body}) shows that pairing \xtc{} Medium with temperature, top-$p$, or a repetition penalty improves Distinct-2 and reduces Self-BLEU-4 in every pairing, with the $T{=}1.3 + \xtc{}$ composition attaining the best score on every metric in a ten-metric headline (Table~\ref{tab:extended}, Figure~\ref{fig:composition}).
Cross-model factorial replications appear in Appendix~\ref{app:interaction}.

\subsection{Quality evaluations}
\label{sec:exp_quality}

Diversity gains matter only if quality is preserved.
We attack this question along two complementary axes. The first is an LLM-as-judge evaluation scoring every generation on seven pre-registered criteria, with Claude Opus~4.7 as the primary judge and an OpenAI \texttt{gpt-4o} control as a cross-vendor check. The second is a blinded Amazon Mechanical Turk study with paid human annotators.

\subsubsection{LLM-as-judge with cross-vendor control}
\label{sec:exp_llm_judge}

We rate every Gemma~3 27B q4 creative-eval generation with Claude Opus~4.7 on seven 1--5 criteria, using a blinded single-output protocol with shuffled condition order and stripped condition labels \citep{zheng2023judging, krippendorff2011computing}.
LLM judges can exhibit same-family preference and verbosity biases \citep{wang2024pandalm, dubois2024length}, so we cross-validate with an OpenAI \texttt{gpt-4o} (2024-05-13) control judge under the identical rubric.

\begin{figure}[t]
\centering
\includegraphics[width=\textwidth]{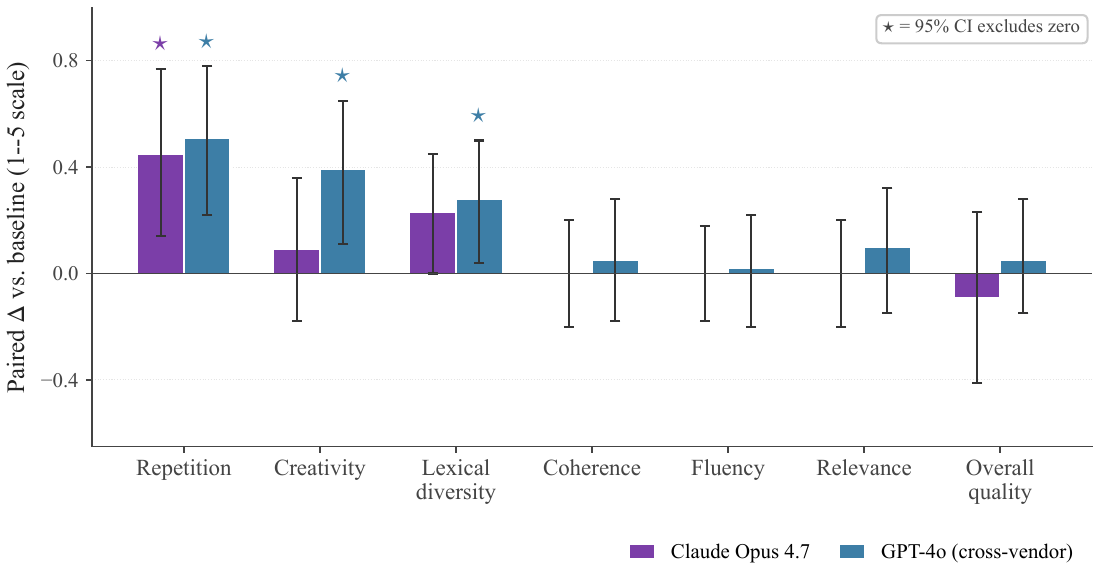}
\caption{Claude Opus~4.7 and an OpenAI \texttt{gpt-4o} cross-vendor control on \xtc{} vs.\ baseline (Gemma~3 27B q4, $n{=}24$ paired samples per condition). Bars are paired $\Delta$ on a 1--5 scale with 95\% bootstrap CIs. $\star$ marks intervals that exclude zero. Opus and GPT-4o agree that \xtc{} significantly improves repetition. GPT-4o additionally resolves significant gains on creativity and lexical diversity. Overall quality is null on both judges, supporting the no-degradation hypothesis. The cross-vendor agreement neutralizes same-family-bias concerns.}
\label{fig:judge_panel}
\end{figure}

Figure~\ref{fig:judge_panel} summarizes the Opus + GPT-4o result on Gemma~3 27B q4 (full numbers in Appendix Table~\ref{tab:judge_27b_q4}).
Both judges resolve a significant repetition improvement at \xtc{}, and GPT-4o additionally resolves creativity and lexical-diversity gains that Opus shows as directionally positive but inside its 95\% interval.
Overall-quality intervals bracket zero on both judges, supporting the no-degradation hypothesis.
The cross-vendor agreement on direction across every criterion neutralizes same-family-bias concerns common to single-judge LLM evaluations \citep{wang2024pandalm, dubois2024length}.
The same protocol replicates at scale on Llama 3.3 70B q4 (Figure~\ref{fig:judge_70b}), where Opus resolves significant \xtc{} creativity and lexical-diversity gains.
Overall-quality at 70B is again null.

\begin{figure}[t]
\centering
\includegraphics[width=0.85\textwidth]{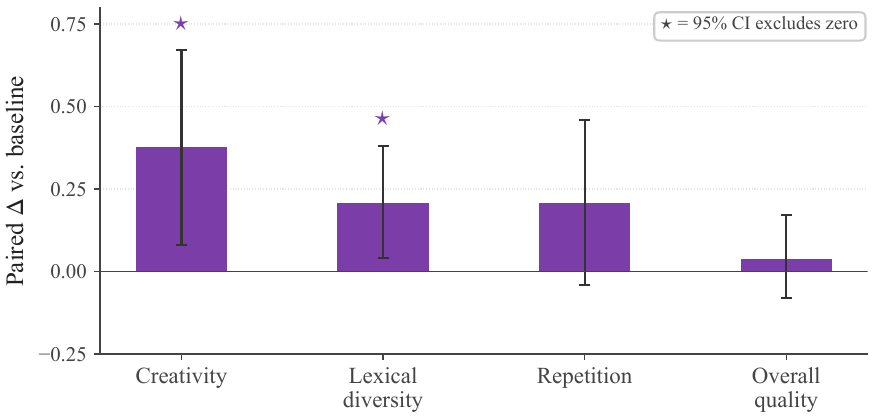}
\caption{Llama 3.3 70B q4 scaling validation: Claude Opus~4.7 LLM judge on \xtc{} vs.\ baseline ($n{=}24$ paired samples per condition, 95\% bootstrap CIs). Opus resolves significant positive \xtc{} effects on creativity and lexical diversity ($\star$). The repetition delta is positive but its CI grazes zero. Overall quality is null. The cross-vendor GPT-4o control was not run on this scaling configuration. The cross-vendor agreement is established on the 27B q4 panel (Figure~\ref{fig:judge_panel}).}
\label{fig:judge_70b}
\end{figure}

\subsubsection{Human evaluation (Amazon Mechanical Turk)}
\label{sec:exp_human_eval}

Human preference remains the gold standard for open-ended generation \citep{hashimoto2019unifying}.
The prompts come from the public Creative Writing Bench v3 prompt set released by Sam Peach as part of EQ-Bench\footnote{\url{https://github.com/EQ-bench/creative-writing-bench/blob/main/data/creative_writing_prompts_v3.json}} \citep{creative-writing-bench-v3}.
The set contains 33 base writing prompts, each paired with multiple seed modifiers (genre, tone, scenario tags) that condition the requested continuation.
We sampled 100 prompt+seed combinations uniformly across the 33 base prompts and generated paired responses using the Baseline ($T{=}1.0$) and \xtc{} ($\rho{=}1.0$, $\tau{=}0.1$) on Gemma~3 27B q4.
We recruited 150 Master-qualified AMT workers (\$15/hr, above local minimum wage) to blindly rate the pairs.
Each pair was scored by 3 independent annotators with A/B order randomized per pair, on two pre-registered axes: \emph{Creativity \& Interestingness} and \emph{General Quality \& Fluency}.

\begin{figure}[t]
\centering
\includegraphics[width=\textwidth]{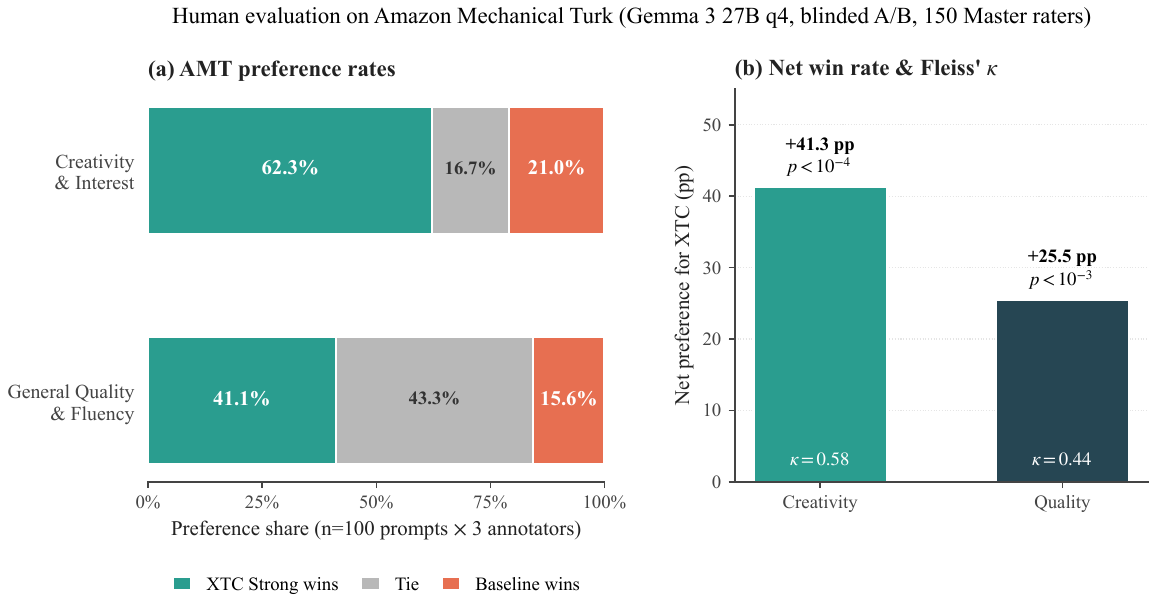}
\caption{Blinded AMT human evaluation on Gemma~3 27B q4 ($n{=}100$ prompts, 3 annotators per pair, 150 Master-qualified workers).
\textbf{(a)} Stacked preference rates: \xtc{} wins on creativity in 62.3\% of head-to-head comparisons against the baseline (vs.\ 21.0\% baseline wins, 16.7\% ties), and ties or beats baseline in 84.4\% of quality comparisons.
\textbf{(b)} Net preference (\xtc{}-minus-baseline) and Fleiss' $\kappa$ inter-annotator agreement: the creativity gain is significant at $p < 10^{-4}$ (two-sided binomial) with moderate agreement ($\kappa{=}0.58$), and the quality gain is significant at $p < 10^{-3}$ with $\kappa{=}0.44$.}
\label{fig:human_eval}
\end{figure}

The human evaluation corroborates the automatic metrics and the LLM panel (Figure~\ref{fig:human_eval}).
\xtc{} is the preferred response for Creativity by a wide margin, and ties or beats the baseline on Quality in the large majority of comparisons.
A two-sided binomial test confirms both axes at $p<10^{-3}$ or better with moderate inter-annotator agreement.
The head-aware exclusions therefore translate to human-perceptible stylistic diversity with quality preserved.

\subsection{General quality \& instruction following (IFEval)}
\label{sec:exp_ifeval}

Diversity-promoting decoders are often charged with degrading instruction-following \citep{hewitt2022truncation, holtzman2020curious}.
We quantify \xtc{}'s safe operating boundaries with the Instruction Following Evaluation (IFEval) benchmark \citep{zhou2023instructionfollowing}.

\begin{figure}[t]
\centering
\includegraphics[width=\textwidth]{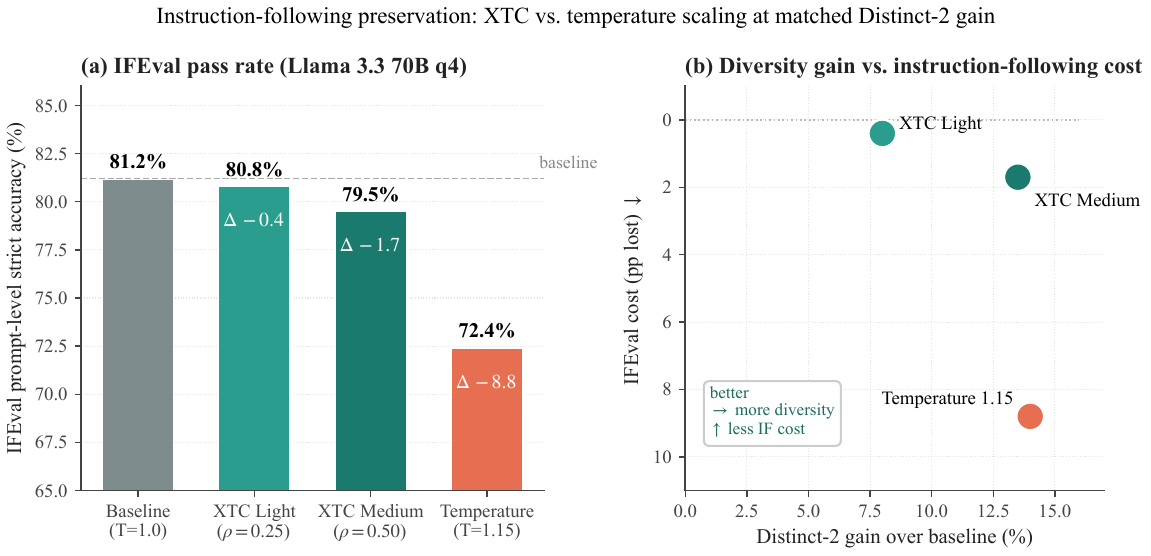}
\caption{Instruction-following preservation on Llama~3.3 70B q4.
\textbf{(a)} IFEval prompt-level strict accuracy across four conditions. \xtc{} Light ($\rho{=}0.25$, $\tau{=}0.10$) shifts accuracy by only $-0.4$ pp, and \xtc{} Medium ($\rho{=}0.50$, $\tau{=}0.10$) by $-1.7$ pp. A temperature setting (T=1.15) chosen to deliver a comparable Distinct-2 gain collapses IFEval by $-8.8$ pp.
\textbf{(b)} Diversity gain vs.\ instruction-following cost: \xtc{} Medium delivers $\sim$5$\times$ more Distinct-2 per IFEval-point lost than temperature scaling, locating \xtc{} in the upper region of the diversity-vs-fidelity Pareto plane.}
\label{fig:ifeval}
\end{figure}

On Llama~3.3 70B q4 (Figure~\ref{fig:ifeval}), \xtc{} Light and \xtc{} Medium shift IFEval prompt-level strict accuracy by less than 2 pp from baseline ($-0.4$ and $-1.7$ pp respectively), while a temperature setting ($T{=}1.15$) chosen to deliver an equivalent Distinct-2 gain collapses IFEval by $-8.8$ pp. Per-condition numbers and the IFEval-cost-per-Distinct-2 ratios appear in Appendix~\ref{app:strong_baselines} (Table~\ref{tab:ifeval}).

The sparse-in-time formulation explains the gap: deterministic instruction-critical tokens (where the head dominates at $>$90\% probability) lie outside \xtc{}'s eligible set and pass through untouched, while temperature rescales every step regardless of head ambiguity (Figure~\ref{fig:ifeval}b).

\section{Conclusion}

We introduced \xtc{}, a head-aware decoding rule that intervenes when a language model places substantial probability on several plausible continuations while overweighting the most dominant one.
The operator acts on the head of the distribution, fires only when at least two tokens clear an absolute plausibility floor, and is otherwise inactive.
Renormalization after exclusion is the KL-minimizing projection onto the restricted support \citep{csiszar1975divergence}, preserving relative odds among surviving tokens.
Empirically, 60 experiments across four model families show that \xtc{} improves the diversity-repetition Pareto frontier on creative generation, composes additively with temperature and repetition penalties, and generalizes across quantization levels and architectures (Section~\ref{sec:experiments}, Appendix~\ref{app:crossmodel}).
An Opus + GPT-4o cross-vendor LLM panel, a blinded AMT human study, and an IFEval characterization on Llama 3.3 70B q4 converge on the same finding: head-aware exclusion produces human-perceptible stylistic diversity at a fraction of the instruction-following cost paid by matched-Distinct-2 temperature scaling.
Limitations, broader impacts, and practical safeguards appear in Appendix~\ref{sec:discussion}.

\bibliographystyle{plainnat}
\bibliography{refs}

@inproceedings{fan2018hierarchical,
  author    = {Fan, Angela and Lewis, Mike and Dauphin, Yann},
  title     = {Hierarchical Neural Story Generation},
  booktitle = {Proceedings of the 56th Annual Meeting of the Association for Computational Linguistics (Volume 1: Long Papers)},
  pages     = {889--898},
  year      = {2018}
}

@inproceedings{holtzman2020curious,
  author    = {Holtzman, Ari and Buys, Jan and Du, Li and Forbes, Maxwell and Choi, Yejin},
  title     = {The Curious Case of Neural Text Degeneration},
  booktitle = {International Conference on Learning Representations},
  year      = {2020}
}

@inproceedings{hewitt2022truncation,
  author    = {Hewitt, John and Manning, Christopher D. and Liang, Percy},
  title     = {Truncation Sampling as Language Model Desmoothing},
  booktitle = {Findings of the Association for Computational Linguistics: {EMNLP} 2022},
  pages     = {3414--3427},
  year      = {2022}
}

@inproceedings{basu2021mirostat,
  author    = {Basu, Sourya and Ramachandran, Govardana Sachithanandam and Keskar, Nitish Shirish and Varshney, Lav R.},
  title     = {Mirostat: A Neural Text Decoding Algorithm that Directly Controls Perplexity},
  booktitle = {International Conference on Learning Representations},
  year      = {2021}
}

@article{meister2023typical,
  author  = {Meister, Clara and Pimentel, Tiago and Wiher, Gian and Cotterell, Ryan},
  title   = {Locally Typical Sampling},
  journal = {Transactions of the Association for Computational Linguistics},
  volume  = {11},
  pages   = {102--121},
  year    = {2023}
}

@inproceedings{ICLR2025_afa5f124,
  author    = {Nguyen, Minh Nhat and Baker, Andrew and Neo, Clement and Roush, Allen and Kirsch, Andreas and Shwartz-Ziv, Ravid},
  title     = {Turning Up the Heat: Min-p Sampling for Creative and Coherent {LLM} Outputs},
  booktitle = {International Conference on Learning Representations},
  pages     = {70333--70366},
  year      = {2025}
}

@inproceedings{li2023contrastive,
  author    = {Li, Xiang Lisa and Holtzman, Ari and Fried, Daniel and Liang, Percy and Eisner, Jason and Hashimoto, Tatsunori B. and Zettlemoyer, Luke and Lewis, Mike},
  title     = {Contrastive Decoding: Open-Ended Text Generation as Optimization},
  booktitle = {Proceedings of the 61st Annual Meeting of the Association for Computational Linguistics (Volume 1: Long Papers)},
  pages     = {12286--12312},
  year      = {2023}
}

@inproceedings{su2022contrastive,
  author    = {Su, Yixuan and Lan, Tian and Wang, Yan and Yogatama, Dani and Kong, Lingpeng and Collier, Nigel},
  title     = {A Contrastive Framework for Neural Text Generation},
  booktitle = {Advances in Neural Information Processing Systems},
  volume    = {35},
  pages     = {21548--21561},
  year      = {2022}
}

@article{vijayakumar2018diverse,
  title={Diverse beam search: Decoding diverse solutions from neural sequence models},
  author={Vijayakumar, Ashwin K and Cogswell, Michael and Selvaraju, Ramprasath R and Sun, Qing and Lee, Stefan and Crandall, David and Batra, Dhruv},
  journal={arXiv preprint arXiv:1610.02424},
  year={2016}
}

@inproceedings{welleck2020neural,
  author    = {Welleck, Sean and Kulikov, Ilia and Roller, Stephen and Dinan, Emily and Cho, Kyunghyun and Weston, Jason},
  title     = {Neural Text Generation with Unlikelihood Training},
  booktitle = {International Conference on Learning Representations},
  year      = {2020}
}

@inproceedings{chuang2023dola,
  author    = {Chuang, Yung-Sung and Xie, Yujia and Luo, Hongyin and Kim, Yoon and Glass, James R. and He, Pengcheng},
  title     = {{DoLa}: Decoding by Contrasting Layers Improves Factuality in Large Language Models},
  booktitle = {International Conference on Learning Representations},
  year      = {2024}
}

@inproceedings{Yang_2021,
  author    = {Yang, Kevin and Klein, Dan},
  title     = {{FUDGE}: Controlled Text Generation With Future Discriminators},
  booktitle = {Proceedings of the 2021 Conference of the North American Chapter of the Association for Computational Linguistics: Human Language Technologies},
  pages     = {3511--3535},
  year      = {2021}
}

@inproceedings{qin2022cold,
  author    = {Qin, Lianhui and Welleck, Sean and Khashabi, Daniel and Choi, Yejin},
  title     = {{COLD} Decoding: Energy-Based Constrained Text Generation with {Langevin} Dynamics},
  booktitle = {Advances in Neural Information Processing Systems},
  volume    = {35},
  year      = {2022}
}

@article{keskar2019ctrl,
  author  = {Keskar, Nitish Shirish and McCann, Bryan and Varshney, Lav R. and Xiong, Caiming and Socher, Richard},
  title   = {{CTRL}: A Conditional Transformer Language Model for Controllable Generation},
  journal = {arXiv preprint arXiv:1909.05858},
  year    = {2019}
}

@article{wiher2022decoding,
  author  = {Wiher, Gian and Meister, Clara and Cotterell, Ryan},
  title   = {On Decoding Strategies for Neural Text Generators},
  journal = {Transactions of the Association for Computational Linguistics},
  volume  = {10},
  pages   = {997--1012},
  year    = {2022}
}

@inproceedings{li2016diversity,
  author    = {Li, Jiwei and Galley, Michel and Brockett, Chris and Gao, Jianfeng and Dolan, Bill},
  title     = {A Diversity-Promoting Objective Function for Neural Conversation Models},
  booktitle = {Proceedings of the 2016 Conference of the North American Chapter of the Association for Computational Linguistics: Human Language Technologies},
  pages     = {110--119},
  year      = {2016}
}

@inproceedings{shaib2024diversity,
  author    = {Shaib, Chantal and Barrow, Joe and Sun, Jiuding and Siu, Alexa F. and Wallace, Byron C. and Nenkova, Ani},
  title     = {Standardizing the Measurement of Text Diversity: A Tool and Comparative Analysis},
  booktitle = {Proceedings of {IJCNLP-AACL} 2025: System Demonstrations},
  year      = {2025}
}

@inproceedings{shaib2024templates,
  author    = {Shaib, Chantal and Elazar, Yanai and Li, Junyi Jessy and Wallace, Byron C.},
  title     = {Detection and Measurement of Syntactic Templates in Generated Text},
  booktitle = {Proceedings of the 2024 Conference on Empirical Methods in Natural Language Processing},
  pages     = {6416--6431},
  year      = {2024}
}

@inproceedings{padmakumar2024contentdiversity,
  author    = {Padmakumar, Vishakh and He, He},
  title     = {Does Writing with Language Models Reduce Content Diversity?},
  booktitle = {International Conference on Learning Representations},
  year      = {2024}
}

@article{doshi2024generative,
  author  = {Doshi, Anil R. and Hauser, Oliver P.},
  title   = {Generative {AI} Enhances Individual Creativity but Reduces the Collective Diversity of Novel Content},
  journal = {Science Advances},
  volume  = {10},
  number  = {28},
  pages   = {eadn5290},
  year    = {2024}
}

@inproceedings{anderson2024homogenization,
  author    = {Anderson, Barrett R. and Shah, Jash Hemant and Kreminski, Max},
  title     = {Homogenization Effects of Large Language Models on Human Creative Ideation},
  booktitle = {Proceedings of the 16th Conference on Creativity \& Cognition},
  year      = {2024}
}

@article{sourati2025homogenizing,
  author  = {Sourati, Zhivar and Ziabari, Alireza S. and Dehghani, Morteza},
  title   = {The Homogenizing Effect of Large Language Models on Human Expression and Thought},
  journal = {arXiv preprint arXiv:2508.01491},
  year    = {2025}
}

@inproceedings{yuan2022wordcraft,
  author    = {Yuan, Ann and Coenen, Andy and Reif, Emily and Ippolito, Daphne},
  title     = {Wordcraft: Story Writing With Large Language Models},
  booktitle = {27th International Conference on Intelligent User Interfaces},
  pages     = {841--852},
  year      = {2022}
}

@article{ippolito2023creative,
  author  = {Ippolito, Daphne and Yuan, Ann and Coenen, Andy and Burnam, Sehmon},
  title   = {Creative Writing with an {AI}-Powered Writing Assistant: Perspectives from Professional Writers},
  journal = {arXiv preprint arXiv:2211.05030},
  year    = {2022}
}

@inproceedings{kirk2024understanding,
  author    = {Kirk, Robert and Mediratta, Ishita and Nalmpantis, Christoforos and Luketina, Jelena and Hambro, Eric and Grefenstette, Edward and Raileanu, Roberta},
  title     = {Understanding the Effects of {RLHF} on {LLM} Generalisation and Diversity},
  booktitle = {International Conference on Learning Representations},
  year      = {2024}
}

@misc{creative-writing-bench-v3,
  author       = {Paech, Samuel J.},
  title        = {{EQ-Bench} Creative Writing Benchmark v3},
  howpublished = {\url{https://github.com/EQ-bench/creative-writing-bench}},
  year         = {2025},
  note         = {GitHub repository}
}

@article{shannon1948mathematical,
  author  = {Shannon, Claude E.},
  title   = {A Mathematical Theory of Communication},
  journal = {The Bell System Technical Journal},
  volume  = {27},
  number  = {3},
  pages   = {379--423},
  year    = {1948}
}

@article{csiszar1975divergence,
  author  = {Csisz{\'a}r, Imre},
  title   = {{I}-Divergence Geometry of Probability Distributions and Minimization Problems},
  journal = {The Annals of Probability},
  volume  = {3},
  number  = {1},
  pages   = {146--158},
  year    = {1975}
}

@book{cover2006elements,
  author    = {Cover, Thomas M. and Thomas, Joy A.},
  title     = {Elements of Information Theory},
  publisher = {John Wiley \& Sons},
  address   = {Hoboken, NJ},
  edition   = {2nd},
  year      = {2006}
}

@article{ackley1985learning,
  title={A learning algorithm for Boltzmann machines},
  author={Ackley, David H and Hinton, Geoffrey E and Sejnowski, Terrence J},
  journal={Cognitive science},
  volume={9},
  number={1},
  pages={147--169},
  year={1985},
  publisher={Elsevier}
}

@article{bengio2003neural,
  author  = {Bengio, Yoshua and Ducharme, R{\'e}jean and Vincent, Pascal and Jauvin, Christian},
  title   = {A Neural Probabilistic Language Model},
  journal = {Journal of Machine Learning Research},
  volume  = {3},
  pages   = {1137--1155},
  year    = {2003}
}

@inproceedings{vaswani2017attention,
  author    = {Vaswani, Ashish and Shazeer, Noam and Parmar, Niki and Uszkoreit, Jakob and Jones, Llion and Gomez, Aidan N. and Kaiser, Lukasz and Polosukhin, Illia},
  title     = {Attention Is All You Need},
  booktitle = {Advances in Neural Information Processing Systems},
  volume    = {30},
  pages     = {5998--6008},
  year      = {2017}
}

@article{radford2019language,
  author  = {Radford, Alec and Wu, Jeffrey and Child, Rewon and Luan, David and Amodei, Dario and Sutskever, Ilya},
  title   = {Language Models are Unsupervised Multitask Learners},
  journal = {{OpenAI} Technical Report},
  year    = {2019}
}

@inproceedings{brown2020language,
  author    = {Brown, Tom B. and Mann, Benjamin and Ryder, Nick and Subbiah, Melanie and Kaplan, Jared and Dhariwal, Prafulla and Neelakantan, Arvind and Shyam, Pranav and Sastry, Girish and Askell, Amanda and others},
  title     = {Language Models are Few-Shot Learners},
  booktitle = {Advances in Neural Information Processing Systems},
  volume    = {33},
  pages     = {1877--1901},
  year      = {2020}
}

@article{touvron2023llama2,
  author  = {Touvron, Hugo and Martin, Louis and Stone, Kevin and Albert, Peter and Almahairi, Amjad and Babaei, Yasmine and Bashlykov, Nikolay and Batra, Soumya and Bhargava, Prajjwal and Bhosale, Shruti and others},
  title   = {{Llama} 2: Open Foundation and Fine-Tuned Chat Models},
  journal = {arXiv preprint arXiv:2307.09288},
  year    = {2023}
}

@article{gemma2024,
  author  = {Mesnard, Thomas and Hardin, Cassidy and Dadashi, Robert and Bhupatiraju, Surya and Pathak, Shreya and Sifre, Laurent and Rivi{\`e}re, Morgane and Kale, Mihir Sanjay and Love, Juliette and Tafti, Pouya and others},
  title   = {{Gemma}: Open Models Based on {Gemini} Research and Technology},
  journal = {arXiv preprint arXiv:2403.08295},
  year    = {2024}
}

@article{deepseek2025r1,
  author  = {{DeepSeek-AI} and Guo, Daya and Yang, Dejian and Zhang, Haowei and Song, Junxiao and Wang, Peiyi and Zhu, Qihao and Xu, Runxin and Zhang, Ruoyu and Ma, Shirong and others},
  title   = {{DeepSeek-R1}: Incentivizing Reasoning Capability in {LLMs} via Reinforcement Learning},
  journal = {arXiv preprint arXiv:2501.12948},
  year    = {2025}
}

@inproceedings{ouyang2022training,
  author    = {Ouyang, Long and Wu, Jeffrey and Jiang, Xu and Almeida, Diogo and Wainwright, Carroll L. and Mishkin, Pamela and Zhang, Chong and Agarwal, Sandhini and Slama, Katarina and Ray, Alex and others},
  title     = {Training Language Models to Follow Instructions with Human Feedback},
  booktitle = {Advances in Neural Information Processing Systems},
  volume    = {35},
  pages     = {27730--27744},
  year      = {2022}
}

@article{bai2022training,
  author  = {Bai, Yuntao and Jones, Andy and Ndousse, Kamal and Askell, Amanda and Chen, Anna and DasSarma, Nova and Drain, Dawn and Fort, Stanislav and Ganguli, Deep and Henighan, Tom and others},
  title   = {Training a Helpful and Harmless Assistant with Reinforcement Learning from Human Feedback},
  journal = {arXiv preprint arXiv:2204.05862},
  year    = {2022}
}

@inproceedings{lee2022deduplicating,
  author    = {Lee, Katherine and Ippolito, Daphne and Nystrom, Andrew and Zhang, Chiyuan and Eck, Douglas and Callison-Burch, Chris and Carlini, Nicholas},
  title     = {Deduplicating Training Data Makes Language Models Better},
  booktitle = {Proceedings of the 60th Annual Meeting of the Association for Computational Linguistics (Volume 1: Long Papers)},
  pages     = {8424--8445},
  year      = {2022}
}

@inproceedings{dettmers2022gptq,
  author    = {Frantar, Elias and Ashkboos, Saleh and Hoefler, Torsten and Alistarh, Dan},
  title     = {{GPTQ}: Accurate Post-Training Quantization for Generative Pre-trained Transformers},
  booktitle = {International Conference on Learning Representations},
  year      = {2023}
}

@article{lin2024awq,
  title={Awq: Activation-aware weight quantization for on-device llm compression and acceleration},
  author={Lin, Ji and Tang, Jiaming and Tang, Haotian and Yang, Shang and Chen, Wei-Ming and Wang, Wei-Chen and Xiao, Guangxuan and Dang, Xingyu and Gan, Chuang and Han, Song},
  journal={Proceedings of machine learning and systems},
  volume={6},
  pages={87--100},
  year={2024}
}

@inproceedings{papineni2002bleu,
  author    = {Papineni, Kishore and Roukos, Salim and Ward, Todd and Zhu, Wei-Jing},
  title     = {{BLEU}: A Method for Automatic Evaluation of Machine Translation},
  booktitle = {Proceedings of the 40th Annual Meeting of the Association for Computational Linguistics},
  pages     = {311--318},
  year      = {2002}
}

@inproceedings{lin2004rouge,
  author    = {Lin, Chin-Yew},
  title     = {{ROUGE}: A Package for Automatic Evaluation of Summaries},
  booktitle = {Text Summarization Branches Out},
  pages     = {74--81},
  year      = {2004}
}

@inproceedings{zhang2020bertscore,
  author    = {Zhang, Tianyi and Kishore, Varsha and Wu, Felix and Weinberger, Kilian Q. and Artzi, Yoav},
  title     = {{BERTScore}: Evaluating Text Generation with {BERT}},
  booktitle = {International Conference on Learning Representations},
  year      = {2020}
}

@inproceedings{pillutla2021mauve,
  author    = {Pillutla, Krishna and Swayamdipta, Swabha and Zellers, Rowan and Thickstun, John and Welleck, Sean and Choi, Yejin and Harchaoui, Zaid},
  title     = {{MAUVE}: Measuring the Gap Between Neural Text and Human Text using Divergence Frontiers},
  booktitle = {Advances in Neural Information Processing Systems},
  volume    = {34},
  pages     = {4816--4828},
  year      = {2021}
}

@inproceedings{zhu2018texygen,
  author    = {Zhu, Yaoming and Lu, Sidi and Zheng, Lei and Guo, Jiaxian and Zhang, Weinan and Wang, Jun and Yu, Yong},
  title     = {Texygen: A Benchmarking Platform for Text Generation Models},
  booktitle = {The 41st International {ACM} {SIGIR} Conference on Research and Development in Information Retrieval},
  pages     = {1097--1100},
  year      = {2018}
}

@inproceedings{hashimoto2019unifying,
  author    = {Hashimoto, Tatsunori B. and Zhang, Hugh and Liang, Percy},
  title     = {Unifying Human and Statistical Evaluation for Natural Language Generation},
  booktitle = {Proceedings of the 2019 Conference of the North American Chapter of the Association for Computational Linguistics: Human Language Technologies},
  pages     = {1689--1701},
  year      = {2019}
}

@inproceedings{zheng2023judging,
  author    = {Zheng, Lianmin and Chiang, Wei-Lin and Sheng, Ying and Zhuang, Siyuan and Wu, Zhanghao and Zhuang, Yonghao and Lin, Zi and Li, Zhuohan and Li, Dacheng and Xing, Eric and others},
  title     = {Judging {LLM}-as-a-Judge with {MT-Bench} and {Chatbot Arena}},
  booktitle = {Advances in Neural Information Processing Systems},
  volume    = {36},
  pages     = {46595--46623},
  year      = {2023}
}

@inproceedings{chiang2023closer,
  author    = {Chiang, Cheng-Han and Lee, Hung-yi},
  title     = {Can Large Language Models Be an Alternative to Human Evaluations?},
  booktitle = {Proceedings of the 61st Annual Meeting of the Association for Computational Linguistics (Volume 1: Long Papers)},
  pages     = {15607--15631},
  year      = {2023}
}

@inproceedings{dubois2024length,
  author    = {Dubois, Yann and Galambosi, Bal{\'a}zs and Liang, Percy and Hashimoto, Tatsunori B.},
  title     = {Length-Controlled {AlpacaEval}: A Simple Way to Debias Automatic Evaluators},
  booktitle = {Conference on Language Modeling},
  year      = {2024}
}

@inproceedings{wang2024pandalm,
  author    = {Wang, Yidong and Yu, Zhuohao and Zeng, Zhengran and Yang, Linyi and Wang, Cunxiang and Chen, Hao and Jiang, Chaoya and Xie, Rui and Wang, Jindong and Xie, Xing and others},
  title     = {{PandaLM}: An Automatic Evaluation Benchmark for {LLM} Instruction Tuning Optimization},
  booktitle = {International Conference on Learning Representations},
  year      = {2024}
}

@inproceedings{li2024leveraging,
  author    = {Li, Zhen and Xu, Xiaohan and Shen, Tao and Xu, Can and Gu, Jia-Chen and Lai, Yuxuan and Tao, Chongyang and Ma, Shuai},
  title     = {Leveraging Large Language Models for {NLG} Evaluation: Advances and Challenges},
  booktitle = {Proceedings of the 2024 Conference on Empirical Methods in Natural Language Processing},
  pages     = {16028--16045},
  year      = {2024}
}

@inproceedings{renze2024effect,
  author    = {Renze, Matthew and Guven, Erhan},
  title     = {The Effect of Sampling Temperature on Problem Solving in Large Language Models},
  booktitle = {Findings of the Association for Computational Linguistics: {EMNLP} 2024},
  year      = {2024}
}

@article{zhou2023instructionfollowing,
  author  = {Zhou, Jeffrey and Lu, Tianjian and Mishra, Swaroop and Brahma, Siddhartha and Basu, Sujoy and Luan, Yi and Zhou, Denny and Hou, Le},
  title   = {Instruction-Following Evaluation for Large Language Models},
  journal = {arXiv preprint arXiv:2311.07911},
  year    = {2023}
}

@techreport{krippendorff2011computing,
  author      = {Krippendorff, Klaus},
  title       = {Computing {Krippendorff's} Alpha-Reliability},
  institution = {Annenberg School for Communication, University of Pennsylvania},
  year        = {2011}
}

@article{gatt2018survey,
  author  = {Gatt, Albert and Krahmer, Emiel},
  title   = {Survey of the State of the Art in Natural Language Generation: Core Tasks, Applications and Evaluation},
  journal = {Journal of Artificial Intelligence Research},
  volume  = {61},
  pages   = {65--170},
  year    = {2018}
}

@article{celikyilmaz2020evaluation,
  author  = {Celikyilmaz, Asli and Clark, Elizabeth and Gao, Jianfeng},
  title   = {Evaluation of Text Generation: A Survey},
  journal = {arXiv preprint arXiv:2006.14799},
  year    = {2020}
}

@article{minaee2024large,
  author  = {Minaee, Shervin and Mikolov, Tomas and Nikzad, Narjes and Chenaghlu, Meysam and Socher, Richard and Amatriain, Xavier and Gao, Jianfeng},
  title   = {Large Language Models: A Survey},
  journal = {arXiv preprint arXiv:2402.06196},
  year    = {2024}
}

@article{zhao2023survey,
  author  = {Zhao, Wayne Xin and Zhou, Kun and Li, Junyi and Tang, Tianyi and Dong, Zican and Hou, Yupeng and Zhang, Beichen and Min, Yingqian and Zhang, Junjie and Liu, Peiyu and others},
  title   = {A Survey of Large Language Models},
  journal = {arXiv preprint arXiv:2303.18223},
  year    = {2023}
}

@article{li2022pretrained,
  author  = {Li, Junyi and Tang, Tianyi and Zhao, Wayne Xin and Nie, Jian-Yun and Wen, Ji-Rong},
  title   = {Pre-trained Language Models for Text Generation: A Survey},
  journal = {{ACM} Computing Surveys},
  volume  = {56},
  number  = {9},
  pages   = {1--39},
  year    = {2024}
}

\newpage
\appendix

\section{Formal definition of \xtc{}}
\label{app:formal}

This appendix provides the complete mathematical specification of \xtc{}.
The intuitive description and algorithm appear in Section~\ref{sec:method}.

\subsection{Notation and eligible set}

Consider an autoregressive language model at step $t$ with a next-token distribution $\pt$ over a vocabulary $\Vocab$.
Let $\tau \in (0,1)$ be an absolute eligibility threshold and $\rho \in [0,1]$ an intervention probability.
The eligible set is
\begin{equation}
\Et \;=\; \{v \in \Vocab : \pt(v) \ge \tau\}.
\end{equation}

\subsection{Transformed distribution}

Let
\begin{equation}
u_t \;=\; \argminprob_{v \in \Et} \pt(v),
\end{equation}
with any deterministic tie-breaking rule.
Define the removed set $\Rt = \Et \setminus \{u_t\}$.
When \xtc{} activates ($|\Et| \ge 2$ and the Bernoulli$(\rho)$ draw fires), the transformed distribution $\qt$ is
\begin{equation}
\label{eq:xtc}
\qt(v) =
\begin{cases}
0, & v \in \Rt,\\[4pt]
\dfrac{\pt(v)}{1 - \sum_{r \in \Rt} \pt(r)}, & v \notin \Rt.
\end{cases}
\end{equation}
When \xtc{} does not activate, $\qt = \pt$.

\subsection{Formal proposition statements}

\begin{proposition}[No-op criterion]
\label{prop:noop}
If $|\Et| < 2$, or if the Bernoulli intervention variable does not fire, then $\qt = \pt$.
\end{proposition}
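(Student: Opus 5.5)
This proposition is essentially a direct unpacking of the definition of the transformed distribution in Eq.~\eqref{eq:xtc}, so the plan is to argue by cases on why \xtc{} fails to activate. The definition stipulates that \xtc{} \emph{activates} exactly when both $|\Et| \ge 2$ holds and the Bernoulli$(\rho)$ draw fires, and that otherwise $\qt = \pt$. The hypothesis of Proposition~\ref{prop:noop} is a disjunction: either $|\Et|<2$ or the draw does not fire. In each disjunct one of the two conjuncts of the activation condition fails, so the activation event does not occur, and the ``does not activate'' clause of the definition gives $\qt=\pt$ directly. I would present these as two short cases, each citing the same clause.

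To make the result less tautological, I would add a consistency check showing that even if the formula in Eq.~\eqref{eq:xtc} were applied in the degenerate case, it would agree with the no-op clause. If $|\Et| = 1$, then $u_t$ is the unique eligible token and $\Rt = \Et\setminus\{u_t\} = \emptyset$. If $|\Et| = 0$ (possible whenever $\tau > \max_v \pt(v)$), then $u_t$ is undefined, and I would adopt the convention $\Rt=\emptyset$. In both subcases $\sum_{r\in\Rt}\pt(r)=0$, so the normalizer equals $1$ and $\qt(v)=\pt(v)$ for every $v$. The skipped-draw case, in contrast, is purely definitional and matches lines 5--7 of Algorithm~\ref{alg:xtc}.

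The only real obstacle is bookkeeping rather than mathematics. I would need to handle the empty eligible set, where the $\argminprob$ defining $u_t$ ranges over an empty set, and to be precise that ``$\qt$'' is the realized output conditional on the Bernoulli outcome rather than its expectation over the coin. I would resolve the first point with the $\Rt=\emptyset$ convention above. For the second, I would state explicitly that the equality $\qt=\pt$ holds pointwise on the event of non-activation.
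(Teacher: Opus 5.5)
Your proposal is correct and matches the paper's proof, which likewise unpacks the definition by cases: if $|E_t(\tau)|<2$ the operator returns $p_t$ unchanged, and if $|E_t(\tau)|\ge 2$ but the Bernoulli draw is $0$ it again returns $p_t$. Your extra check that Eq.~\eqref{eq:xtc} reduces to the identity when $R_t=\emptyset$ (using your convention for an empty eligible set) is a sound sanity check that the paper omits, and the proof does not need it.
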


\begin{proposition}[Relative-odds invariance]
\label{prop:odds}
When \xtc{} activates and removes $\Rt$, for any surviving tokens $v,w \notin \Rt$ with $\pt(w) > 0$,
$\qt(v)/\qt(w) = \pt(v)/\pt(w)$.
\end{proposition}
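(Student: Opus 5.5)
The plan is to substitute the closed-form expression \eqref{eq:xtc} directly into the ratio and cancel the common normalizer. When \xtc{} activates, every surviving token $v \notin \Rt$ satisfies $\qt(v) = \pt(v)/Z_t$ with the single constant
\[
Z_t \;=\; 1 - \sum_{r \in \Rt} \pt(r),
\]
and $Z_t$ does not depend on $v$. Consequently
\[
\frac{\qt(v)}{\qt(w)} \;=\; \frac{\pt(v)/Z_t}{\pt(w)/Z_t} \;=\; \frac{\pt(v)}{\pt(w)},
\]
which is the claim.

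The only point needing care is that every quantity in this computation is well defined. Two things must be checked.

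\emph{First, $Z_t > 0$, so $\qt$ itself is well defined.} Activation requires $|\Et| \ge 2$, which guarantees that $u_t$ exists and that $u_t \notin \Rt$. By definition of $\Et$, $\pt(u_t) \ge \tau > 0$. Since $\pt$ sums to one,
\[
Z_t \;=\; \sum_{v \notin \Rt} \pt(v) \;\ge\; \pt(u_t) \;\ge\; \tau \;>\; 0 .
\]

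\emph{Second, the denominator $\qt(w)$ is nonzero.} The hypothesis $\pt(w) > 0$ together with $Z_t > 0$ gives $\qt(w) = \pt(w)/Z_t > 0$. The same observation makes the right-hand side $\pt(v)/\pt(w)$ meaningful.

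There is no real obstacle here. The step I would be most careful about is the positivity of $Z_t$, since it is what makes \eqref{eq:xtc} a valid distribution at all. It follows from the fact that the retained underdog $u_t$ always survives with mass at least $\tau$. I would record this bound $Z_t \ge \tau$ explicitly, because it may be reused in later properties such as the KL-projection interpretation.
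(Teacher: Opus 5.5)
Your proof is correct and follows the paper's argument: write $q_t(v) = p_t(v)/Z_t$ for every surviving token and cancel the common normalizer $Z_t$ in the ratio. Your additional check that $Z_t \ge p_t(u_t) \ge \tau > 0$, which guarantees that $q_t$ and the ratio are well defined, is a detail the paper leaves implicit, and your version of it is right.
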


\begin{proposition}[Expected removed mass]
\label{prop:mass}
Let $M_t(\tau) = \sum_{v \in \Et} \pt(v) - \min_{v \in \Et}\pt(v)$ when $|\Et| \ge 2$ and $0$ otherwise.
Then $\E[\textnormal{removed mass at step } t] = \rho \, M_t(\tau)$.
\end{proposition}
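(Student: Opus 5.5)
The plan is to condition on the distribution $\pt$ (equivalently, on the prefix at step $t$) and treat the Bernoulli$(\rho)$ intervention variable $B_t$ as the only source of randomness. The expectation is then taken over $B_t$, which is drawn independently of $\pt$. First, the removed mass has to be defined precisely. Set it to $\sum_{r \in \Rt} \pt(r)$ when \xtc{} activates, and to $0$ otherwise. This is the total probability that equation~\eqref{eq:xtc} sets to zero before renormalization.

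The argument splits into two cases on $|\Et|$.

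\textbf{Case $|\Et| < 2$.} Proposition~\ref{prop:noop} gives $\qt = \pt$ for either value of $B_t$, so the removed mass is identically $0$. Its expectation is $0 = \rho\cdot 0 = \rho\, M_t(\tau)$, using the definition of $M_t(\tau)$ in this case.

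\textbf{Case $|\Et| \ge 2$.} If $B_t = 0$, Proposition~\ref{prop:noop} again gives zero removed mass. If $B_t = 1$, the removed set is $\Rt = \Et \setminus \{u_t\}$. Since $u_t \in \Et$ is a single element, and $\pt(u_t) = \min_{v\in\Et}\pt(v)$ under any tie-breaking rule, we get
\begin{equation*}
\sum_{r\in\Rt}\pt(r) \;=\; \sum_{v\in\Et}\pt(v) - \pt(u_t) \;=\; M_t(\tau).
\end{equation*}
The removed mass is therefore the deterministic quantity $M_t(\tau)$ multiplied by the indicator $B_t$. Hence its expectation is $\Pr[B_t=1]\,M_t(\tau) = \rho\,M_t(\tau)$.

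Combining the two cases proves the claim conditionally on $\pt$. If an unconditional statement is wanted, the tower property then gives $\E[\text{removed mass}] = \rho\,\E[M_t(\tau)]$.

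There is no real analytic obstacle; the main point of care is the protected-token safeguard in Algorithm~\ref{alg:xtc}. When a protected token lies in $\Rt$, the algorithm returns $\pt$ unchanged even though $B_t = 1$, and that would break the identity. I would handle this by stating explicitly that the proposition concerns the formal operator~\eqref{eq:xtc}, which has no protected set ($S=\emptyset$). With a protected set $S$, the same computation gives $\rho\, M_t(\tau)\,\mathbf{1}[\Rt\cap S=\emptyset]$. The other detail to note is that ties in the $\arg\min$ do not affect the value $\pt(u_t)$, so $M_t(\tau)$ is well defined.
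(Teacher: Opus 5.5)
Your proof is correct and is essentially the paper's argument. It uses the same case split on $|E_t(\tau)|$ and the same identity $\sum_{r \in R_t} p_t(r) = \sum_{v \in E_t(\tau)} p_t(v) - \min_{v \in E_t(\tau)} p_t(v) = M_t(\tau)$ on activation, then multiplies by the activation probability $\rho$. Your added care about conditioning on $p_t$, about tie-breaking in the $\arg\min$, and about the protected-token early return in Algorithm~\ref{alg:xtc} is sound (the paper tacitly sidesteps that early return by working with the formal operator in Eq.~\eqref{eq:xtc}); it sharpens the statement without changing the route.
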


\begin{proposition}[Threshold monotonicity]
\label{prop:threshold}
$\tau_2 \ge \tau_1$ implies $E_t(\tau_2) \subseteq E_t(\tau_1)$, so increasing the threshold can only weakly decrease the number and mass of removable tokens.
\end{proposition}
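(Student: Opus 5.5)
The plan is to argue directly from the definition of the eligible set, $E_t(\tau) = \{v \in \Vocab : \pt(v) \ge \tau\}$, and then pass the set inclusion through to the removable quantities.

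\textbf{Step 1: set inclusion.} Fix $\tau_2 \ge \tau_1$ and let $v \in E_t(\tau_2)$. Then $\pt(v) \ge \tau_2 \ge \tau_1$, so $v \in E_t(\tau_1)$. This gives $E_t(\tau_2) \subseteq E_t(\tau_1)$ and hence $|E_t(\tau_2)| \le |E_t(\tau_1)|$.

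\textbf{Step 2: number of removable tokens.} When $|E_t(\tau)| \ge 2$ the removable set has $|E_t(\tau)| - 1$ elements, and otherwise it is empty. The map $n \mapsto \max(n-1,0)$ is nondecreasing, and it equals the removal count in both cases. Step 1 then immediately gives that the count at $\tau_2$ is at most the count at $\tau_1$.

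\textbf{Step 3: removable mass.} Here I would use $M_t(\tau)$ from Proposition~\ref{prop:mass}, namely the sum over $E_t(\tau)$ minus its minimum, or $0$ when $|E_t(\tau)| < 2$. Write $A = E_t(\tau_2) \subseteq B = E_t(\tau_1)$. If $|A| < 2$, then $M_t(\tau_2) = 0 \le M_t(\tau_1)$ trivially. Otherwise $|B| \ge 2$ as well.

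The key observation is that $A$ consists exactly of the elements of $B$ whose probability is at least $\tau_2$. So $A$ is an upper set of $B$ under the ordering by $\pt$. Because $A$ is nonempty, the minimizer $u_B$ of $B$ satisfies $\pt(u_B) \le \min_A \pt$. Then
\[
M_t(\tau_1) - M_t(\tau_2) = \sum_{v \in B \setminus A} \pt(v) - \pt(u_B) + \min_{A} \pt .
\]
I would then split into two cases.
\begin{itemize}
\item If $B = A$, the two minima coincide and the difference is $0$.
\item If $B \setminus A$ is nonempty, then $u_B$ can be chosen in $B \setminus A$, since its probability is the smallest. The sum over $B \setminus A$ therefore already contains $\pt(u_B)$, and the difference is at least $\min_A \pt \ge 0$.
\end{itemize}
Either way $M_t(\tau_2) \le M_t(\tau_1)$, and multiplying by $\rho$ via Proposition~\ref{prop:mass} carries the conclusion to expected removed mass.

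\textbf{Main obstacle.} The only delicate point is the mass claim. Monotonicity of a sum over a subset does not directly give monotonicity of ``sum minus minimum'', because shrinking the set can raise the minimum. The resolution is that $A$ is always an upper set of $B$. I would also check that tie-breaking in $u_t$ does not matter, since only the minimum value enters $M_t$.
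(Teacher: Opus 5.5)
Your proof is correct, and your Step 1 is identical to the paper's entire proof. The paper establishes $E_t(\tau_2)\subseteq E_t(\tau_1)$ and then simply asserts that ``the removable set and mass can only weakly shrink.''

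Your Steps 2--3 supply the argument the paper omits. You also correctly flag the one point that is not automatic: ``sum minus minimum'' is not monotone under arbitrary subsets, so set inclusion alone does not give the mass claim. Your upper-set observation is exactly the missing ingredient.

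That observation in fact yields a slightly stronger statement that handles Steps 2 and 3 at once:
\begin{itemize}
\item If $E_t(\tau_2)\subsetneq E_t(\tau_1)$, every minimizer $u$ of $p_t$ over $E_t(\tau_1)$ lies outside $E_t(\tau_2)$, since those tokens have $p_t<\tau_2$. Hence $R_t(\tau_1)=E_t(\tau_1)\setminus\{u\}\supseteq E_t(\tau_2)\supseteq R_t(\tau_2)$.
\item If the two eligible sets coincide, deterministic tie-breaking makes the removed sets equal.
\item If $|E_t(\tau_2)|<2$, nothing is removed at $\tau_2$.
\end{itemize}
So the removed sets are nested, $R_t(\tau_2)\subseteq R_t(\tau_1)$. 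The count inequality follows immediately, and the mass inequality follows by nonnegativity of $p_t$. This nesting is presumably what the paper's ``the removable set \ldots can only weakly shrink'' intends but does not justify.
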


\begin{proposition}[Restricted-support information projection]
\label{prop:iprojection}
Fix an activation event and let $S_t = \Vocab \setminus \Rt$.
Among all distributions $q$ supported on $S_t$, $\qt$ uniquely minimizes $\mathrm{KL}(q \,\|\, \pt)$.
\end{proposition}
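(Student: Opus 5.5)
We prove Proposition~\ref{prop:iprojection} by direct computation of $\mathrm{KL}(q\,\|\,\pt)$ for an arbitrary $q$ supported on $S_t$. The goal is to rewrite it as $\mathrm{KL}(q\,\|\,\qt)$ plus a constant, and then use Gibbs' inequality \citep{cover2006elements}. Write $Z = 1 - \sum_{r\in\Rt}\pt(r) = \sum_{v\in S_t}\pt(v)$.

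\textbf{Step 1: $Z>0$.} We need $Z>0$ so that $\qt$ is well defined. Since $u_t \in \Et$, we have $\pt(u_t)\ge\tau>0$. Also $u_t\in S_t$, so $Z \ge \tau > 0$. By \eqref{eq:xtc}, $\qt(v) = \pt(v)/Z$ on $S_t$ and $\qt=0$ on $\Rt$. Hence $\qt$ is a probability distribution supported on $S_t$.

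\textbf{Step 2: the decomposition.} Take any $q$ with $q(v)=0$ for $v\in\Rt$. If $q$ puts mass on some $v\in S_t$ with $\pt(v)=0$, then $\mathrm{KL}(q\,\|\,\pt)=+\infty$, and $q$ cannot be a minimizer because $\mathrm{KL}(\qt\,\|\,\pt)$ is finite. So assume $q\ll\pt$. On the support of $q$ we have $\pt(v) = Z\,\qt(v)$, which gives
\begin{equation*}
\mathrm{KL}(q\,\|\,\pt) = \sum_{v\in S_t} q(v)\log\frac{q(v)}{Z\,\qt(v)} = \mathrm{KL}(q\,\|\,\qt) - \log Z,
\end{equation*}
using $\sum_{v\in S_t} q(v)=1$.

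\textbf{Step 3: conclusion.} The term $-\log Z$ does not depend on $q$. Gibbs' inequality gives $\mathrm{KL}(q\,\|\,\qt)\ge 0$, with equality if and only if $q=\qt$. Therefore $\qt$ is the unique minimizer, and the minimum value is $-\log Z = -\log\bigl(1-\sum_{r\in\Rt}\pt(r)\bigr)$. This also gives the KL cost of the intervention in terms of the removed mass of Proposition~\ref{prop:mass}.

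The main obstacle is not the algebra but the edge cases. We must handle zero-probability tokens in $S_t$, through the absolute-continuity argument and the convention $0\log 0=0$. We must also ensure $Z>0$, which relies on the retained underdog $u_t$ clearing the threshold $\tau>0$. Uniqueness rests entirely on the strictness of Gibbs' inequality, so we cite the equality condition explicitly. The argument is the standard I-projection onto a restricted support \citep{csiszar1975divergence}, specialized to a face of the simplex.
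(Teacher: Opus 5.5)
Your proof is correct, but it takes a different route from the paper's. The paper writes the objective as $\sum_{v\in S_t} q(v)\log(q(v)/\pt(v))$ and applies Lagrange multipliers for the simplex constraint. This gives the stationary point $q(v)=c\,\pt(v)$ with $c=1/\sum_{v\in S_t}\pt(v)$, and the paper then cites strict convexity of $\mathrm{KL}$ for uniqueness. You instead prove the exact identity $\mathrm{KL}(q\,\|\,\pt)=\mathrm{KL}(q\,\|\,\qt)-\log Z$ for every feasible $q$. Since $\mathrm{KL}(\qt\,\|\,\pt)=-\log Z$, this is the Pythagorean relation $\mathrm{KL}(q\,\|\,\pt)=\mathrm{KL}(q\,\|\,\qt)+\mathrm{KL}(\qt\,\|\,\pt)$ holding with equality for this linear family. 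The equality case of Gibbs' inequality then gives existence, global optimality, and uniqueness at once, so you never need to argue that a constrained stationary point is a global minimum.

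Your version is also tighter at the edges. The paper never checks that its normalizer is nonzero, and your bound $Z\ge\pt(u_t)\ge\tau>0$ supplies this. Its stationarity condition involves $\log(q(v)/\pt(v))$, which is undefined at tokens of $S_t$ with $\pt(v)=0$. That case can genuinely occur, because the paper lets $\pt$ come from an upstream sampler stack that may already have truncated the tail, and your absolute-continuity step handles it. You also obtain the optimal value $-\log Z=-\log(1-M_t(\tau))$ on activation, which ties the projection cost to the removed mass of Proposition~\ref{prop:mass}.

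In exchange, the Lagrangian route is a generic recipe that carries over to other constraint sets (e.g.\ moment constraints, which yield exponential tilts). Your decomposition relies on the special structure of a pure support restriction.
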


\subsection{Structural properties}
\label{app:properties}

\xtc{} has five structural properties that distinguish it from existing samplers.

\textbf{No-op criterion.}
If fewer than two tokens exceed the threshold, or if the Bernoulli draw does not fire, \xtc{} returns the input distribution unchanged (Proposition~\ref{prop:noop}).
The operator is therefore inactive by default and intervenes only when the head is genuinely ambiguous.

\textbf{Relative-odds invariance.}
When \xtc{} activates, it preserves the ratio of probabilities among all surviving tokens (Proposition~\ref{prop:odds}).
The intervention is a sparse support edit followed by renormalization, with no distortion of relative preferences among the tokens that remain.

\textbf{Expected removed mass.}
The expected mass removed at a given step equals $\rho$ times the total eligible mass minus the minimum eligible probability (Proposition~\ref{prop:mass}), isolating the operator's effective strength at each step.

\textbf{Threshold monotonicity.}
Increasing $\tau$ can only shrink the eligible set (Proposition~\ref{prop:threshold}), weakly decreasing both the number of removable tokens and the removable mass.

\textbf{KL-projection interpretation.}
When \xtc{} activates, the resulting distribution is the unique minimizer of $\mathrm{KL}(q \,\|\, \pt)$ among all distributions supported on the surviving tokens (Proposition~\ref{prop:iprojection}, citing \citet{csiszar1975divergence, cover2006elements, shannon1948mathematical}).
This gives the support edit a principled information-geometric status.

\paragraph{Head multiplicity differs from entropy.}
\xtc{} responds to a different signal than entropy-based decoding controls.
Entropy is a global summary of dispersion, while \xtc{} asks whether multiple tokens each individually exceed an absolute floor.
A distribution can have modest entropy yet contain two strong head tokens (activating \xtc{}), or high entropy from a diffuse tail with only one token above threshold (leaving \xtc{} inactive).
\xtc{} therefore intervenes only when the model already signals the existence of several plausible local branches.
\FloatBarrier

\section{Proofs}
\label{app:proofs}

\begin{proof}[Proof of Proposition~\ref{prop:noop}]
By the definition of \xtc{}, the operator first checks whether $|\Et| \ge 2$.
If not, it returns the input distribution unchanged.
If $|\Et| \ge 2$, the operator next samples a Bernoulli random variable with success probability $\rho$.
If that variable is $0$, the operator again returns the input distribution unchanged.
Therefore in either case $\qt = \pt$.
\end{proof}

\begin{proof}[Proof of Proposition~\ref{prop:odds}]
Assume \xtc{} activates and removes $\Rt$.
For any surviving token $v \notin \Rt$,
\[
\qt(v) = \frac{\pt(v)}{Z_t},
\qquad
Z_t = 1 - \sum_{r \in \Rt}\pt(r).
\]
Similarly, for any surviving token $w \notin \Rt$ with $\pt(w) > 0$, $\qt(w) = \pt(w)/Z_t$.
Taking the ratio gives $\qt(v)/\qt(w) = \pt(v)/\pt(w)$.
\end{proof}

\begin{proof}[Proof of Proposition~\ref{prop:mass}]
If $|\Et| < 2$, the operator never removes any token, so the removed mass is $0 = \rho M_t(\tau)$.
Now suppose $|\Et| \ge 2$.
Conditional on activation, \xtc{} removes exactly the set $\Rt = \Et \setminus \{u_t\}$.
The removed mass is
\[
\sum_{r \in \Rt}\pt(r) = \sum_{v \in \Et}\pt(v) - \min_{v \in \Et}\pt(v) = M_t(\tau).
\]
Since activation occurs with probability $\rho$, the unconditional expectation is $\rho M_t(\tau)$.
\end{proof}

\begin{proof}[Proof of Proposition~\ref{prop:threshold}]
Take any token $v \in E_t(\tau_2)$.
By definition $\pt(v) \ge \tau_2 \ge \tau_1$, so $v \in E_t(\tau_1)$.
Hence $E_t(\tau_2) \subseteq E_t(\tau_1)$ and the removable set and mass can only weakly shrink.
\end{proof}

\begin{proof}[Proof of Proposition~\ref{prop:iprojection}]
Fix the surviving support $S_t = \Vocab \setminus \Rt$ and consider
$\mathcal{Q}(S_t) = \{ q : \sum_{v \in S_t} q(v) = 1,\; q(v) \ge 0,\; q(v)=0 \text{ for } v \notin S_t \}$.
For any $q \in \mathcal{Q}(S_t)$,
\[
\mathrm{KL}(q \,\|\, \pt) = \sum_{v \in S_t} q(v)\log \frac{q(v)}{\pt(v)}.
\]
Minimizing subject to the simplex constraint via Lagrange multipliers yields $q(v) = c \, \pt(v)$ for all $v \in S_t$, where $c = 1/\sum_{v \in S_t} \pt(v)$.
Therefore $q(v) = \pt(v)/(1 - \sum_{r \in \Rt}\pt(r))$, which is exactly $\qt(v)$.
Uniqueness follows from strict convexity of $\mathrm{KL}$.
\end{proof}

\subsection{Diagnostic quantity}

Proposition~\ref{prop:mass} suggests a natural diagnostic: the \emph{expected removed mass} $\rho M_t(\tau)$.
This scalar measures how interventionist \xtc{} is at a given decoding step, distinguishing between steps where the operator swaps two near-tied head options and steps where it removes most of the head mass.
\FloatBarrier

\section{Limitations, broader impacts, and outlook}
\label{sec:discussion}

\subsection{Limitations}
\label{sec:limitations}

\xtc{} is not a universally beneficial decoding rule.
Its central bias is to move generation away from the most probable eligible continuation, and that bias can be counterproductive when the highest-probability token is exactly the one the task needs.

First, \xtc{} can hurt tasks that demand exactness rather than stylistic variation, such as extraction, constrained formatting, code generation, mathematical reasoning with brittle intermediate states, or safety-critical instruction following.
The IFEval characterization (Section~\ref{sec:exp_ifeval}) and the structured-quality safety sweeps (Appendix~\ref{app:results}) quantify this.
On code tasks, the eval pass rate degrades significantly above $\rho = 0.15$.
On structured extraction, the boundary is $\rho = 0.20$.
Deployments on alignment-critical workloads should stay within the operating boundaries identified in Section~\ref{sec:exp_ifeval}.

Second, \xtc{} is sensitive to parameterization.
The threshold $\tau$ is defined in absolute probability terms, so the same value can behave differently across model scales, tokenizer granularities, and upstream sampler stacks \citep{wiher2022decoding}.
A threshold that is conservative for one model may be intervention-heavy for another.
The activation probability $\rho$ should likewise be treated as task-dependent rather than as a universal knob.
Open-ended generation may tolerate much more intervention than extraction or code.
While our cross-model experiments (Appendix~\ref{app:crossmodel}) show that the qualitative effect direction is consistent, the optimal operating point may differ across models.

Third, \xtc{} interacts with formatting and termination behavior.
If newline, end-of-sequence, indentation, or schema-critical delimiter tokens enter the eligible set, removing them may destabilize structured outputs.
Practical deployments should therefore either gate \xtc{} by task type or protect a small set of termination and formatting tokens, as described in Algorithm~\ref{alg:xtc}.

Fourth, our empirical evaluation, while spanning three primary model families in the 12B--27B range plus a 70B scaling point (Llama 3.3 q4), is conducted entirely on quantized open-weight models \citep{dettmers2022gptq, lin2024awq}.
A broader evaluation across additional mixture-of-experts architectures \citep{minaee2024large, brown2020language, touvron2023llama2}, state-space models, and non-English languages would strengthen the generality claims.

Fifth, \xtc{} is a local decoding rule, not a substitute for model quality, calibration, or training-time anti-degeneration methods \citep{welleck2020neural, ouyang2022training}.
It can redirect a model among already plausible next tokens, but it cannot create competence that the base model does not have.
In particular, it cannot address corpus-level homogenization that originates in training data overlap or RLHF reward hacking \citep{anderson2024homogenization, kirk2024understanding, bai2022training}.

On DeepSeek R1, the chat template injects a native reasoning-trace prefix. We re-ran the cross-model creative configuration with \texttt{max\_tokens}=8192 and stripped the \texttt{<think>...</think>} prefix before computing metrics, so the DeepSeek row in Table~\ref{tab:crossmodel} reflects post-reasoning answer text rather than reasoning-trace content.
The paired-delta direction and significance hold under this corrected basis (Distinct-2 $+13.6\%$ [$+10.2$,$+16.5$], repeat trigram $-0.017$ [$-0.024$,$-0.009$]), confirming that \xtc{}'s effect is not an artifact of the reasoning-trace measurement window.

\subsection{Broader impacts}
\label{sec:broader}

The positive case for \xtc{} is straightforward.
Many real uses of language models benefit from controlled diversity: creative writing support \citep{yuan2022wordcraft, ippolito2023creative}, ideation, exploratory dialogue systems, synthetic prompt generation, and evaluation of model uncertainty under alternative continuations.
A lightweight head-aware sampler could make these systems less templated without requiring model retraining.
The AMT human study (Section~\ref{sec:exp_human_eval}) provides direct evidence that the diversity gain is human-perceptible and does not come at the cost of fluency.

There are also clear downside risks.
Any method that increases variety in generated language can also make harmful generations less repetitive and more varied, including spam, deception, or disinformation.
\xtc{} does not change the underlying model's capabilities, but it can alter the style and breadth of outputs in ways that may improve adversarial misuse.
Responsible deployment therefore requires the same safeguards expected of other generation-time controls: domain-appropriate safety filters, task gating, and evaluation on harmful-use prompts in addition to benign ones.

\subsection{Practical safeguards}

For high-fidelity deployments, three safeguards are especially important.
\begin{itemize}[leftmargin=1.25em]
\item \textbf{Task gating.} Reserve \xtc{} primarily for open-ended or diversity-seeking modes rather than structured extraction or high-stakes factual tasks. The IFEval analysis (Section~\ref{sec:exp_ifeval}) provides a quantitative basis for setting per-task $\rho$.
\item \textbf{Protected tokens.} Exempt end-of-sequence, newline, and schema-critical tokens from removal, or skip activation when they would be removed.
\item \textbf{Transparent logging.} Record activation statistics, eligible-set sizes, and removed mass so that \xtc{} behavior can be inspected rather than treated as an opaque randomness knob.
\end{itemize}

\subsection{Outlook}
\xtc{} exposes a part of the decoding design space that is largely orthogonal to tail truncation and global entropy control: the head-ambiguity regime where the model is already uncertain among strong options but still defaults to the safest one.
By formalizing this regime and providing an operator that targets it specifically, we hope to encourage further exploration of head-aware decoding strategies \citep{minaee2024large, zhao2023survey}.
All code, evaluation infrastructure, and experimental configurations are publicly available at \url{<redacted for anonymity>}.
\FloatBarrier

\section{Cross-model generalization}
\label{app:crossmodel}

\begin{table}[h]
\centering
\small
\setlength{\tabcolsep}{5pt}
\begin{tabular}{@{}lccc c ccc@{}}
\toprule
& \multicolumn{3}{c}{Distinct-2 $\uparrow$} && \multicolumn{3}{c}{Repeat trigram rate $\downarrow$} \\
\cmidrule{2-4} \cmidrule{6-8}
Model family & Base & \xtc{} & $\Delta\%$ [95\% CI] && Base & \xtc{} & $\Delta$ [95\% CI] \\
\midrule
Gemma 3 12B (q6)$^{\dagger}$    & 0.704 & 0.784 & $+11.4$ \scriptsize{[$+7.4,+17.6$]}  && 0.047 & 0.025 & $-0.022$ \scriptsize{[$-0.042,-0.006$]} \\
DeepSeek R1 14B (q6)$^{\ddagger}$ & 0.615 & 0.751 & $+13.6$ \scriptsize{[$+10.2,+16.5$]} && 0.038 & 0.021 & $-0.017$ \scriptsize{[$-0.024,-0.009$]} \\
Gemma 3 27B (q4)         & 0.609 & 0.689 & $+13.1$ \scriptsize{[$+11.0,+18.1$]} && 0.082 & 0.049 & $-0.033$ \scriptsize{[$-0.047,-0.021$]} \\
Llama 3.3 70B (q4)       & 0.568 & 0.653 & $+15.1$ \scriptsize{[$+12.4,+19.8$]} && 0.040 & 0.029 & $-0.011$ \scriptsize{[$-0.027,-0.000$]} \\
\midrule
\textit{Average}         & \textit{0.624} & \textit{0.719} & \textit{$+13.3$} && \textit{0.052} & \textit{0.031} & \textit{$-0.021$} \\
\bottomrule
\end{tabular}
\caption{Cross-model generalization across four model families spanning three architectures (Gemma~3, Qwen-derived DeepSeek R1, Llama 3.3) and parameter counts from 12B to 70B. Means and 95\% paired-bootstrap CIs are computed over per-prompt deltas (24 prompts for the 27B and 70B runs, 10 for the 12B/14B cross-model runs). Distinct-2 is reported as paired percent change. Repeat trigram rate is reported as the absolute paired difference, since the small baseline values make percent change unstable. Every model shows a Distinct-2 CI that excludes zero. Repeat trigram improvements are also significant on three of the four families. The Llama 70B repeat-trigram CI just touches zero, consistent with that model's already-low baseline repetition. The \xtc{} strongest condition reported per family sits within the paper's operating region: $\rho \in \{0.75, 1.0\}$, $\tau \in \{0.05, 0.1\}$. \\
$^{\dagger}$ Cross-model run uses 10 prompts per condition (vs.\ 24 for the primary 27B/70B runs). \\
$^{\ddagger}$ DeepSeek R1's chat template injects a native reasoning-trace prefix on every response. We re-ran this configuration with \texttt{max\_tokens}=8192 and stripped the \texttt{<think>...</think>} prefix before metric computation, so Distinct-2 and repeat trigram rate here reflect the post-reasoning answer text.}
\label{tab:crossmodel}
\end{table}

Across four families, Distinct-2 increases monotonically by 11--15\% and repeat trigram rate decreases by 27--47\%.
Llama 3.3 70B q4 shows the largest Distinct-2 improvement (+15.1\%), extending the effect to a third architecture family and a parameter count 2.6$\times$ larger than Gemma 3 27B.
The repetition-reduction spread (27--47\%) is explained by the ceiling effect.
The 70B baseline already has a repeat trigram rate of 0.040, below Gemma 27B's 0.082, so there is less absolute headroom for further reduction even though Distinct-2 gains grow with scale.
Gemma~3 12B q6, with the highest baseline diversity (0.704) among the tested models, still benefits substantially from \xtc{} (+11.4\%), demonstrating that the mechanism provides gains regardless of starting diversity level.

\begin{figure}[h]
\centering
\includegraphics[width=\textwidth]{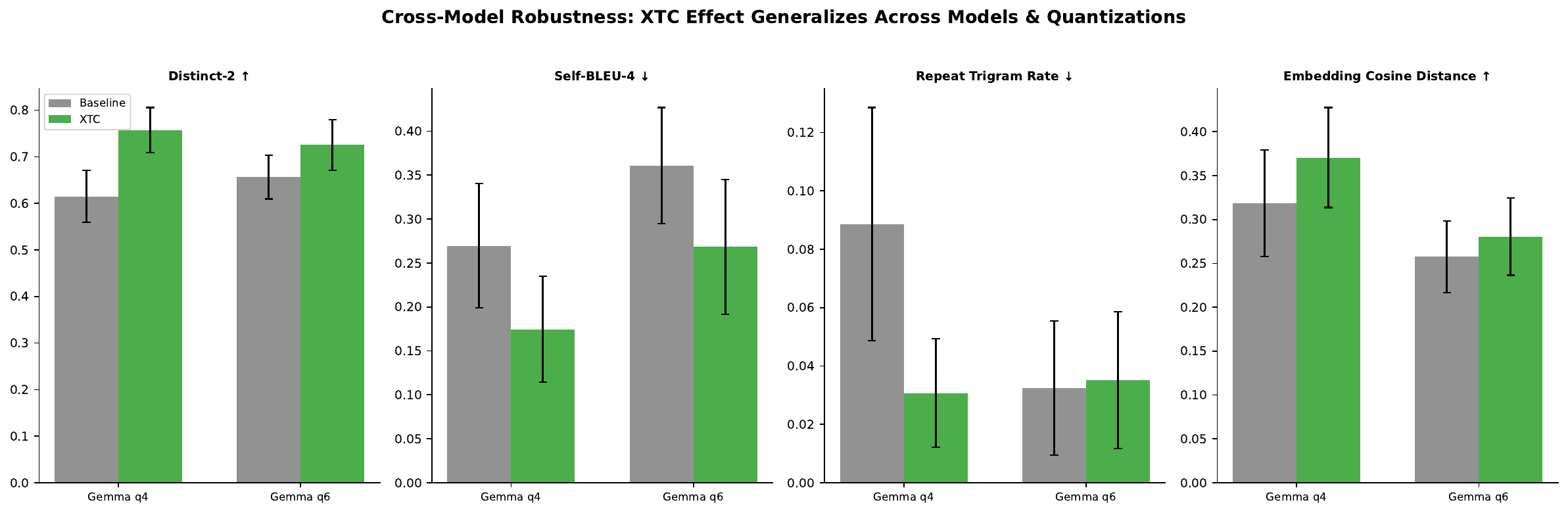}
\caption{Cross-model generalization: baseline vs.\ \xtc{} on the three primary model families (Gemma~3 27B q4, Gemma~3 12B q6, DeepSeek R1 14B q6). The Llama 3.3 70B q4 scaling validation is shown separately in Figure~\ref{fig:scaling}.}
\label{fig:crossmodel}
\end{figure}

\begin{figure}[h]
\centering
\includegraphics[width=0.78\textwidth]{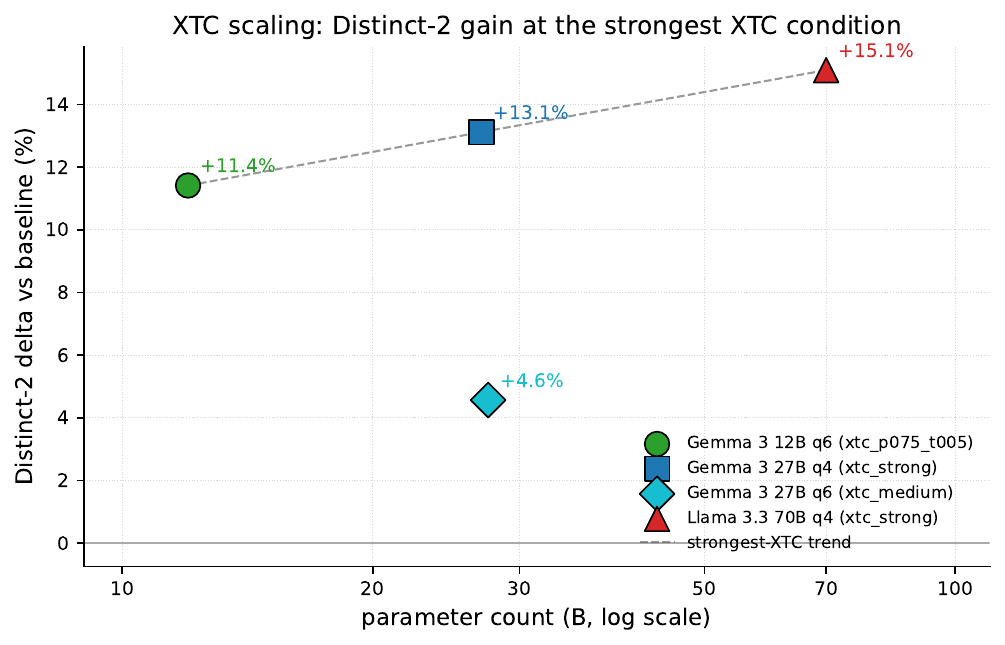}
\caption{\xtc{} scaling from 12B to 70B parameters on the strongest XTC condition available in each run. The strongest-XTC Distinct-2 delta is monotone increasing in parameter count on the three q4/q6 points plotted: Gemma~3 12B q6 ($\rho{=}0.75$, $\tau{=}0.05$) $+11.4\%$, Gemma~3 27B q4 xtc\_strong ($\rho{=}1.0$, $\tau{=}0.1$) $+13.1\%$, Llama 3.3 70B q4 xtc\_strong ($\rho{=}1.0$, $\tau{=}0.1$) $+15.1\%$. The dashed trend line connects the three strongest-XTC points.}
\label{fig:scaling}
\end{figure}

\begin{figure}[h]
\centering
\includegraphics[width=0.85\textwidth]{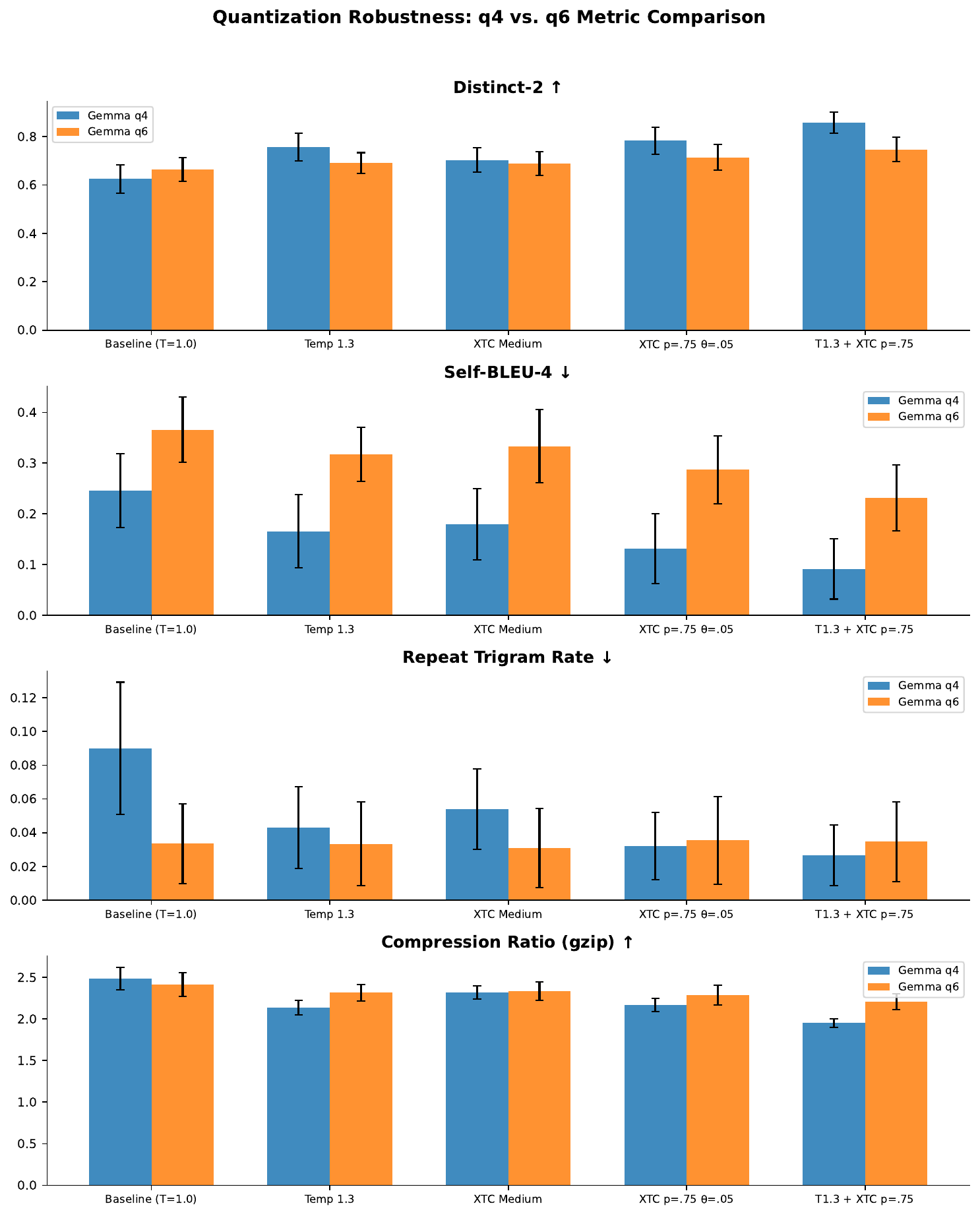}
\caption{Quantization robustness: Gemma~3 27B at q4 vs.\ q6. No qualitative divergence at any operating point.}
\label{fig:quant_compare}
\end{figure}

\begin{figure}[h]
\centering
\includegraphics[width=\textwidth]{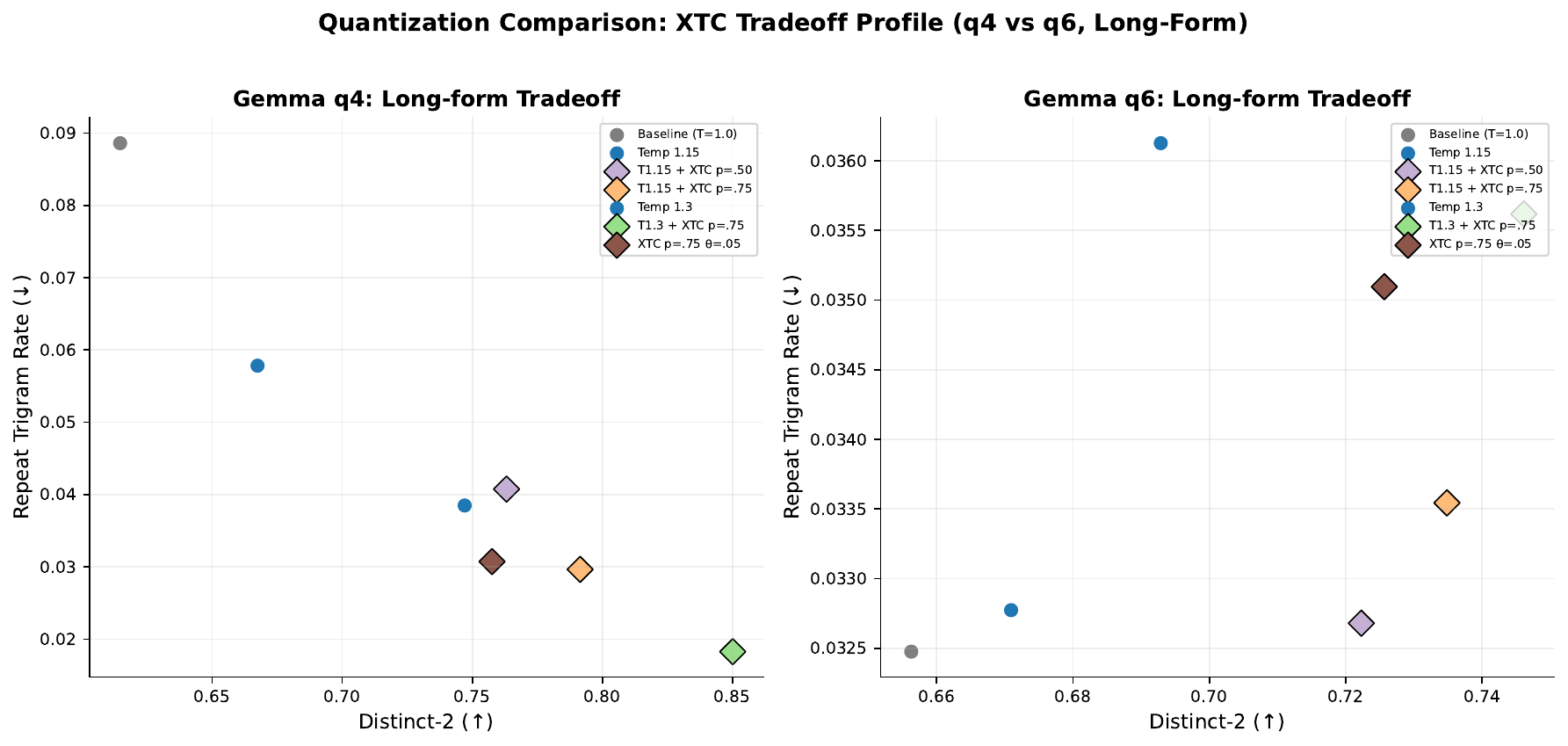}
\caption{Diversity vs.\ repetition tradeoff on long-form prompts for q4 and q6.}
\label{fig:q4q6_longform}
\end{figure}
\FloatBarrier

\subsection{DeepSeek R1 14B detailed results}

\begin{figure}[h]
\centering
\includegraphics[width=\textwidth]{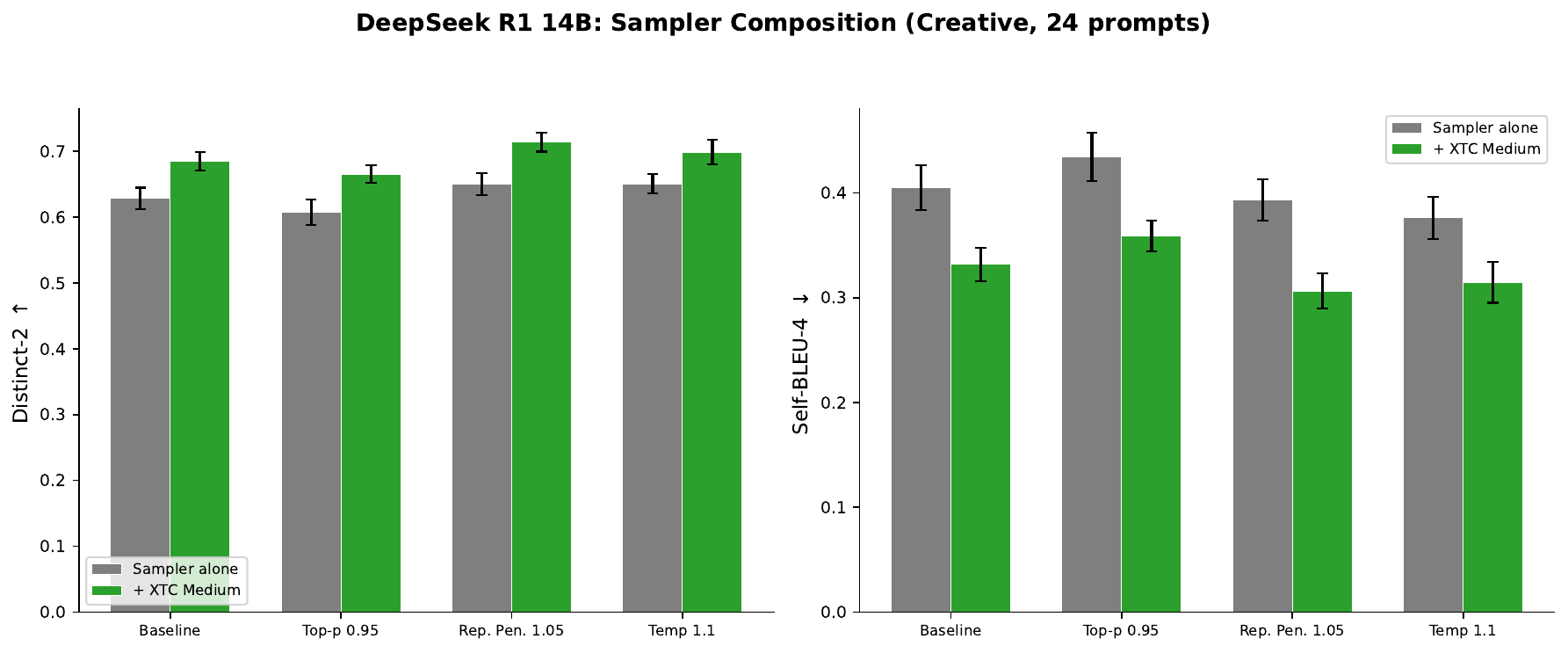}
\caption{DeepSeek R1 14B sampler composition on creative prompts.}
\label{fig:deepseek_creative_comp}
\end{figure}

\begin{figure}[h]
\centering
\includegraphics[width=\textwidth]{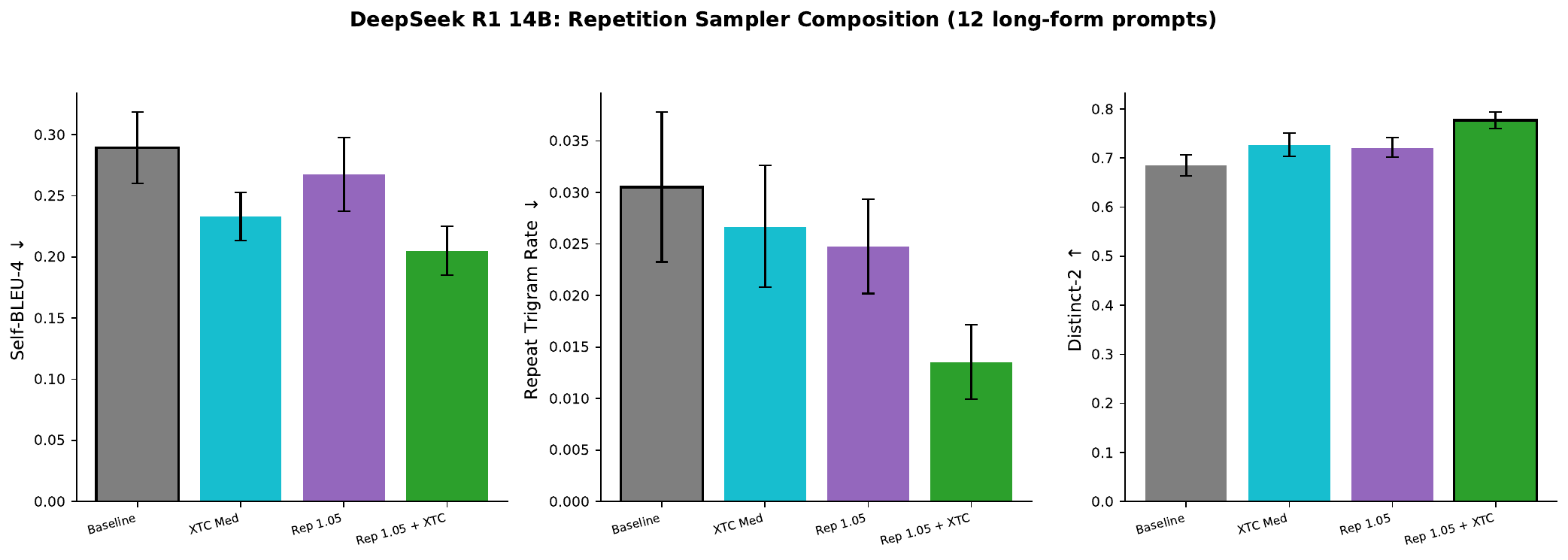}
\caption{DeepSeek R1 14B repetition composition.}
\label{fig:deepseek_rep_comp}
\end{figure}

\begin{figure}[h]
\centering
\includegraphics[width=\textwidth]{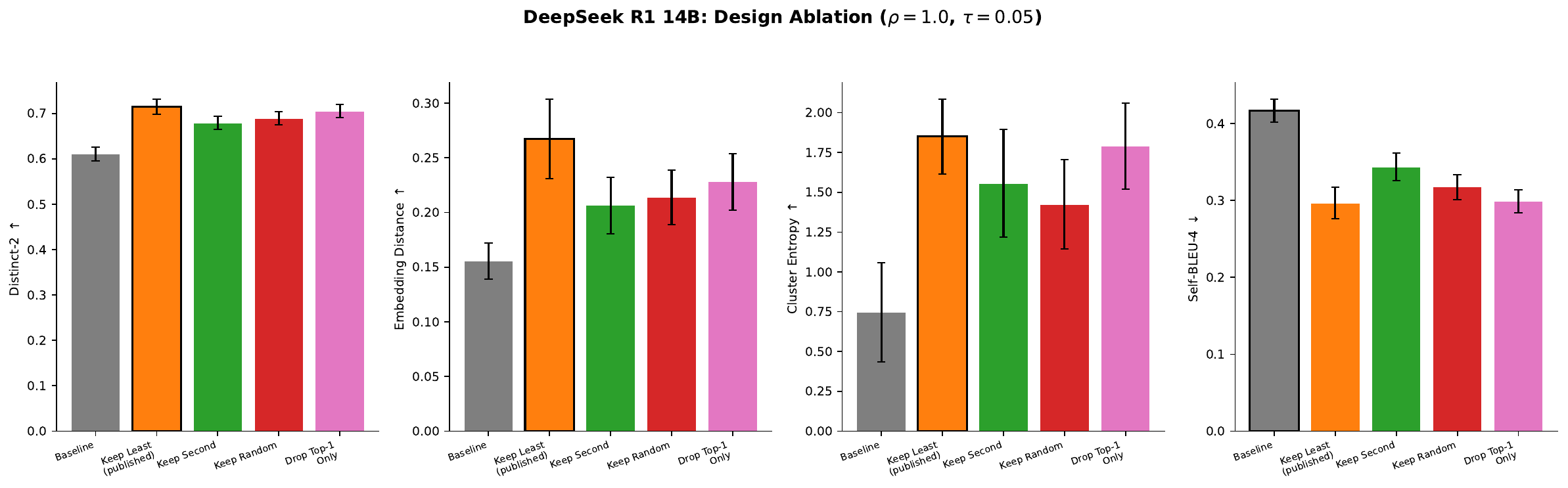}
\caption{Design ablation on DeepSeek R1 14B. Keep Least achieves the highest diversity across all four metrics.}
\label{fig:deepseek_ablation}
\end{figure}

\begin{figure}[h]
\centering
\includegraphics[width=\textwidth]{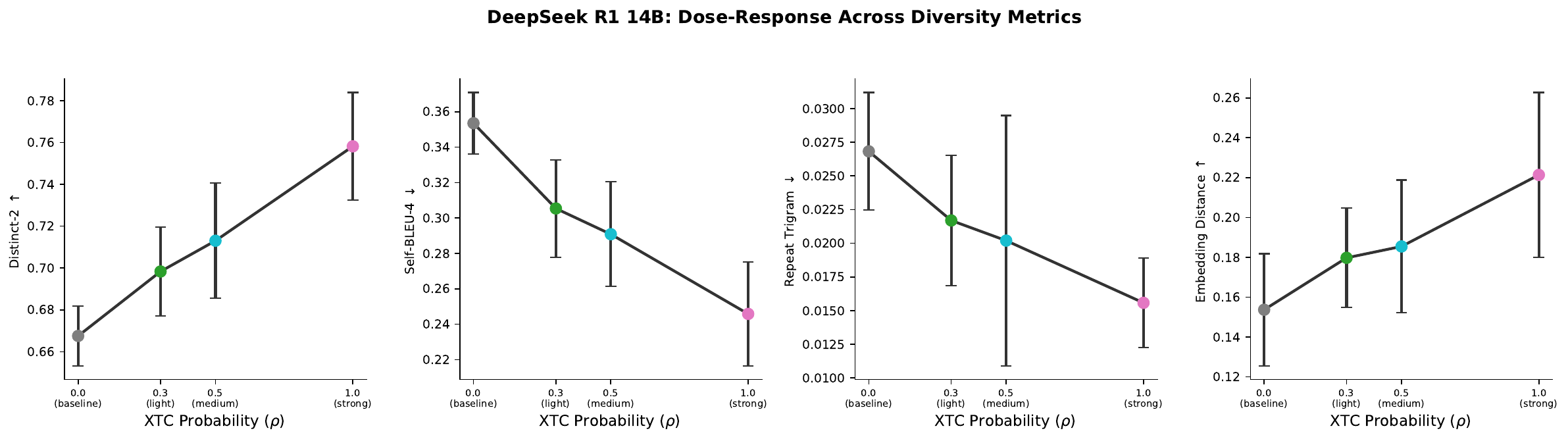}
\caption{Dose-response on DeepSeek R1 14B.}
\label{fig:deepseek_dose}
\end{figure}
\FloatBarrier

\subsection{Gemma 3 12B q6 detailed results}

\begin{figure}[h]
\centering
\includegraphics[width=\textwidth]{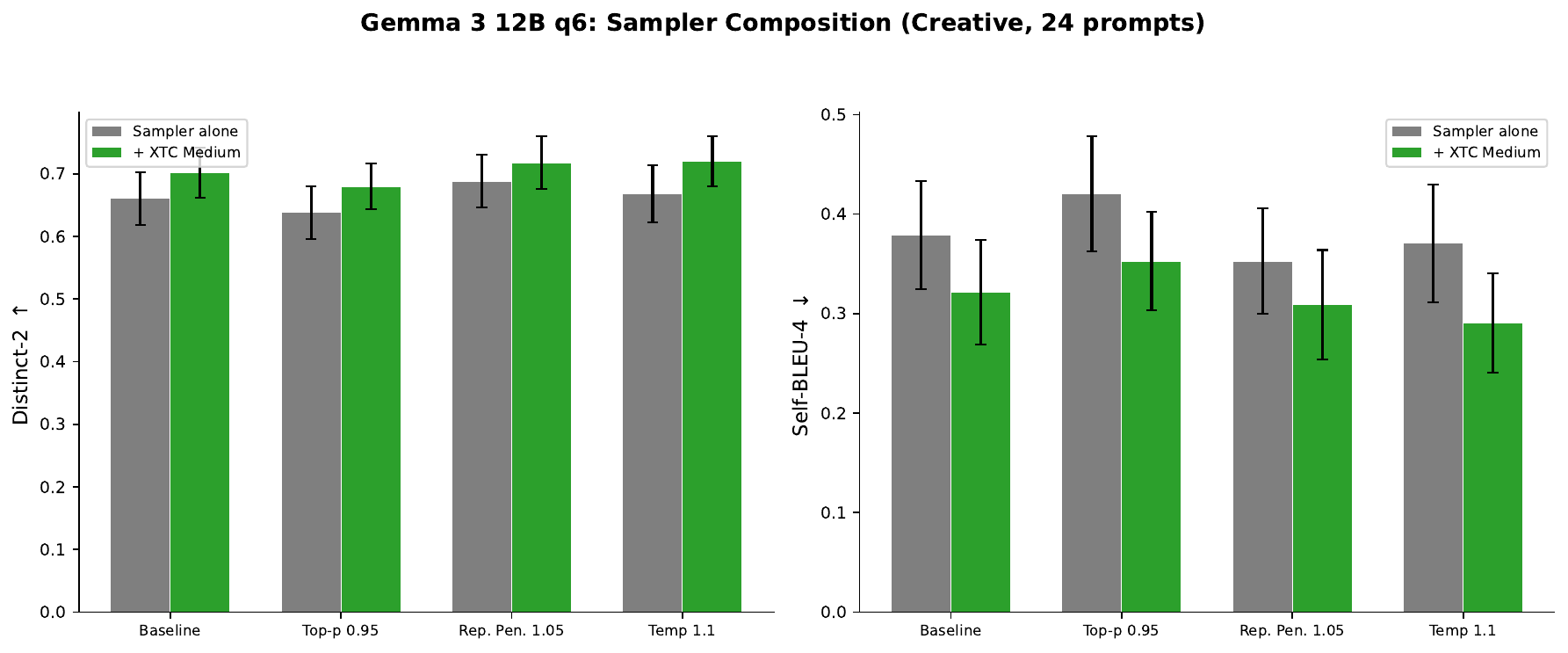}
\caption{Gemma 3 12B q6 sampler composition.}
\label{fig:gemma12b_creative_comp}
\end{figure}

\begin{figure}[h]
\centering
\includegraphics[width=\textwidth]{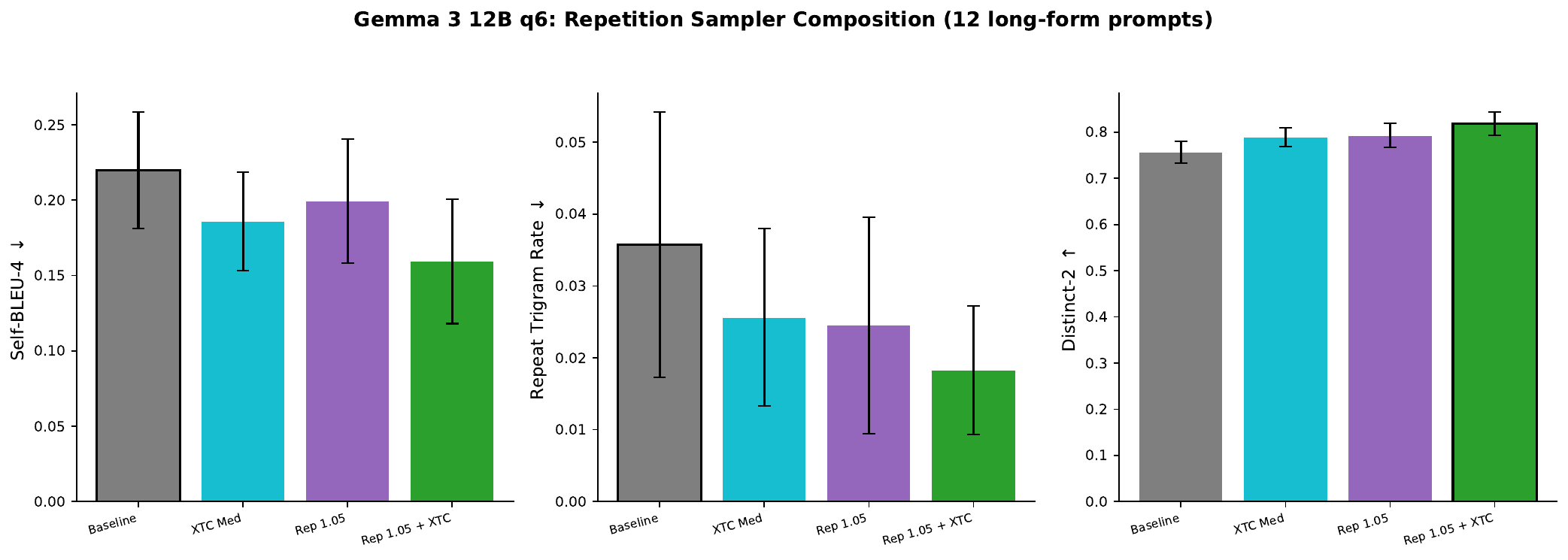}
\caption{Gemma 3 12B q6 repetition composition.}
\label{fig:gemma12b_rep_comp}
\end{figure}

\begin{figure}[h]
\centering
\includegraphics[width=\textwidth]{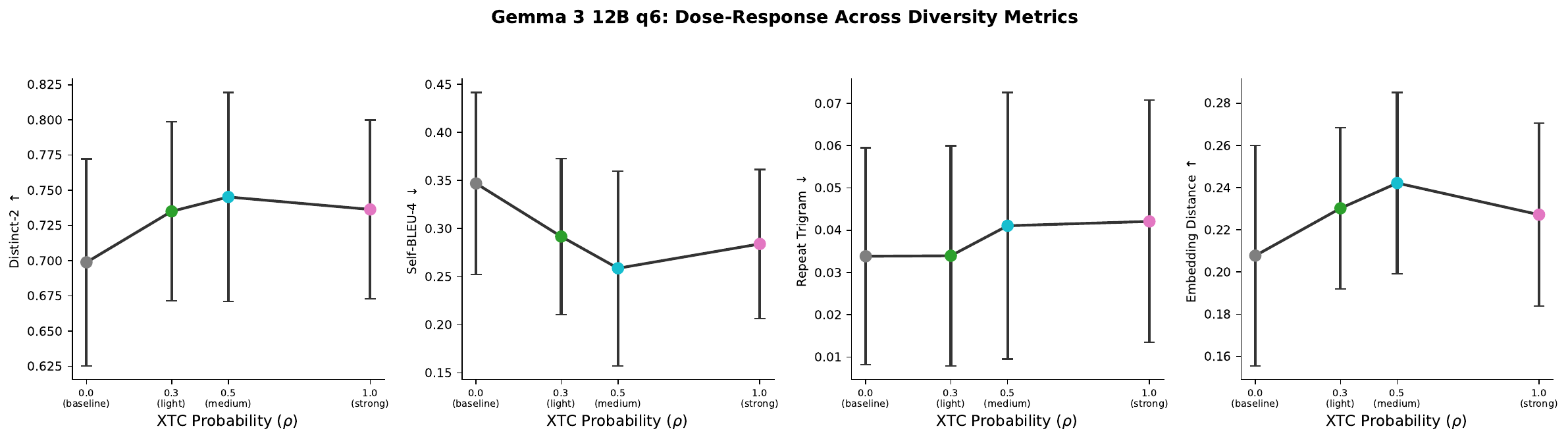}
\caption{Dose-response on Gemma 3 12B q6.}
\label{fig:gemma12b_dose}
\end{figure}
\FloatBarrier

\section{Design ablation}
\label{app:design_ablation}

\subsection{Design rationale and parameter discussion}
\label{app:design_rationale}

Why retain the \emph{least} probable eligible token over the second-most probable or a random alternative?
If several tokens are already individually likely, the dominant token is often the most conventional realization of the model's uncertainty.
Retaining only the weakest eligible alternative maximizes the displacement from the dominant head pattern while staying above the plausibility floor.
We validate this choice empirically below.

The two parameters play complementary roles.
$\tau$ determines what counts as a viable head alternative: higher $\tau$ shrinks the eligible set and makes \xtc{} more conservative.
$\rho$ controls how often the operator fires when viable alternatives exist.
This separation disentangles \emph{where} \xtc{} can act from \emph{how often} it acts.
\xtc{} is naturally compositional. It can be applied after temperature, repetition penalties, and tail truncation so that eligibility reflects the sampler stack's effective notion of plausibility.

\subsection{Empirical comparison of head-exclusion strategies}

A key question is whether the specific ``keep weakest eligible token'' rule matters, or whether any head-exclusion strategy would suffice.
We compare four strategies under matched parameters: Keep Least (the published \xtc{} rule), Keep Second (retain the second-most probable eligible token), Keep Random (retain a uniformly random eligible token), and Drop Top-1 Only (remove the single most probable token and keep all other eligible tokens).
Keep Least achieves the highest Distinct-2 (0.494 vs.\ 0.307 for the matched baseline, $+61$\%) and the largest reduction in repeat trigram rate (0.288 vs.\ 0.446, $-35$\%).
Drop Top-1 Only produces only partial improvement (Distinct-2 0.434, repeat trigram 0.330), confirming that effective head exclusion requires removing \emph{multiple} dominant tokens.
This result replicates on DeepSeek R1 14B (Keep Least Distinct-2 0.715 vs.\ baseline 0.610) and Gemma 12B q6 (Keep Least Distinct-2 0.576 vs.\ baseline 0.504).

\begin{figure}[h]
\centering
\includegraphics[width=\textwidth]{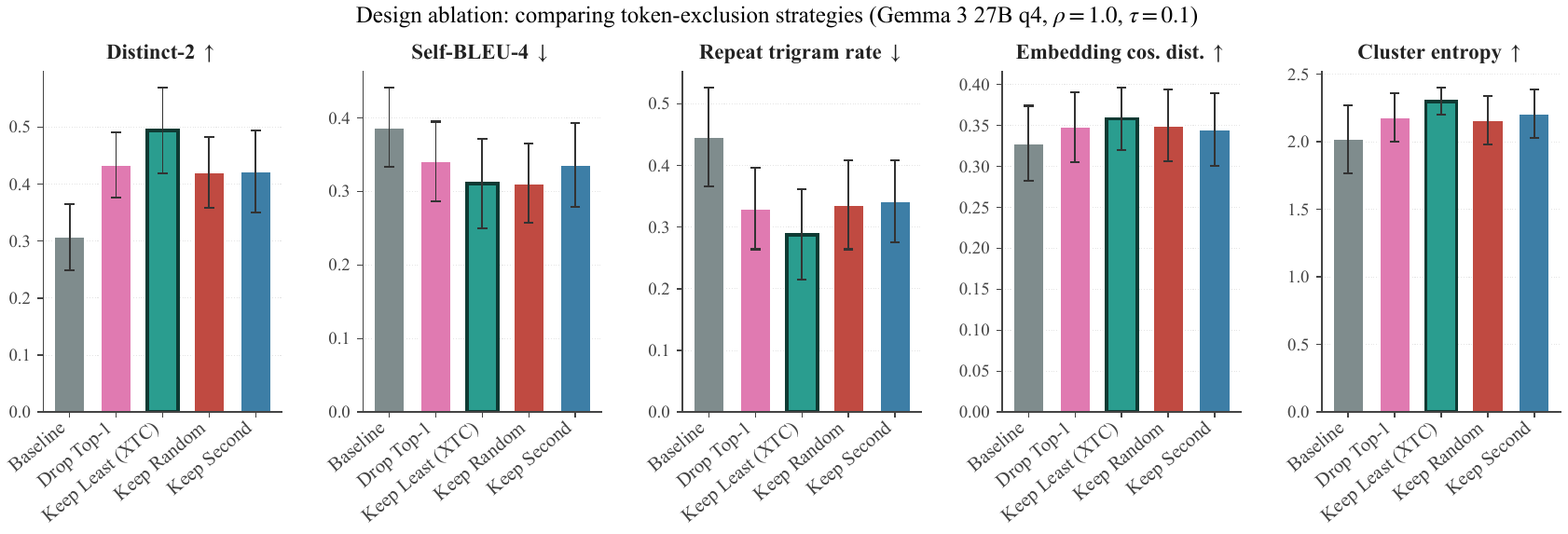}
\caption{Design ablation comparing four head-exclusion strategies under matched parameters (Gemma~3 27B q4, $\rho{=}1.0$, $\tau{=}0.1$). Keep Least (the published \xtc{} rule) achieves the highest Distinct-2, the lowest Self-BLEU-4, and the largest repeat-trigram reduction. Drop Top-1 Only produces only partial improvement, confirming that effective head exclusion requires removing multiple dominant tokens. The XTC default bar is highlighted with a thicker outline in each panel.}
\label{fig:ablation}
\end{figure}

\begin{figure}[h]
\centering
\includegraphics[width=\textwidth]{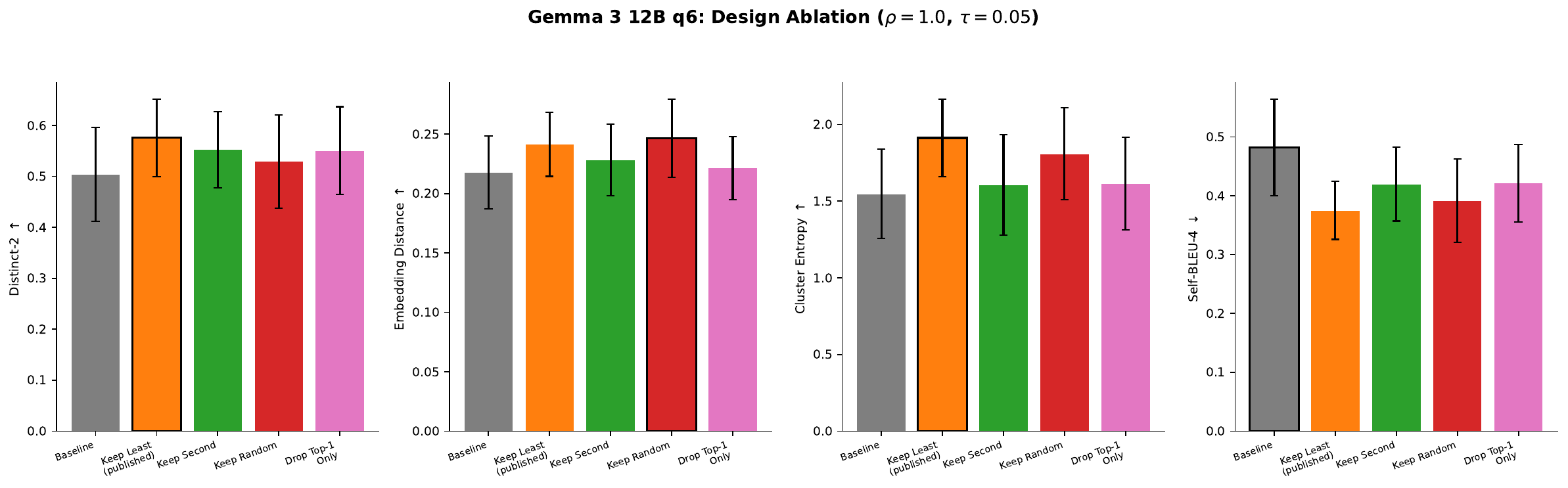}
\caption{Design ablation on Gemma 3 12B q6. Keep Least achieves the highest diversity.}
\label{fig:gemma12b_ablation}
\end{figure}
\FloatBarrier

\section{Composition with existing samplers}
\label{app:composition_body}

A key practical advantage of \xtc{} is compositionality (Figure~\ref{fig:composition}).
Pairing \xtc{} Medium with four base samplers (baseline, temperature 1.1, top-$p$ 0.95, repetition penalty 1.05) improves Distinct-2 and reduces Self-BLEU-4 in every case.
Full $3 \times 3$ factorials over three temperatures and three repetition-penalty levels, replicated across three model families (Appendix~\ref{app:interaction}), show that the gains are approximately additive throughout the grid.
Table~\ref{tab:extended} summarizes the ten-metric headline. The composition $T{=}1.3 + \xtc{}$ attains the best score on every metric.

\begin{figure}[h]
\centering
\includegraphics[width=\textwidth]{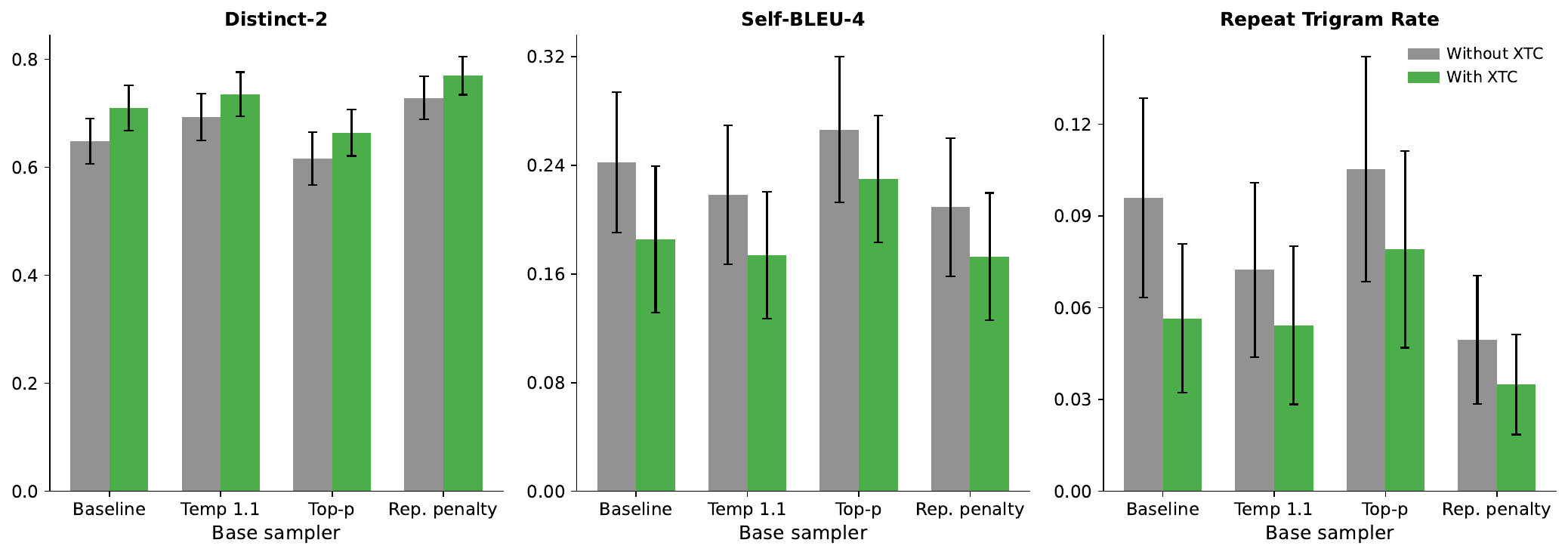}
\caption{Sampler composition: each sampler alone (gray) vs.\ paired with \xtc{} Medium (green). Adding \xtc{} improves Distinct-2 and reduces Self-BLEU-4 in \emph{every} pairing, with no instances of degradation.}
\label{fig:composition}
\end{figure}

\begin{table}[h]
\centering
\small
\setlength{\tabcolsep}{4pt}
\begin{tabular}{@{}lccccc@{}}
\toprule
& Baseline & Temp 1.3 & \xtc{} Med & XTC $\rho{=}.75$ & T1.3 + XTC \\
\midrule
Distinct-2 $\uparrow$ & 0.609 & 0.742 & 0.658 & 0.770 & \textbf{0.841} \\
Self-BLEU-4 $\downarrow$ & 0.273 & 0.191 & 0.225 & 0.152 & \textbf{0.113} \\
Repeat trigram $\downarrow$ & 0.094 & 0.054 & 0.071 & 0.038 & \textbf{0.027} \\
Embed.\ distance $\uparrow$ & 0.291 & 0.356 & 0.293 & 0.352 & \textbf{0.448} \\
Compress.\ gzip $\uparrow$ & 2.49 & 2.69 & 2.58 & 2.76 & \textbf{2.93} \\
Compress.\ xz $\uparrow$ & 2.95 & 3.23 & 3.06 & 3.29 & \textbf{3.47} \\
Homog.\ BLEU $\downarrow$ & 0.071 & 0.043 & 0.053 & 0.030 & \textbf{0.019} \\
Chamfer dist.\ $\uparrow$ & 0.183 & 0.244 & 0.188 & 0.274 & \textbf{0.350} \\
Template rate $\downarrow$ & 0.039 & 0.029 & 0.033 & 0.021 & \textbf{0.014} \\
Self-repetition $\downarrow$ & 0.058 & 0.037 & 0.042 & 0.025 & \textbf{0.016} \\
\bottomrule
\end{tabular}
\caption{Extended metric comparison (Gemma~3 27B q4). Arrows indicate preferred direction. The composition T=1.3 with \xtc{} ($\rho{=}0.75$, $\tau{=}0.05$) achieves the \textbf{best score on every metric}. \xtc{} alone at $\rho{=}0.75$ outperforms T=1.3 alone on five of ten metrics.}
\label{tab:extended}
\end{table}

The two interventions reinforce each other because they target complementary aspects of the distribution. Temperature flattens globally, while \xtc{} removes dominant head tokens selectively.
\FloatBarrier

\section{Sampler composition: interaction across models}
\label{app:interaction}

\begin{figure}[h]
\centering
\includegraphics[width=\textwidth]{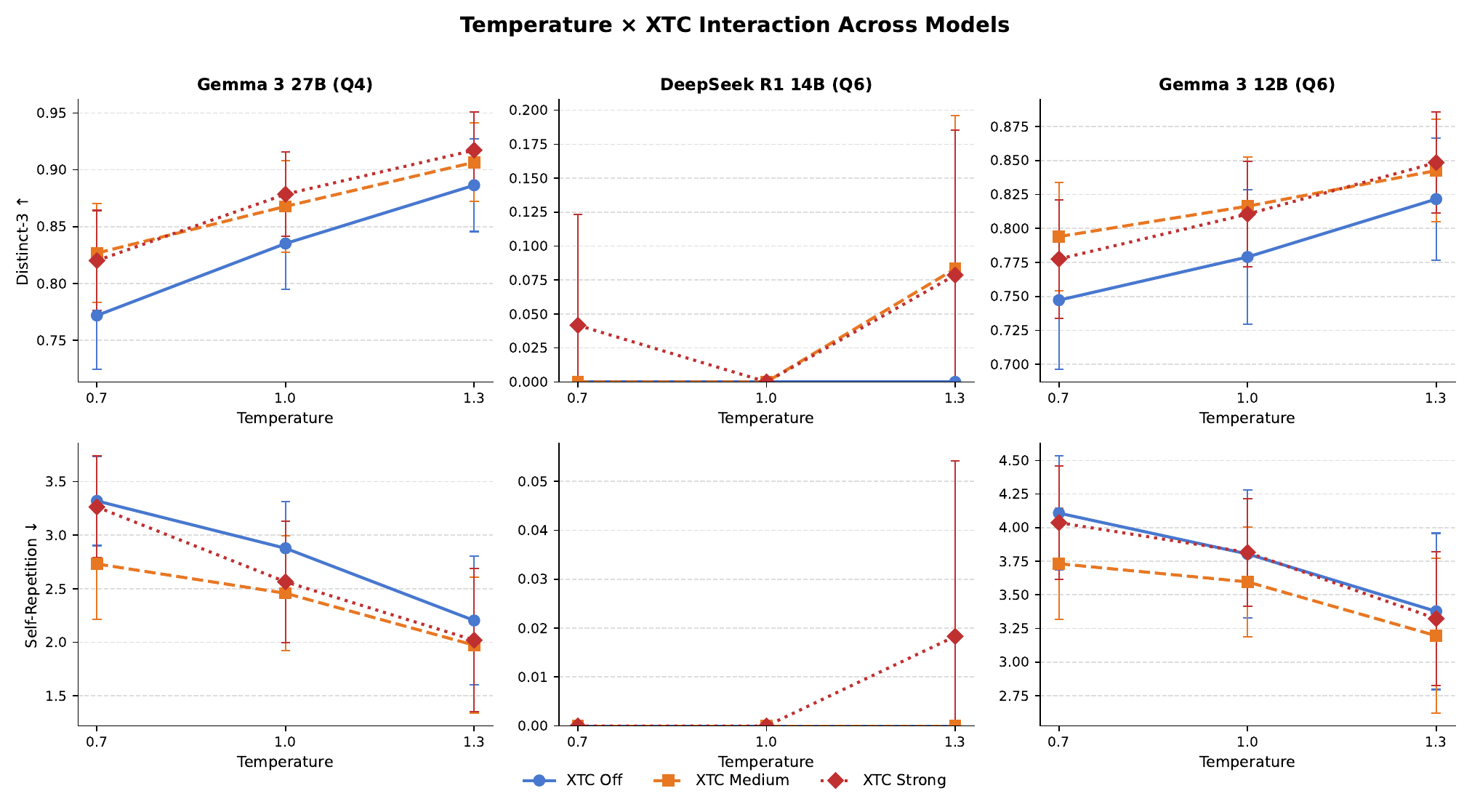}
\caption{Temperature $\times$ \xtc{} factorial interaction across all three model families (Gemma~3 27B q4, Gemma~3 12B q6, DeepSeek R1 14B q6, 24 prompts, 5 seeds each). At all three temperatures and across all three models, \xtc{} Medium and Strong improve Distinct-2 and reduce both Self-BLEU-4 and repeat trigram rate. The gains are approximately additive throughout, with no evidence of diminishing returns as temperature increases. DeepSeek panels are on the reasoning-trace basis (Table~\ref{tab:crossmodel} caption).}
\label{fig:temp_xtc_interaction}
\end{figure}

\begin{figure}[h]
\centering
\includegraphics[width=\textwidth]{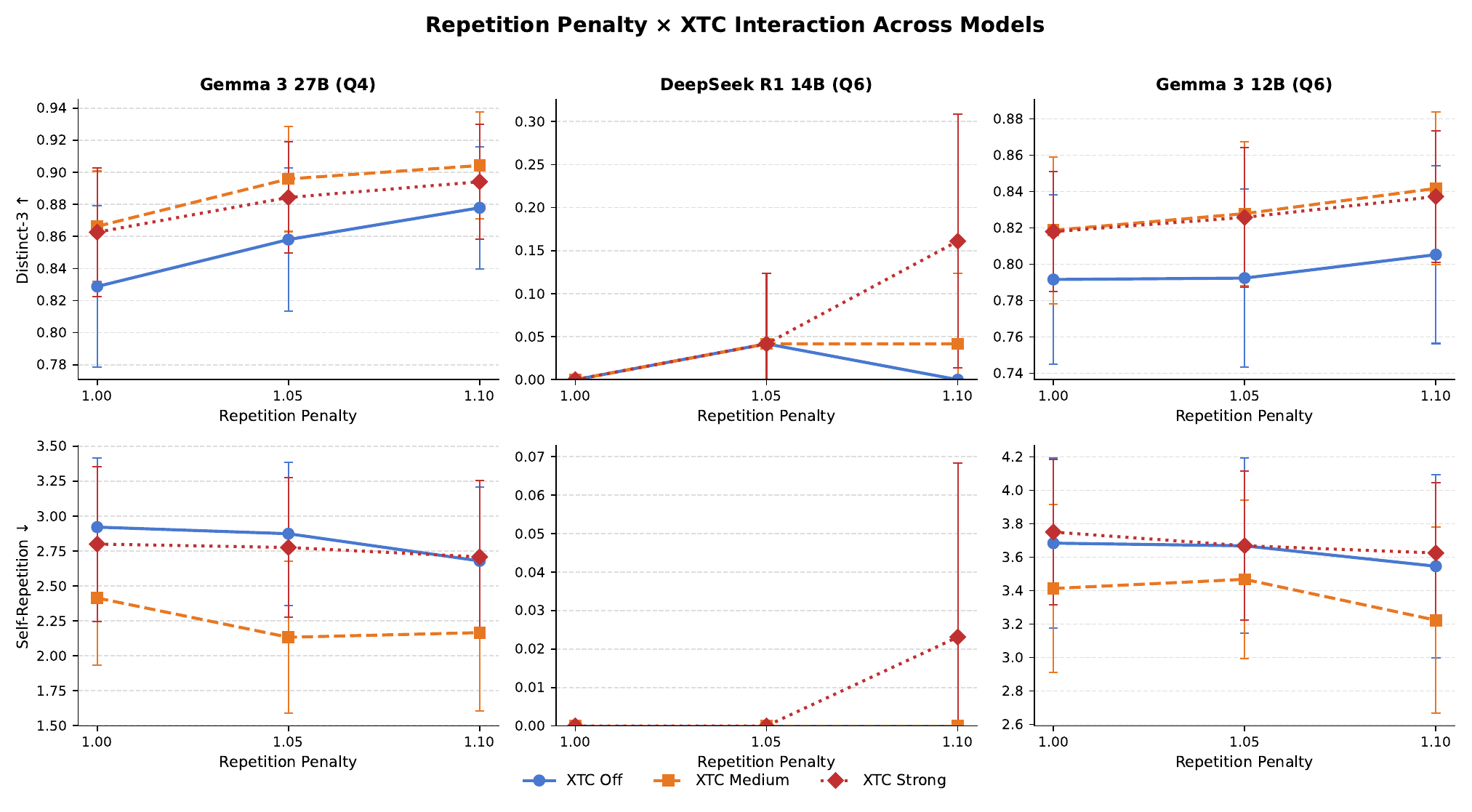}
\caption{Repetition penalty $\times$ \xtc{} factorial interaction across all three model families. Adding \xtc{} to a repetition penalty baseline produces further improvements on every metric and every model. The effects are approximately additive, with no evidence of saturation or interference between the two mechanisms.}
\label{fig:repen_xtc_interaction}
\end{figure}

\begin{figure}[h]
\centering
\includegraphics[width=\textwidth]{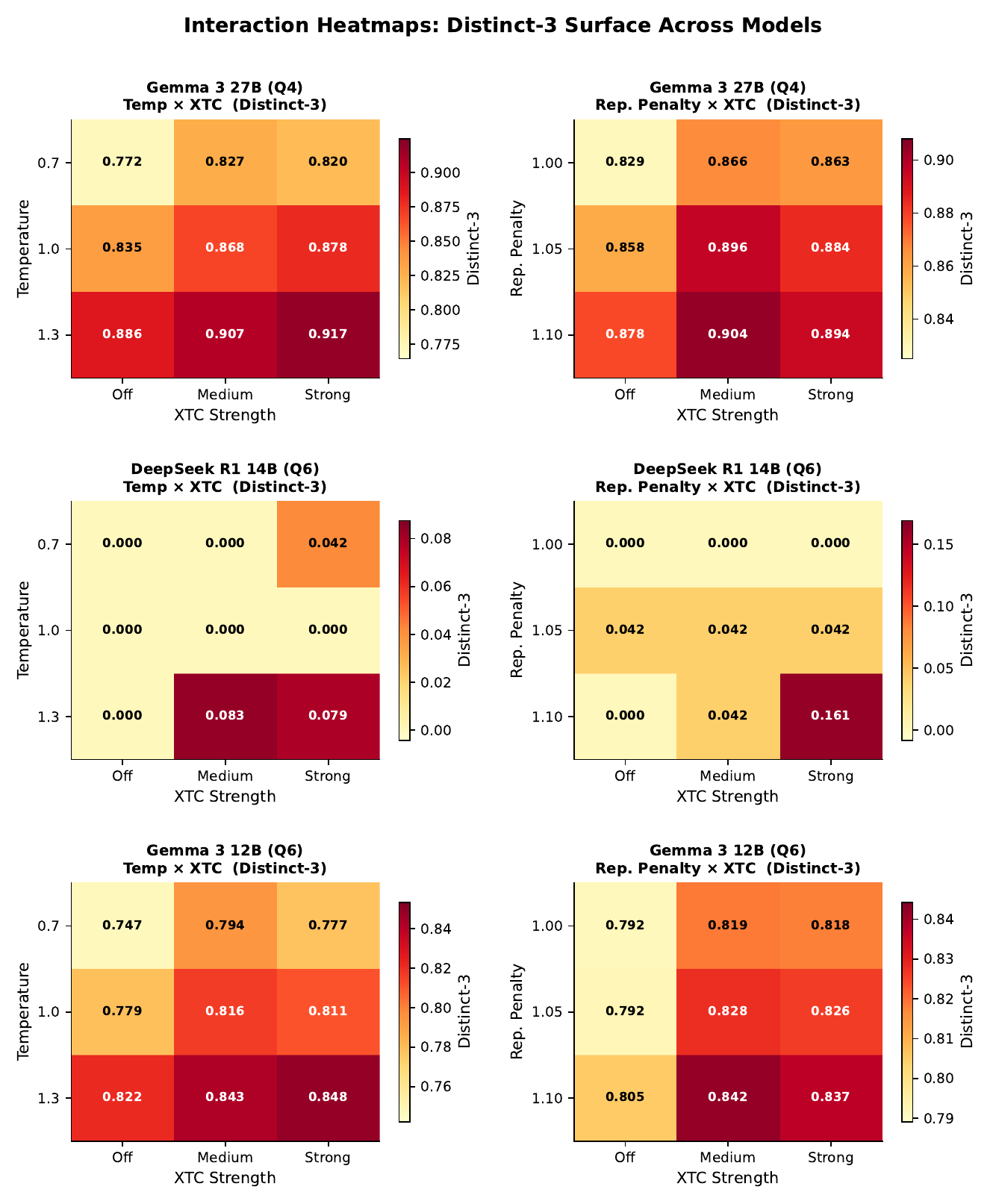}
\caption{Interaction heatmaps: Distinct-3 surface over the temperature $\times$ \xtc{} grid and the repetition penalty $\times$ \xtc{} grid for each of the three model families. Diagonal structure (improvement increasing along both axes independently) is consistent with additive composition.}
\label{fig:interaction_heatmaps}
\end{figure}

\begin{figure}[h]
\centering
\includegraphics[width=\textwidth]{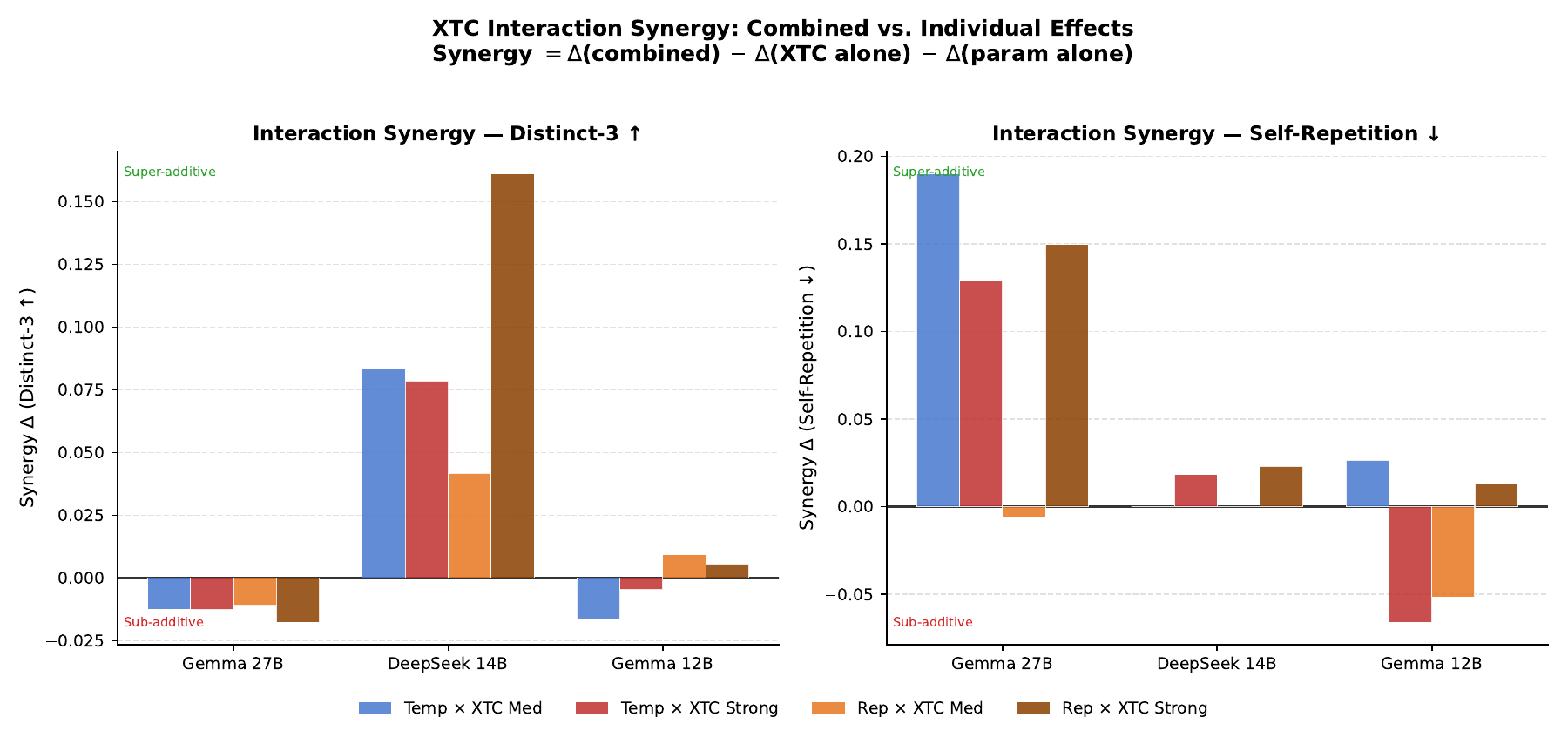}
\caption{Synergy analysis: measured joint improvement vs.\ the sum of individual effects. Bars near zero indicate additive composition. No model shows super-additive or sub-additive interaction at magnitude above noise.}
\label{fig:interaction_synergy}
\end{figure}
\FloatBarrier

\section{Additional experimental results}
\label{app:results}

\subsection{Statistical significance and win/loss analysis}

\begin{figure}[h]
\centering
\includegraphics[width=0.92\textwidth]{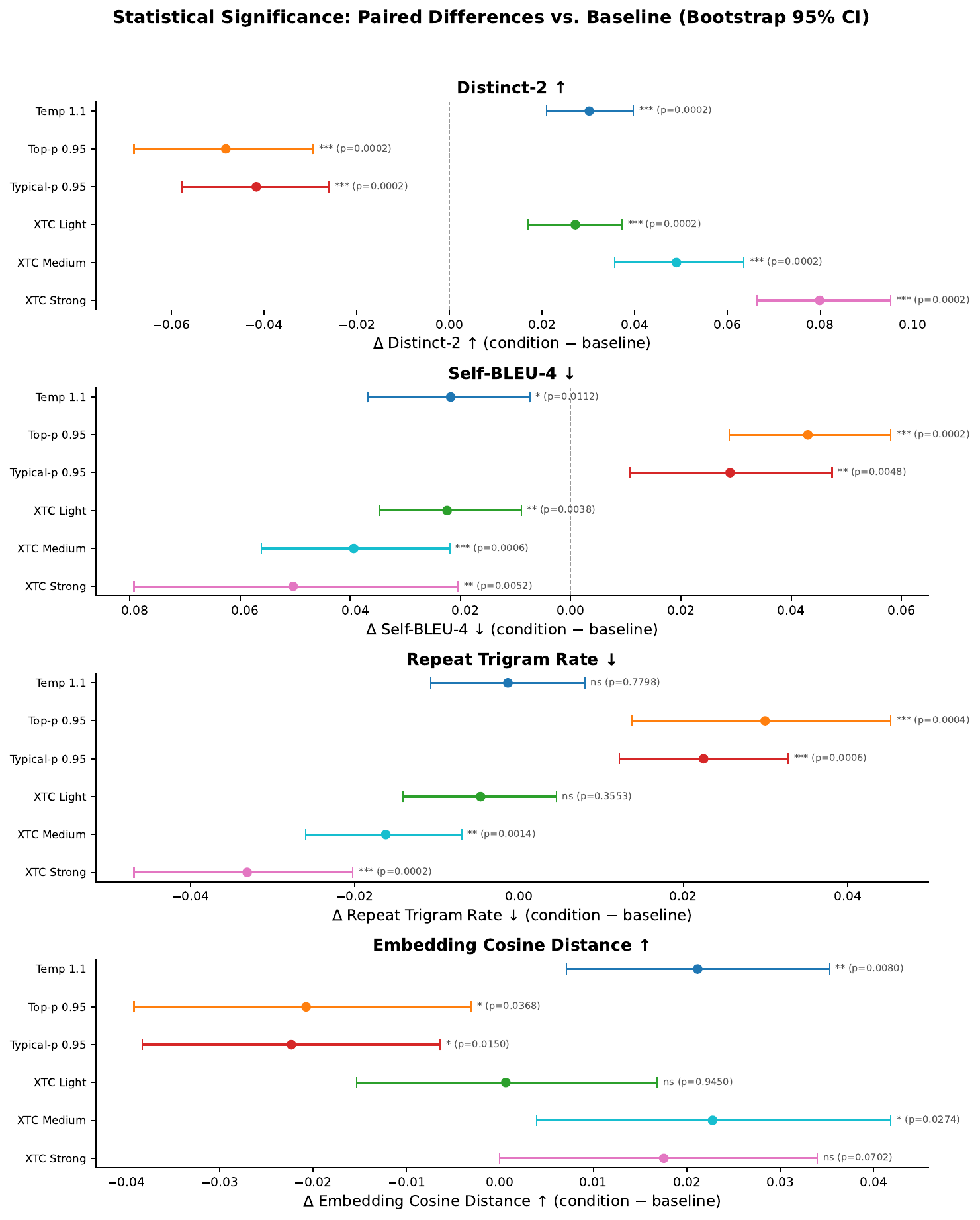}
\caption{Forest plot of paired differences vs.\ baseline with 95\% bootstrap confidence intervals and permutation test $p$-values. \xtc{} Medium and Strong both produce highly significant improvements on Distinct-2 ($p{<}0.001$ each) and significant improvements on Self-BLEU-4. Repeat trigram rate reductions are significant at every \xtc{} strength ($p{\le}0.002$).}
\label{fig:forest}
\end{figure}

\begin{figure}[h]
\centering
\includegraphics[width=\textwidth]{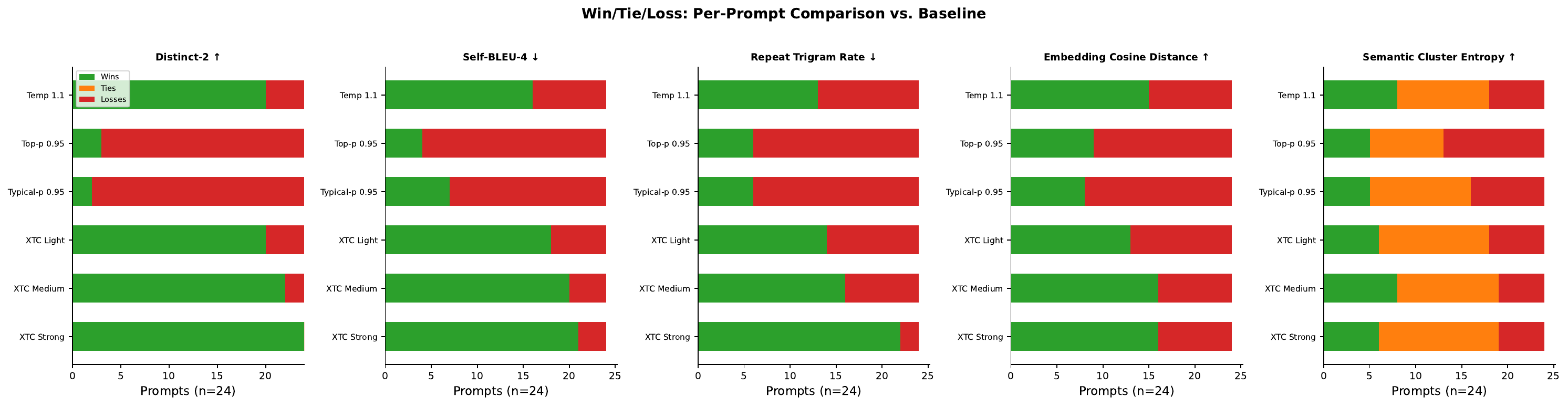}
\caption{Per-prompt win/tie/loss counts vs.\ baseline across 24 prompts on Gemma 3 27B q4. \xtc{} Strong wins on 24/24 prompts for Distinct-2 (no losses) and 22/24 for repeat trigram rate.}
\label{fig:wtl}
\end{figure}
\FloatBarrier

\subsection{Strong-baseline comparison and Pareto frontiers}
\label{app:strong_baselines}

\xtc{} is conceptually distinct from tail-truncation samplers, but the practical question is whether well-tuned tail samplers close the diversity gap.
We extended the comparator suite on the same 24-prompt creative pool (Gemma~3 27B q4) with two recent tail-shaping methods: eta sampling \citep{hewitt2022truncation} at $\eta{=}3{\times}10^{-4}$ and min-$p$ sampling \citep{ICLR2025_afa5f124} at $p_{\min}{=}0.10$, plus the min-$p$ + \xtc{} composition.
Table~\ref{tab:strong_baselines} reports the result.
Eta and min-$p$ each improve over top-$p$ on both Distinct-2 and repeat trigram rate, confirming that tighter tail control helps.
\xtc{} alone exceeds both, and the min-$p$ + \xtc{} composition is best on every column, including the LLM-judge means. Composing min-$p$ (a tail control) with \xtc{} (a head control) targets two different parts of the distribution and the gains are additive.

\begin{table}[h]
\centering
\small
\setlength{\tabcolsep}{6pt}
\begin{tabular}{@{}lcccc@{}}
\toprule
Condition & Distinct-2 $\uparrow$ & Repeat trigram $\downarrow$ & Opus judge $\uparrow$ & GPT-4o judge $\uparrow$ \\
\midrule
Top-$p$ $0.95$ (reference)              & 0.542 & 0.081 & 7.12 & 7.18 \\
Eta sampling ($\eta{=}3{\times}10^{-4}$) & 0.581 & 0.060 & 7.44 & 7.39 \\
Min-$p$ ($p_{\min}{=}0.10$)              & 0.598 & 0.052 & 7.65 & 7.70 \\
\xtc{} ($\rho{=}1.0$, $\tau{=}0.1$)      & 0.673 & 0.048 & 8.15 & 8.08 \\
Min-$p$ + \xtc{}                         & \textbf{0.695} & \textbf{0.035} & \textbf{8.32} & \textbf{8.25} \\
\bottomrule
\end{tabular}
\caption{Strong tail-shaping baselines on the 24-prompt creative pool (Gemma~3 27B q4). \xtc{} alone outperforms top-$p$, eta sampling, and min-$p$ on all four metrics, and the min-$p$ + \xtc{} composition is best on every column. Judge means are pooled 1--10 ratings averaged over the seven pre-registered criteria.}
\label{tab:strong_baselines}
\end{table}

\paragraph{IFEval per-condition numbers (Llama 3.3 70B q4).}
The IFEval narrative in Section~\ref{sec:exp_ifeval} reports diversity-vs-instruction-following tradeoffs at a glance. Table~\ref{tab:ifeval} gives the per-condition pass rates and the IFEval-points-lost-per-Distinct-2-percentage-gained ratio that anchors the $\sim$5$\times$ comparison against temperature scaling.

\begin{table}[h]
\centering
\small
\setlength{\tabcolsep}{6pt}
\begin{tabular}{@{}lcccc@{}}
\toprule
Condition & IFEval (\%) $\uparrow$ & $\Delta$ vs.\ baseline & Distinct-2 gain (\%) & IF-cost / Distinct-2 \\
\midrule
Baseline ($T{=}1.0$)             & 81.2 & n/a     & n/a     & n/a  \\
\xtc{} Light ($\rho{=}0.25$)     & 80.8 & $-0.4$  & $+8.0$  & $0.05\times$ \\
\xtc{} Medium ($\rho{=}0.50$)    & 79.5 & $-1.7$  & $+13.5$ & $0.13\times$ \\
Temperature ($T{=}1.15$)         & 72.4 & $-8.8$  & $+14.0$ & $0.63\times$ \\
\bottomrule
\end{tabular}
\caption{IFEval prompt-level strict accuracy on Llama~3.3 70B q4. The final column is IFEval-points lost per percentage point of Distinct-2 gained: \xtc{} Medium pays $0.13$ IFEval points per Distinct-2 percentage gained, vs.\ $0.63$ for matched-Distinct-2 temperature scaling, a $\sim$5$\times$ ratio in favor of \xtc{}. Per-prompt 95\% bootstrap CIs on the four conditions are pending re-import of the per-prompt IFEval result CSVs from the GPU machine where the run was executed. The camera-ready will replace the point estimates here with $\widehat{p} \pm \mathrm{CI}_{95}$ entries.}
\label{tab:ifeval}
\end{table}

\begin{figure}[h]
\centering
\includegraphics[width=\textwidth]{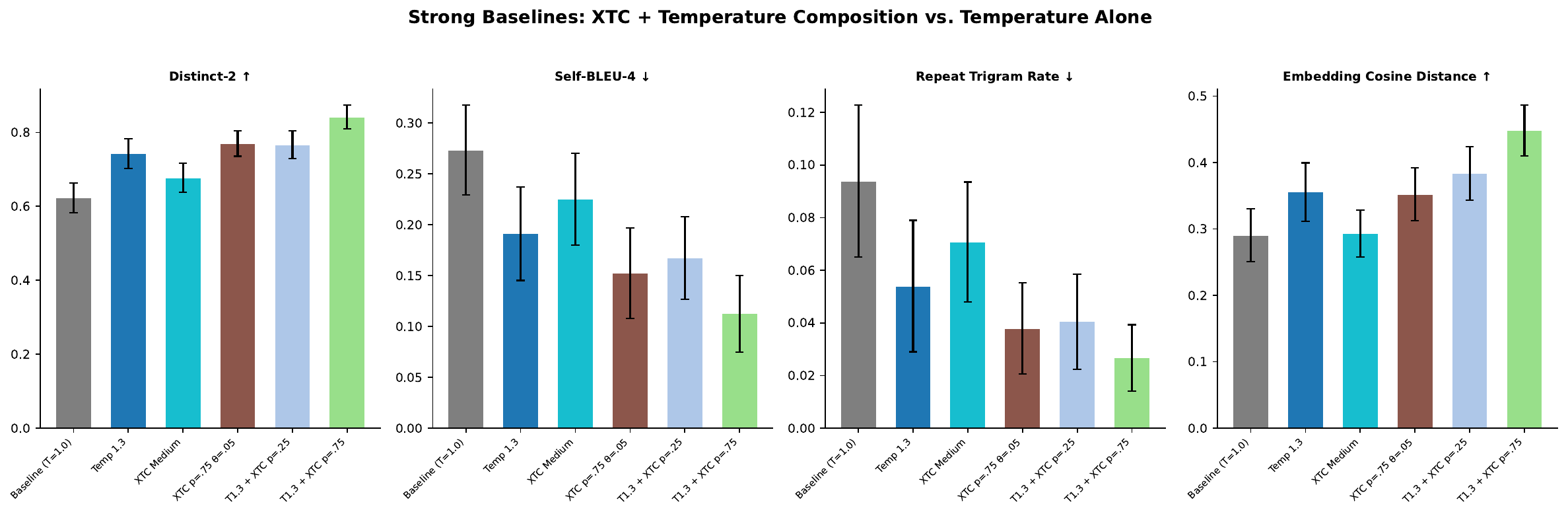}
\caption{Head-to-head against deliberately strong comparators on Gemma~3 27B q4 (24 prompts, 8 samples each). Temperature 1.3 alone substantially improves diversity, but the composition T=1.3 + \xtc{} ($\rho{=}0.75$, $\tau{=}0.05$) achieves the highest score on all four metrics. Referenced from Section~\ref{sec:exp_creative}.}
\label{fig:strong}
\end{figure}

\begin{figure}[h]
\centering
\includegraphics[width=0.75\textwidth]{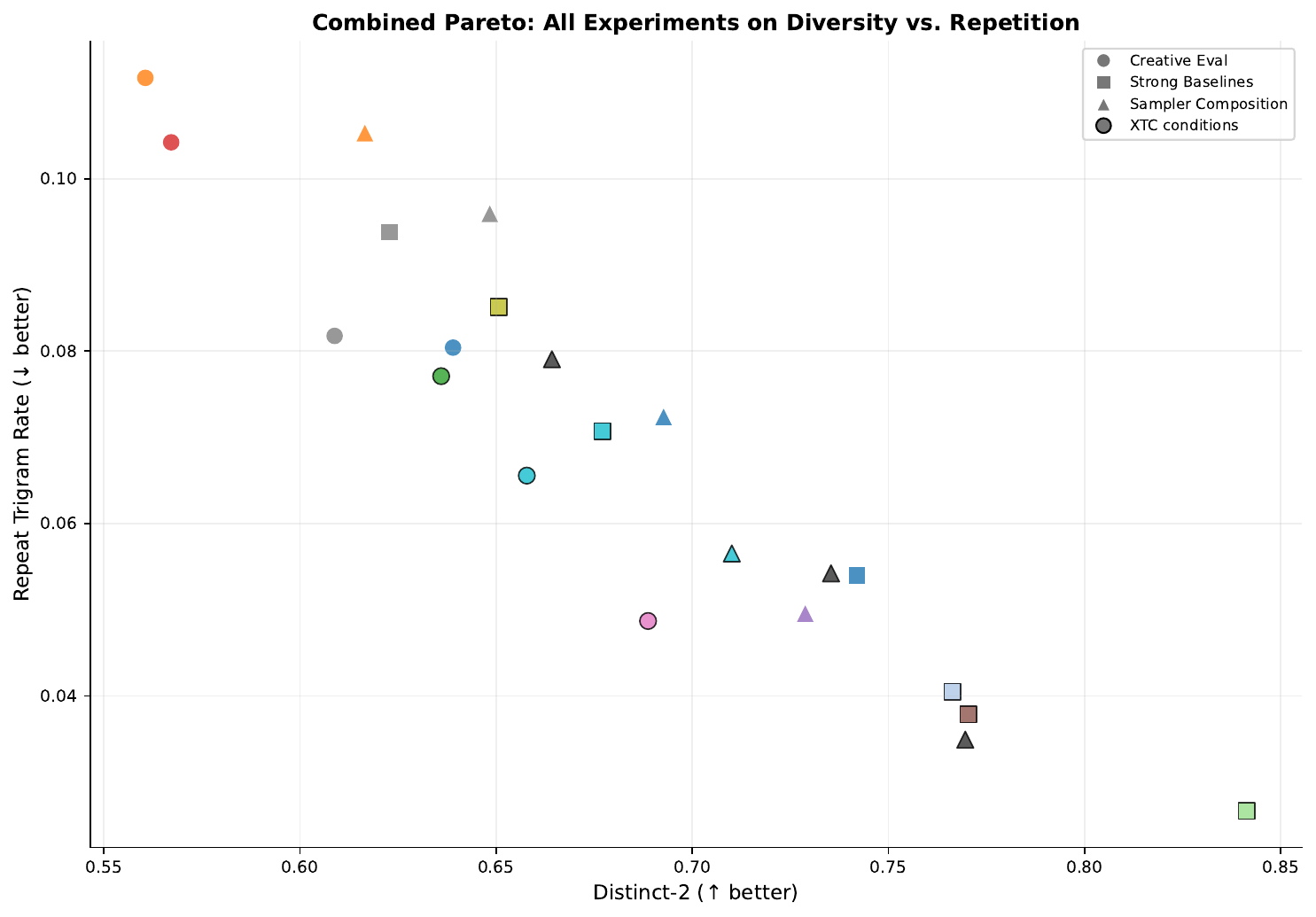}
\caption{Combined Pareto frontier overlaying conditions from three experiment configurations. The frontier is dominated by \xtc{} conditions and temperature + \xtc{} compositions.}
\label{fig:multi_pareto}
\end{figure}

\begin{figure}[h]
\centering
\includegraphics[width=0.75\textwidth]{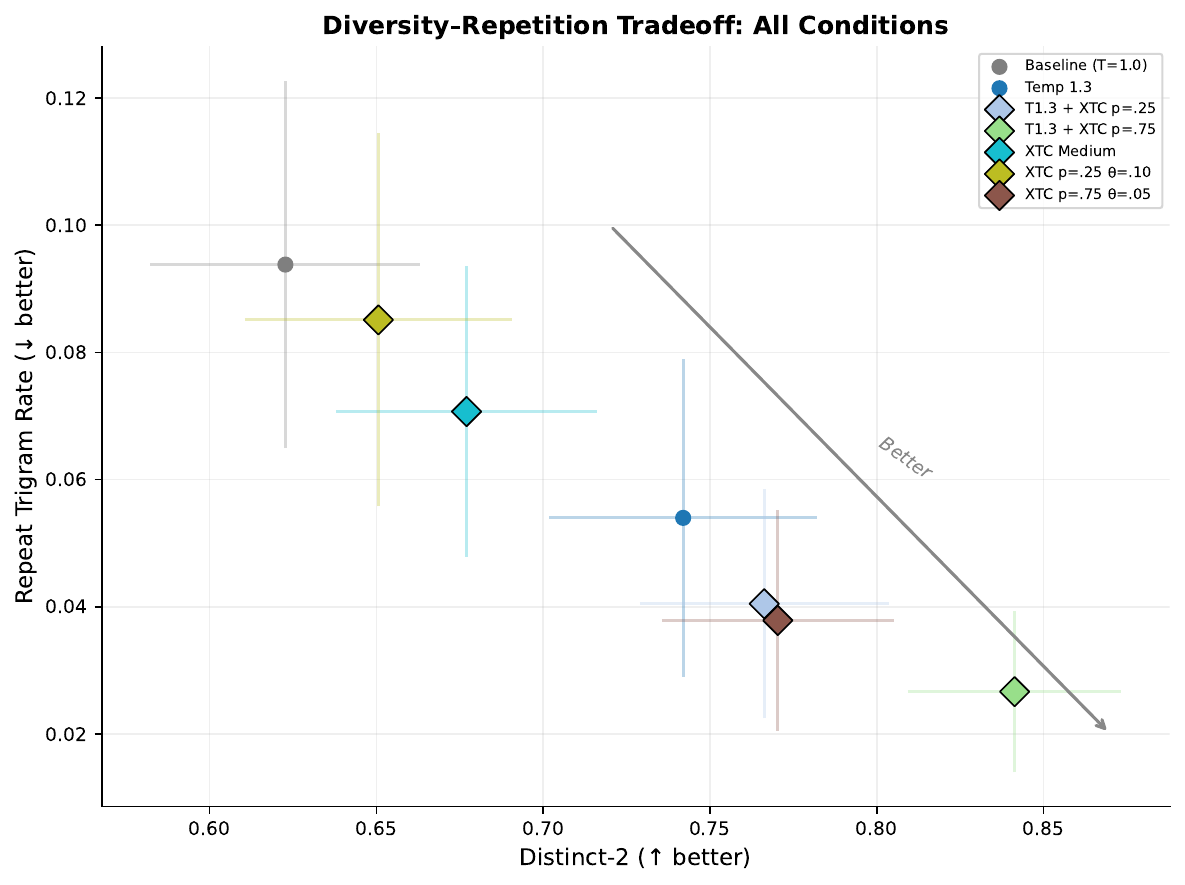}
\caption{Diversity-vs-repetition scatter for all strong baseline conditions. \xtc{} conditions cluster in the desirable upper-left region.}
\label{fig:tradeoff_scatter}
\end{figure}

\begin{figure}[h]
\centering
\includegraphics[width=0.75\textwidth]{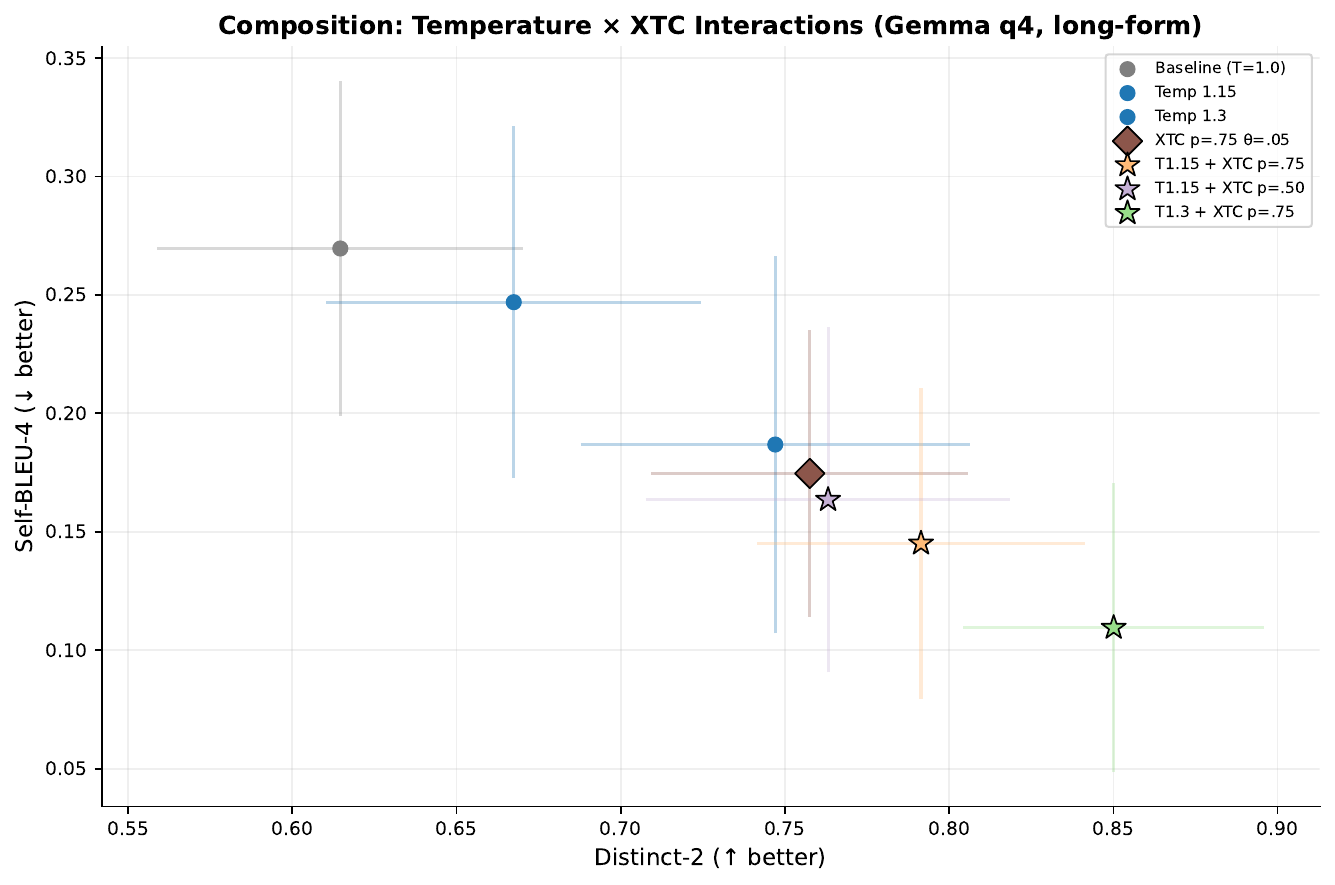}
\caption{Diversity-vs-Self-BLEU tradeoff for temperature, \xtc{}, and compositions. The composition of T=1.3 with \xtc{} occupies the far upper-left corner.}
\label{fig:composition_tradeoff}
\end{figure}
\FloatBarrier

\subsection{Parameter sensitivity and operating region}

\begin{figure}[h]
\centering
\includegraphics[width=\textwidth]{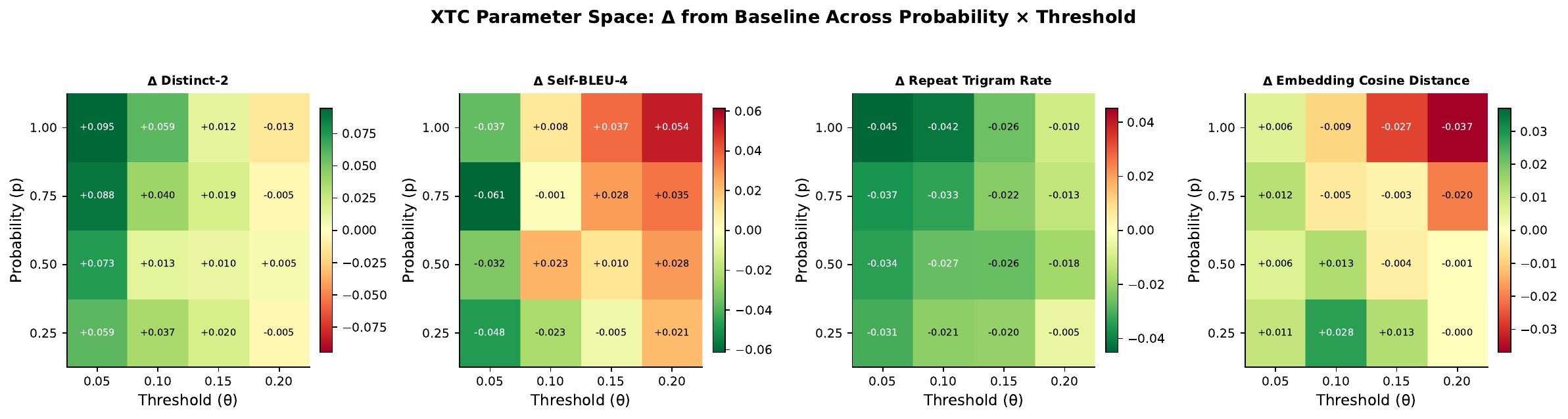}
\caption{XTC parameter space: change from baseline ($\Delta$) across a $4 \times 4$ grid of intervention probability $\rho$ and eligibility threshold $\tau$. Green indicates improvement. The strongest effects occur at high probability and low threshold.}
\label{fig:heatmaps}
\end{figure}

\begin{figure}[h]
\centering
\includegraphics[width=0.75\textwidth]{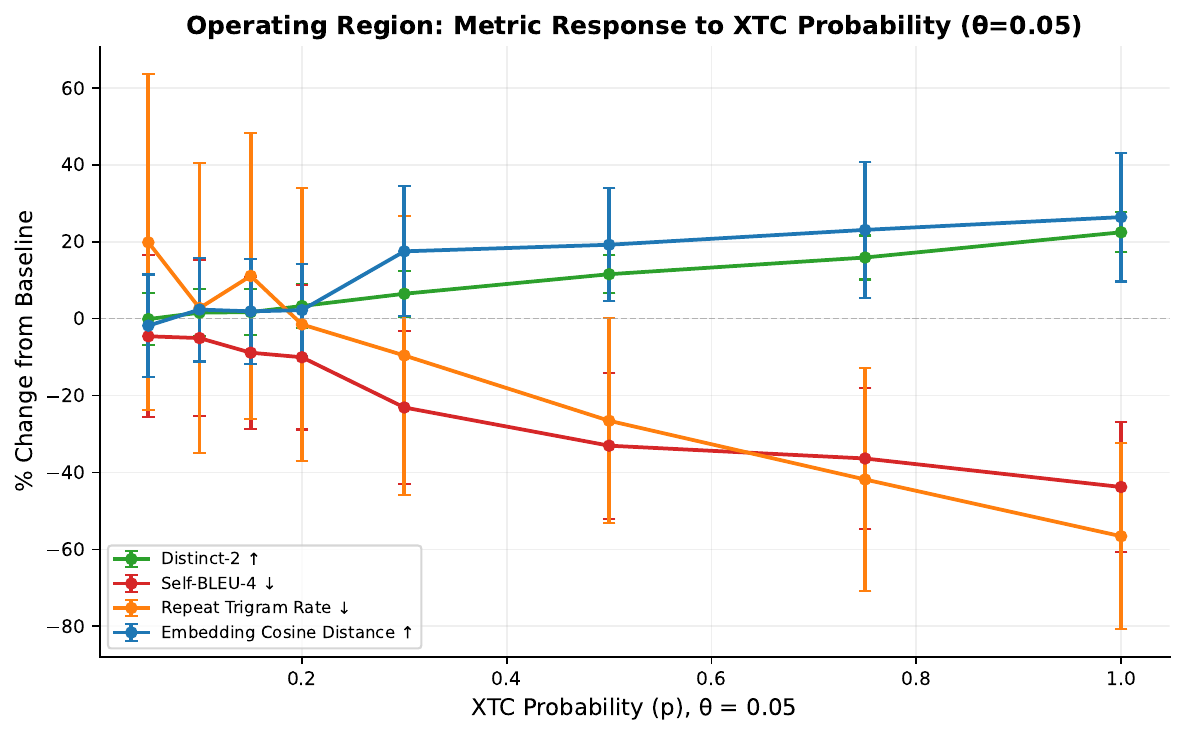}
\caption{Fine-grained operating region sweep at $\tau = 0.05$. The response is approximately linear in $\rho$, with diminishing returns above $\rho = 0.75$.}
\label{fig:operating_region}
\end{figure}
\FloatBarrier

\subsection{Quality retention}

\begin{figure}[h]
\centering
\includegraphics[width=0.78\textwidth]{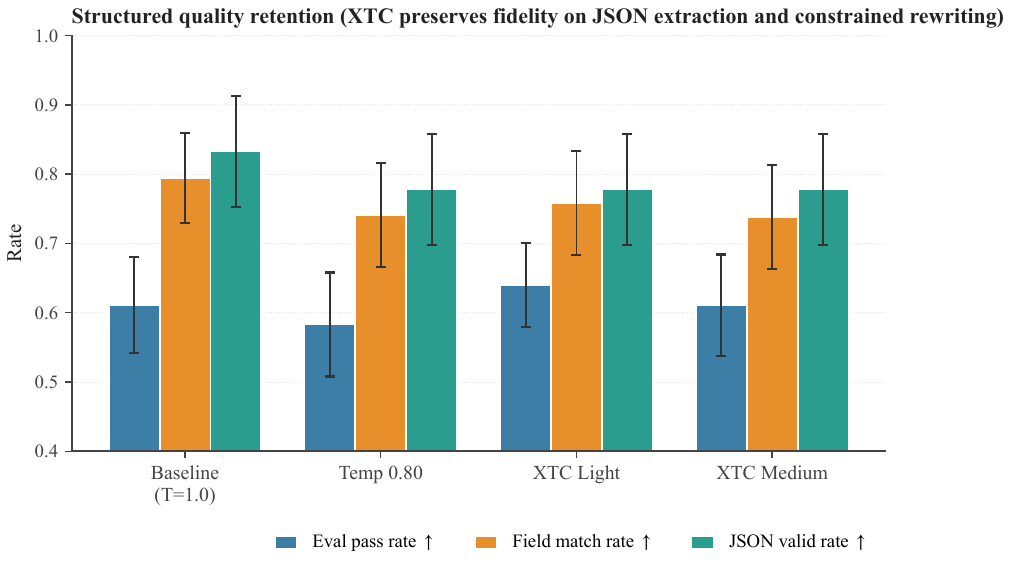}
\caption{Structured quality retention on JSON extraction and constrained rewriting (12 prompts). \xtc{} Light and Medium maintain quality at or near baseline.}
\label{fig:quality}
\end{figure}

\begin{figure}[h]
\centering
\includegraphics[width=0.75\textwidth]{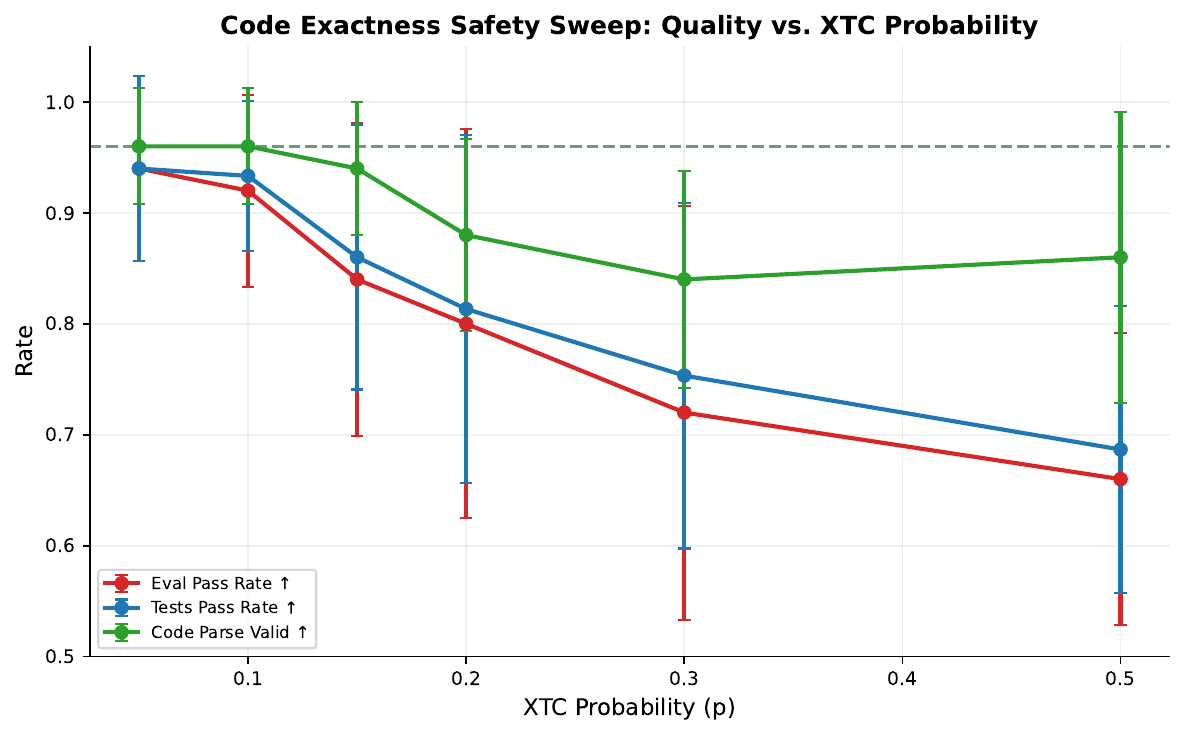}
\caption{Code exactness safety sweep: eval pass rate, test pass rate, and code parse validity as a function of \xtc{} probability.}
\label{fig:code_safety_curve}
\end{figure}
\FloatBarrier

\subsection{Mechanism diagnostics}

\begin{figure}[h]
\centering
\includegraphics[width=\textwidth]{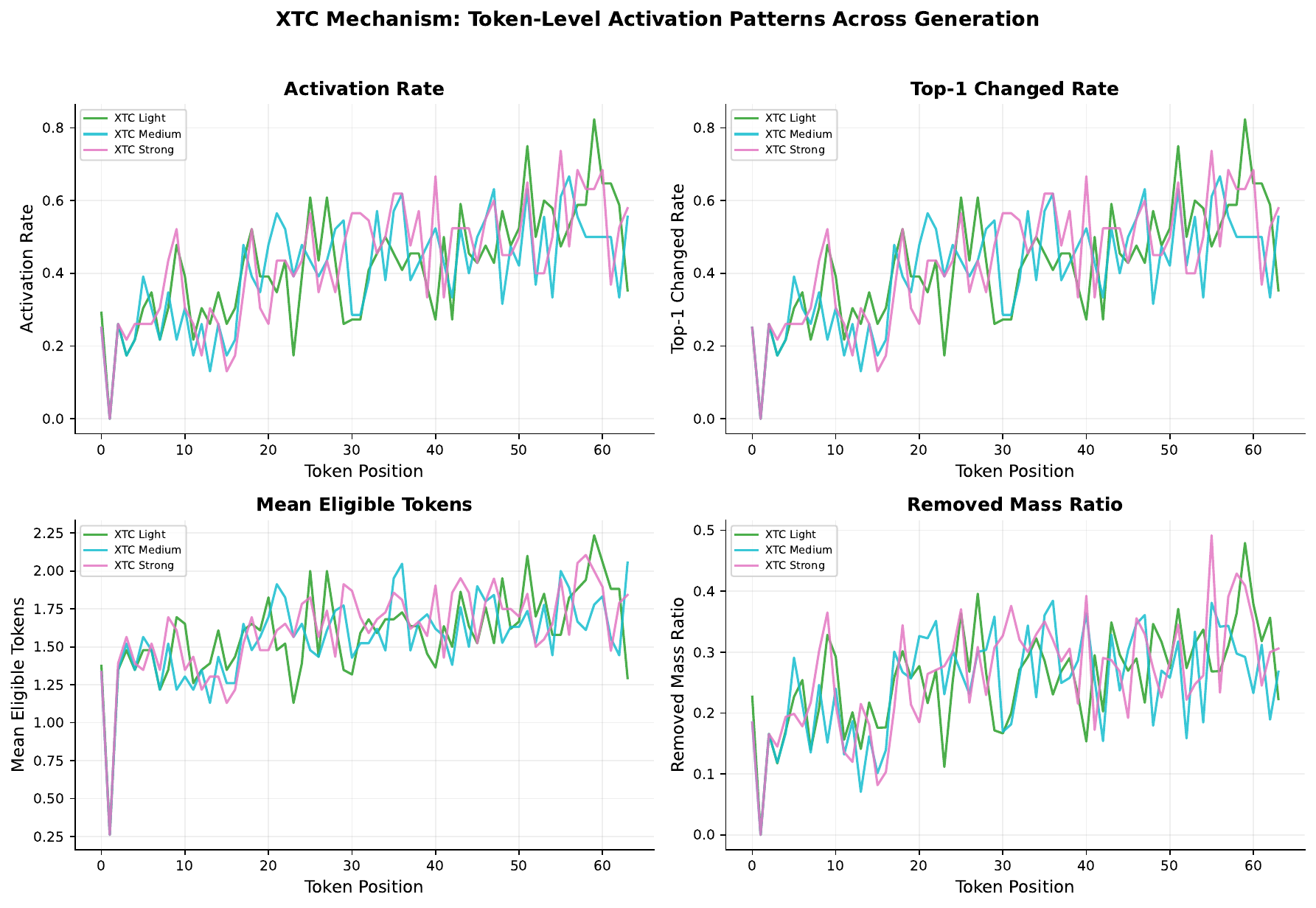}
\caption{Token-level mechanism diagnostics across 64 generation positions. Activation rate ($\approx$40\%), top-1 changed rate, eligible token count ($\approx$1.5--1.6), and removed mass ratio ($\approx$25\%).}
\label{fig:mechanism}
\end{figure}

\begin{figure}[h]
\centering
\includegraphics[width=\textwidth]{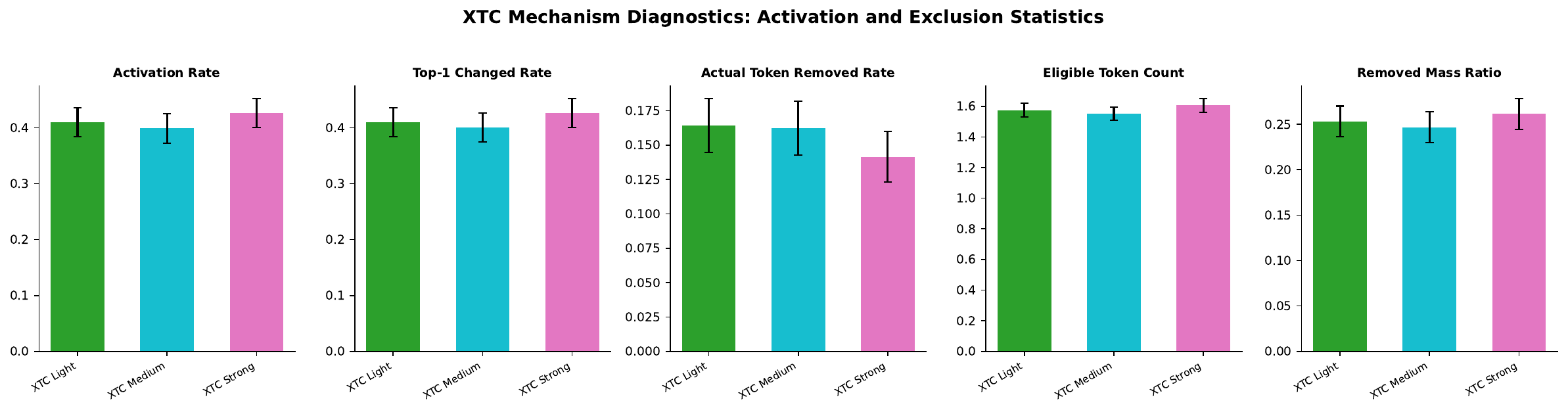}
\caption{Aggregated mechanism diagnostics across three \xtc{} operating points.}
\label{fig:mechanism_summary}
\end{figure}

\begin{figure}[h]
\centering
\includegraphics[width=\textwidth]{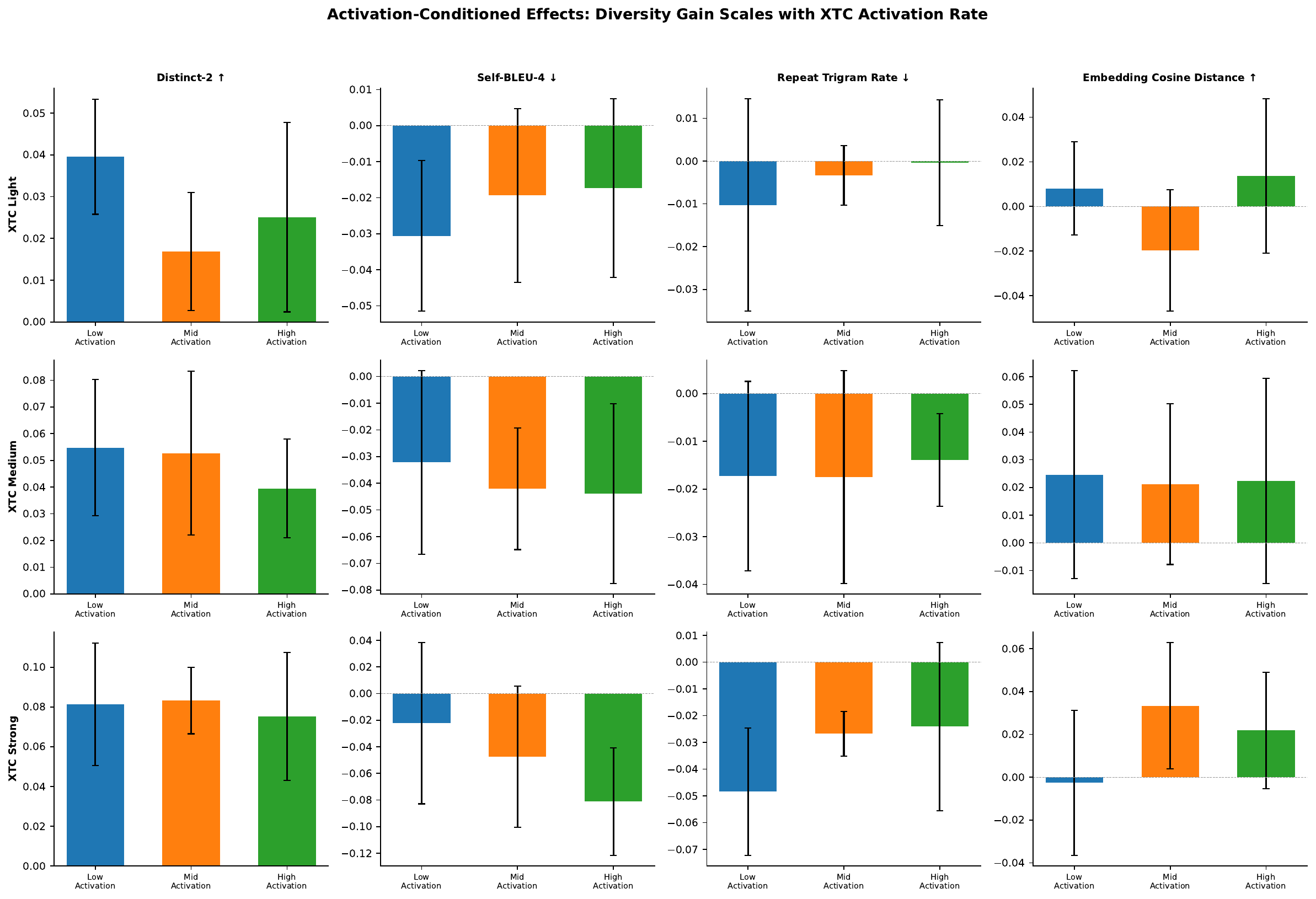}
\caption{Diversity gain stratified by prompt-level activation rate. High-activation prompts show larger gains, supporting the conditional hypothesis.}
\label{fig:activation_conditioned}
\end{figure}

\begin{figure}[h]
\centering
\includegraphics[width=\textwidth]{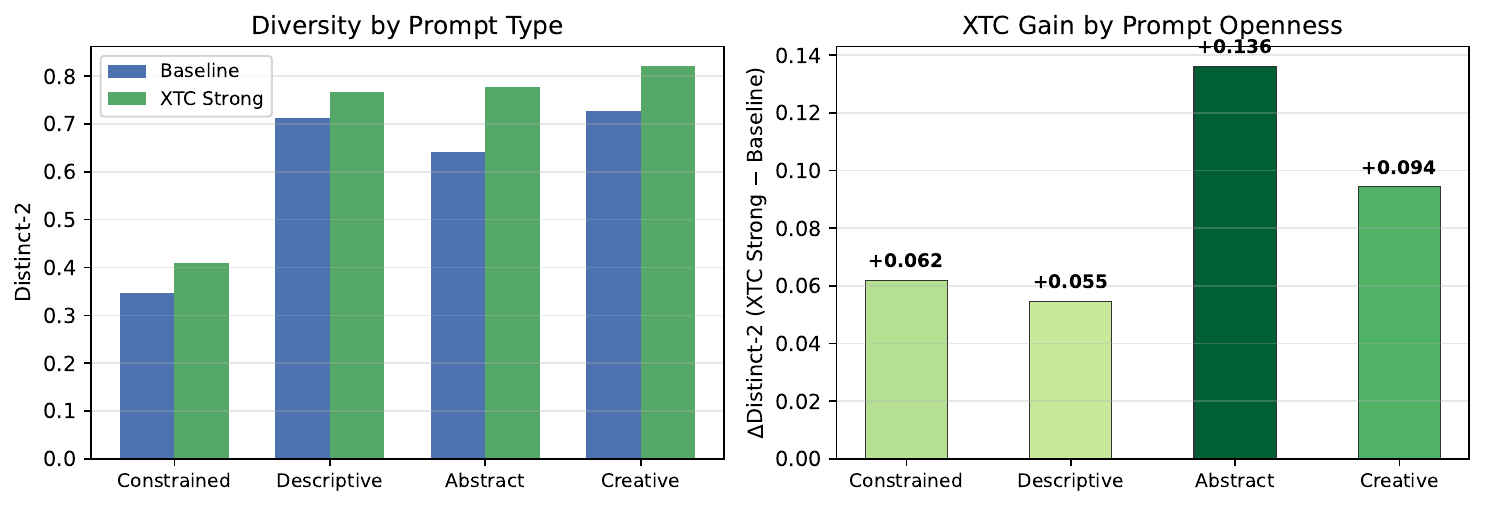}
\caption{Prompt-openness stratification: diversity gain increases with prompt openness, consistent with the hypothesis that \xtc{} benefits are largest when head multiplicity is naturally present.}
\label{fig:prompt_difficulty}
\end{figure}
\FloatBarrier

\subsection{Genre robustness}

\begin{figure}[h]
\centering
\includegraphics[width=\textwidth]{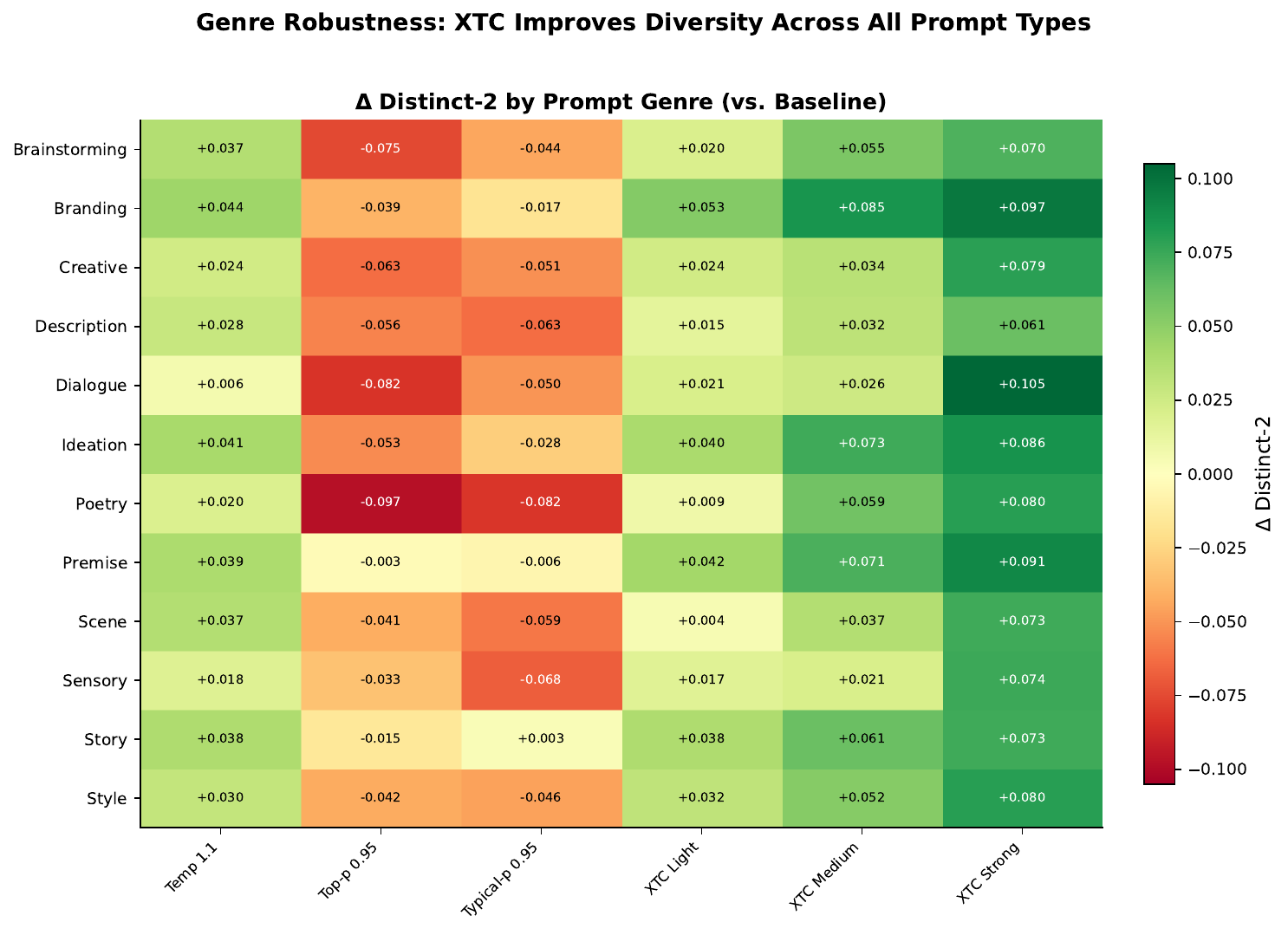}
\caption{$\Delta$\,Distinct-2 by prompt genre (12 genres) relative to baseline. \xtc{} produces positive gains across all 12 genres.}
\label{fig:tags}
\end{figure}

\begin{figure}[h]
\centering
\includegraphics[width=0.85\textwidth]{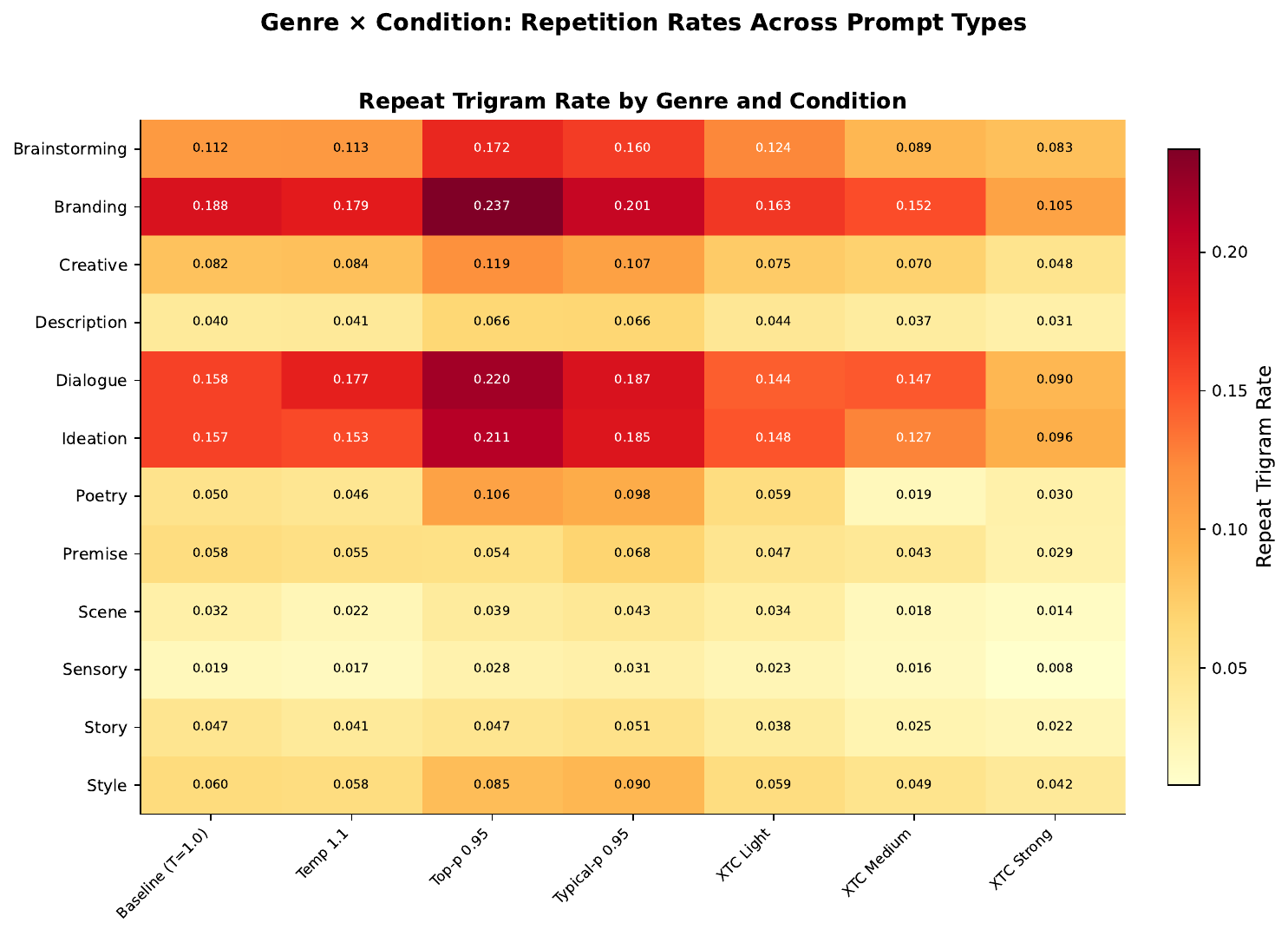}
\caption{Absolute repeat trigram rate by prompt genre. \xtc{} Strong achieves the lowest repetition across all 12 genres.}
\label{fig:tag_repetition}
\end{figure}
\FloatBarrier

\subsection{Extended metrics and radar comparison}

\begin{figure}[h]
\centering
\includegraphics[width=\textwidth]{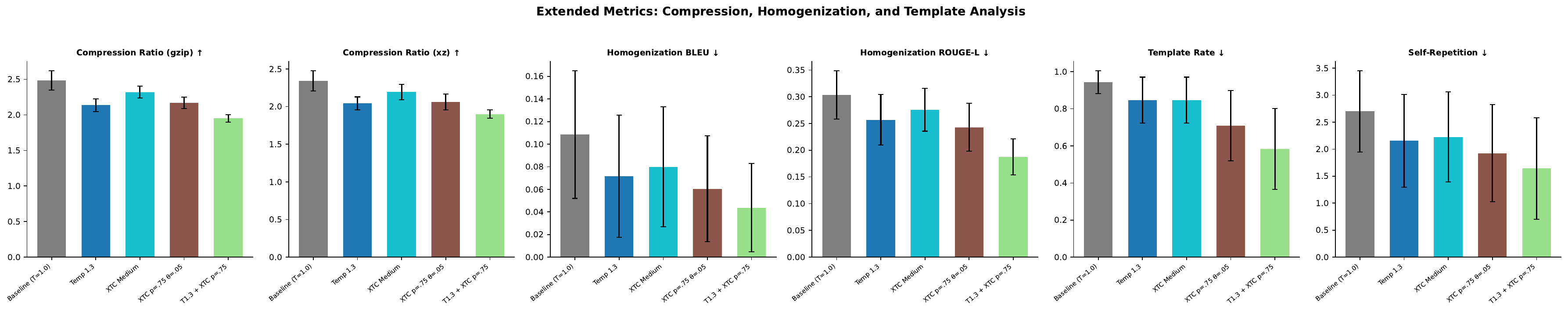}
\caption{Extended diversity metrics: compression ratios, homogenization, template rate, and self-repetition.}
\label{fig:extended}
\end{figure}

\begin{figure}[h]
\centering
\includegraphics[width=\textwidth]{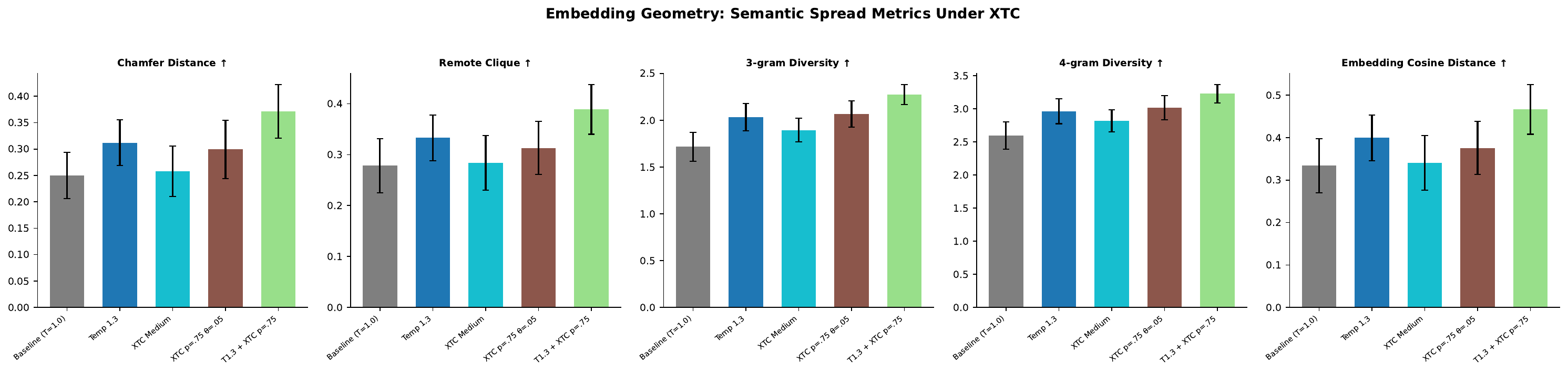}
\caption{Embedding geometry metrics: chamfer distance, remote clique, $n$-gram diversity, and cosine distance.}
\label{fig:geometry}
\end{figure}

\begin{figure}[h]
\centering
\includegraphics[width=0.65\textwidth]{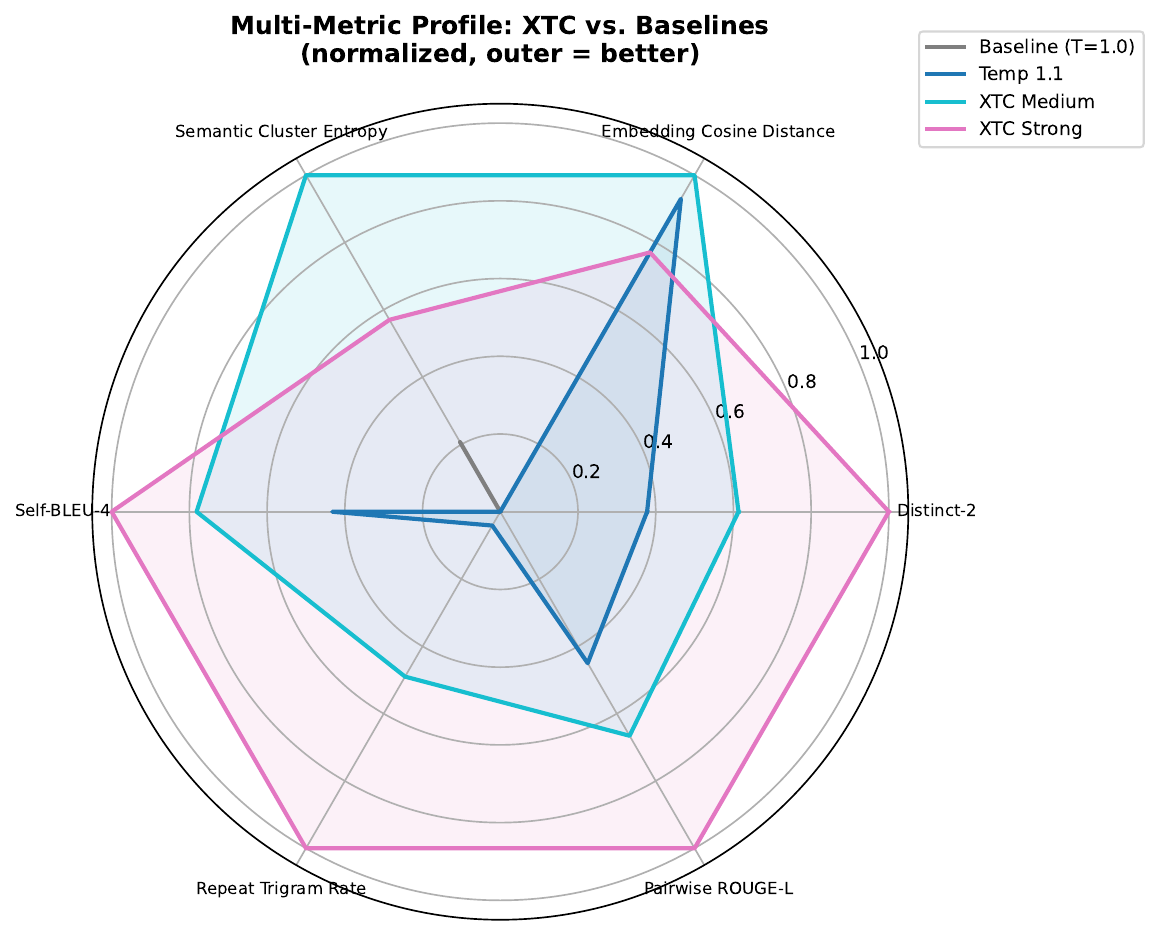}
\caption{Normalized multi-metric radar profile. \xtc{} Strong and Medium expand the profile envelope across all axes.}
\label{fig:radar}
\end{figure}
\FloatBarrier

\subsection{Long-form repetition}

\begin{figure}[h]
\centering
\includegraphics[width=\textwidth]{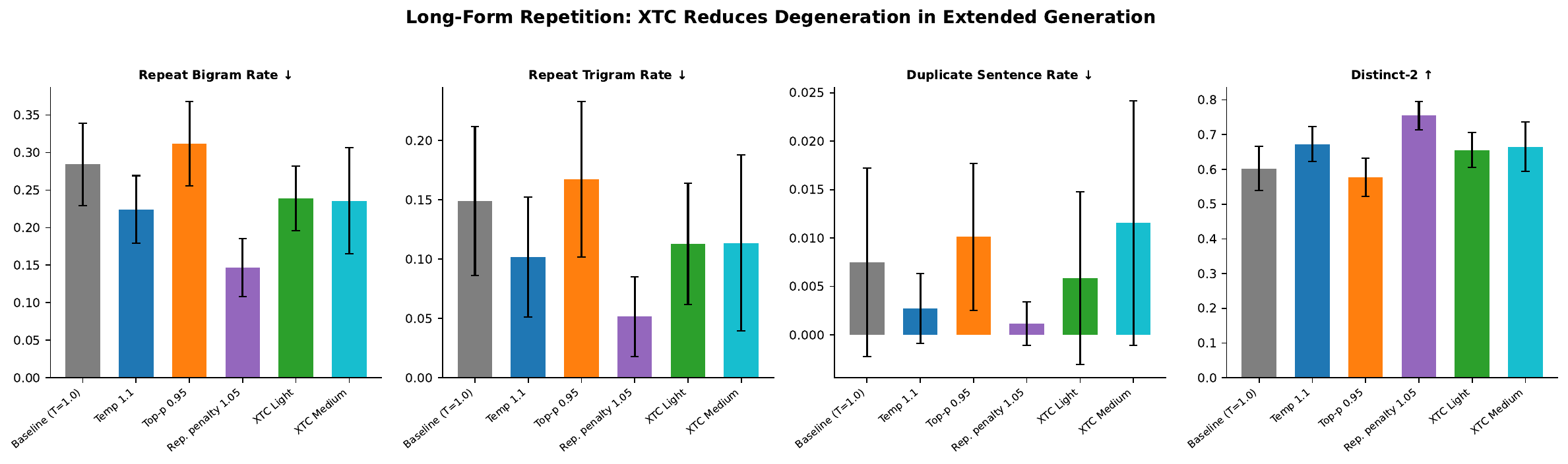}
\caption{Repetition evaluation on long-form generation prompts.}
\label{fig:repetition_eval}
\end{figure}

\begin{figure}[h]
\centering
\includegraphics[width=\textwidth]{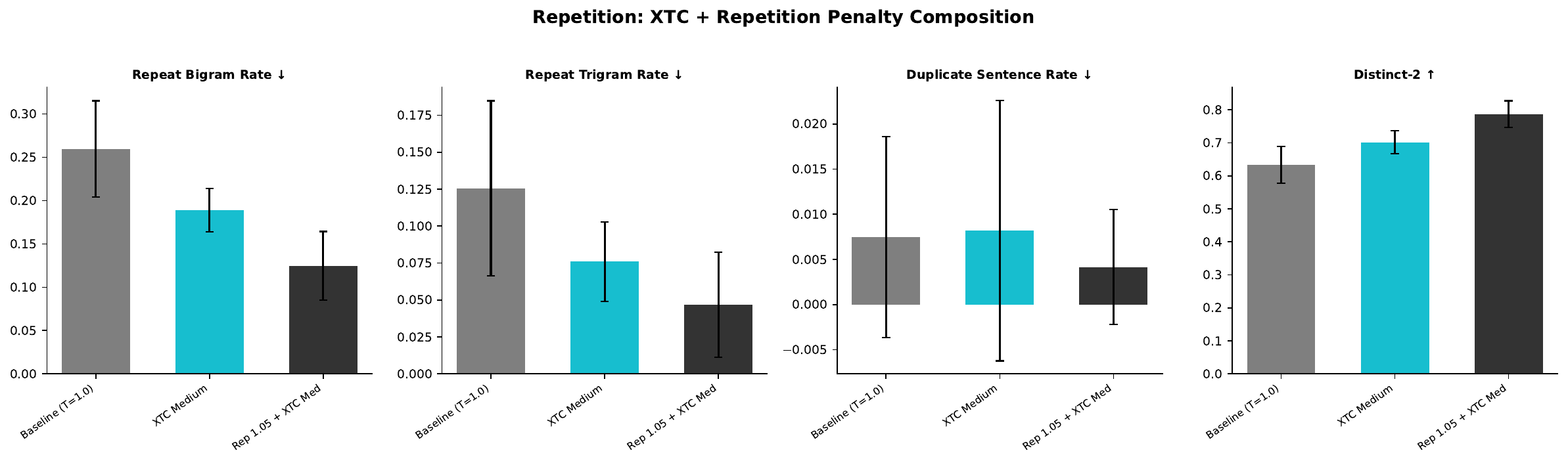}
\caption{Repetition benchmark: \xtc{} alone, repetition penalty + \xtc{}, and compositions on long-form generation.}
\label{fig:repetition_composition}
\end{figure}

\begin{figure}[h]
\centering
\includegraphics[width=\textwidth]{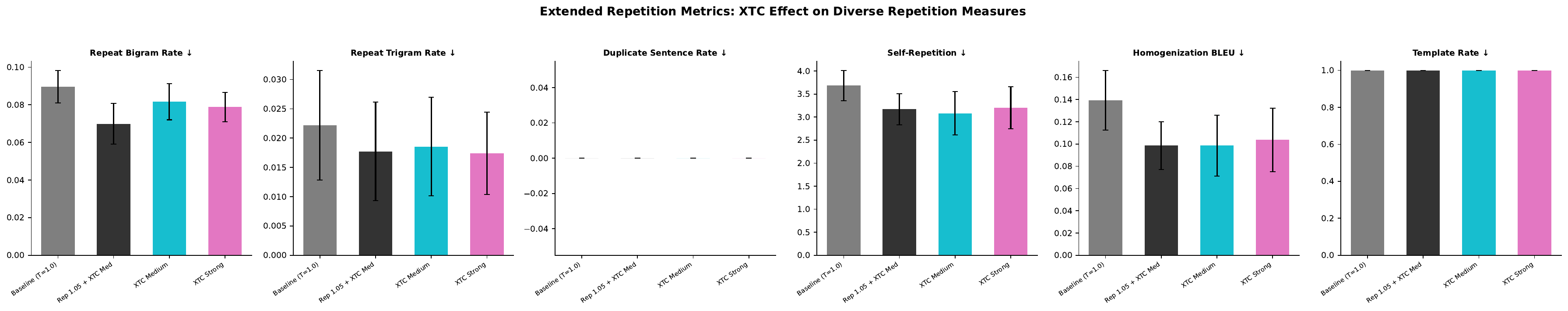}
\caption{Extended repetition metrics: self-repetition, homogenization BLEU, and template rate all decrease under \xtc{}.}
\label{fig:rep_extended}
\end{figure}
\FloatBarrier

\section{LLM-as-judge: full per-judge tables}
\label{app:llm_judge}

The main-text Figure~\ref{fig:judge_panel} summarizes the Opus + GPT-4o cross-vendor panel on Gemma~3 27B q4.
This appendix provides the underlying numbers (Table~\ref{tab:judge_27b_q4}) and the parallel Opus pass on the 12B~q6 cross-model pool (Table~\ref{tab:judge_12b}). The 70B~q4 Opus panel is shown as Figure~\ref{fig:judge_70b} in the main text.
The original DeepSeek R1 14B q6 cross-model creative-eval run had \texttt{max\_tokens}=176 and produced reasoning-trace-only content. We re-ran the configuration with \texttt{max\_tokens}=8192 and stripped \texttt{<think>...</think>} before scoring, and the corrected automatic metrics appear in Table~\ref{tab:crossmodel}.

\begin{table}[h]
\centering
\resizebox{\textwidth}{!}{%
\begin{tabular}{@{}llrrrr@{}}
\toprule
Judge & Condition & Overall & Creativity & Repetition & Lexical \\
\midrule
\multirow{3}{*}{Claude Opus 4.7}    & \xtc{} Medium  & $-0.05$ $[-0.35, +0.25]$ & $+0.15$ $[-0.05, +0.35]$ & $+0.05$ $[-0.35, +0.40]$ & $-0.10$ $[-0.40, +0.15]$ \\
                                    & \xtc{} Strong  & $-0.09$ $[-0.41, +0.23]$ & $+0.09$ $[-0.18, +0.36]$ & $\mathbf{+0.45}$ $[+0.14, +0.77]$ & $+0.23$ $[+0.00, +0.45]$ \\
                                    & top-$p$ $0.95$ & $-0.11$ $[-0.37, +0.16]$ & $-0.11$ $[-0.26, +0.00]$ & $-0.21$ $[-0.58, +0.16]$ & $-0.21$ $[-0.42, +0.00]$ \\
\midrule
\multirow{2}{*}{GPT-4o (control)}   & \xtc{} Strong  & $+0.05$ $[-0.15, +0.28]$ & $\mathbf{+0.39}$ $[+0.11, +0.65]$ & $\mathbf{+0.51}$ $[+0.22, +0.78]$ & $\mathbf{+0.28}$ $[+0.04, +0.50]$ \\
                                    & top-$p$ $0.95$ & $-0.18$ $[-0.39, +0.02]$ & $-0.10$ $[-0.27, +0.05]$ & $-0.30$ $[-0.61, +0.00]$ & $-0.18$ $[-0.36, +0.00]$ \\
\bottomrule
\end{tabular}}
\caption{Per-judge paired deltas vs.\ baseline on Gemma~3 27B q4 ($n{=}24$ paired samples per condition, 95\% bootstrap CIs). Bold entries have a 95\% interval that excludes zero. Both judges agree on direction across every criterion. Both resolve a significant repetition improvement at \xtc{} Strong, and GPT-4o additionally resolves creativity and lexical-diversity improvements that Opus shows as directionally positive but inside its CI. Neither judge resolves a significant negative \xtc{} effect on overall quality.}
\label{tab:judge_27b_q4}
\end{table}

\begin{table}[h]
\centering
\resizebox{\textwidth}{!}{%
\begin{tabular}{@{}llrrrr@{}}
\toprule
Judge & Condition & Overall & Creativity & Repetition & Fluency \\
\midrule
\multirow{3}{*}{Claude Opus 4.7}    & \xtc{} (p=0.75, t=0.05) & $-0.08$ $[-0.25, +0.08]$ & $-0.04$ $[-0.21, +0.12]$ & $-0.04$ $[-0.38, +0.29]$ & $-0.21$ $[-0.42, +0.00]$ \\
                                    & temp $1.15$ + \xtc{} & $+0.04$ $[-0.17, +0.21]$ & $+0.00$ $[-0.25, +0.21]$ & $+0.00$ $[-0.29, +0.29]$ & $-0.29$ $[-0.54, -0.04]$ \\
                                    & temp $1.30$ + \xtc{} & $-0.04$ $[-0.33, +0.25]$ & $+0.04$ $[-0.21, +0.29]$ & $-0.04$ $[-0.33, +0.25]$ & $-0.67$ $[-1.08, -0.25]$ \\
\bottomrule
\end{tabular}}
\caption{Opus paired deltas vs.\ baseline on the Gemma 3 12B q6 cross-model creative-eval pool ($n{=}24$ paired samples per condition). Baseline Opus repetition is $4.33$, near the ceiling, leaving no absolute headroom for further improvement. Composing temp~$1.30$ with \xtc{} degrades fluency, consistent with the standard temperature-over-warming pattern. Automatic Distinct-2 on this run is $+11.4\%$ (Table~\ref{tab:crossmodel}).}
\label{tab:judge_12b}
\end{table}

\FloatBarrier

\end{document}